\definecolor{aliceblue}{rgb}{0.94, 0.97, 1.0}
\definecolor{lavender}{rgb}{0.9, 0.9, 0.98}
\definecolor{dred}{RGB}{153,80,43}
\definecolor{dblue}{RGB}{0,114,178}
\definecolor{lightgray}{gray}{0.75}
\crefname{reassumption}{Assumption}{Assumptions}
\crefname{relemma}{Lemma}{Lemmas}
\crefname{retheorem}{Theorem}{Theorems}
\crefname{reremark}{Remark}{Remarks}
\DeclareMathOperator*{\argmax}{arg\,max}
\DeclareMathOperator*{\argmin}{arg\,min}
\DeclareMathOperator{\sg}{sg}
\DeclareMathOperator{\mtrace}{Tr}
\newcommand{\real}{\mathbb{R}}
\newcommand{\probspace}{\mathscr{P}}
\newcommand{\E}{\mathbb{E}}
\newcommand{\diag}{\mbox{diag}}
\newcommand{\PhiVARAC}{\Phi_\text{var}}
\newcommand{\PhiAC}{\Phi_\text{ac}}
\title{A Unifying Framework for Action-Conditional Self-Predictive Reinforcement Learning}
\author{%
  Khimya Khetarpal\thanks{Equal Contribution}~~\thanks{Google Deepmind}~~\thanks{Correspondence to Khimya Khetarpal <khimya@google.com>.}\\
  \And
  Zhaohan Daniel Guo\footnotemark[1]~~\footnotemark[2]\\
  \And
  Bernardo Avila Pires\footnotemark[2]\\
  \AND
  Yunhao Tang\footnotemark[2]\\
  \And
  Clare Lyle\footnotemark[2]\\
  \And
  Mark Rowland\footnotemark[2]\\
  \And
  Nicolas Heess\footnotemark[2]\\
  \AND
  Diana Borsa\footnotemark[2]\\
  \And
  Arthur Guez\footnotemark[2]\\
  \And
  Will Dabney\footnotemark[2]
}
\begin{document}

\maketitle

\begin{abstract}
  Learning a good representation is a crucial challenge for Reinforcement Learning (RL) agents. Self-predictive learning provides means to jointly learn a latent representation and dynamics model by bootstrapping from future latent representations (BYOL). Recent work has developed theoretical insights into these algorithms by studying a continuous-time ODE model for self-predictive representation learning under the simplifying assumption that the algorithm depends on a fixed policy (BYOL-$\Pi$); this assumption is at odds with practical instantiations of such algorithms, which explicitly condition their predictions on future actions. In this work, we take a step towards bridging the gap between theory and practice by analyzing an action-conditional self-predictive objective (BYOL-AC) using the ODE framework, characterizing its convergence properties and highlighting important distinctions between the limiting solutions of the BYOL-$\Pi$ and BYOL-AC dynamics. We show how the two representations are related by a variance equation. This connection leads to a novel variance-like action-conditional objective (BYOL-VAR) and its corresponding ODE. We unify the study of all three objectives through two complementary lenses; a model-based perspective, where each objective is shown to be equivalent to a low-rank approximation of certain dynamics, and a model-free perspective, which establishes relationships between the objectives and their respective value, Q-value, and advantage function. Our empirical investigations, encompassing both linear function approximation and Deep RL environments, demonstrates that BYOL-AC is better overall in a variety of different settings. %
\end{abstract}

\section{Introduction}
\label{sec:analysis}
Learning a \textit{meaningful} representation and a \textit{useful} model of the world are among the key challenges in reinforcement learning (RL). 
Self-predictive learning has facilitated representation learning often by training auxiliary tasks ~\citep{lee2021predicting}, and making predictions on future observations ~\citep{schrittwieser2020mastering} geared towards control~\citep{jaderberg2016reinforcement, song2019v}. The bootstrap-your-own-latent~[BYOL] framework~\citep{grill2020bootstrap}  together with its RL variant~\citep{guo2020bootstrap} [BYOL-RL] offers a self-predictive paradigm for learning representations by minimizing the prediction error of its own future latent representations. %
Despite empirical advancements~\citep{schwarzer2020data, guo2022byol}, using BYOL for learning transition dynamics $P$ in conjunction with the state representation $\Phi$ remains under-investigated from a theoretical perspective. Enriching our understanding of the self-predictive learning objectives could potentially yield new insights into: a) characterizing which representations are better suited to be used for state-value (V), action-value (Q), or advantage functions; b) identifying the types of representations to which different objectives converge and connections to the corresponding transition dynamics; c) understanding the trade-offs induced by learning objectives in various settings; and d) using these insights to create algorithms that learn meaningful representations. Our work characterizes the ODE dynamics of various BYOL objectives in the context of Markov decision processes (MDPs). %

Previous work~\citep{tang2022understanding} provides initial important theoretical insights by considering a two-timescale, semi-gradient objective and analyzes it from an ODE perspective. A notable component of this objective [BYOL-$\Pi$], considers making a future prediction conditioned on a fixed policy $\pi$. This is in contrast to implementations commonly used in practice, where the future prediction is conditioned on the actions~\citep{guo2022byol}. Recently, \citet{ni2024bridging} provide analysis in the action-conditional POMDP case, but do not fully extend the analysis done by~\citet{tang2022understanding}.

In this work, we focus on the MDP setting, where we want to close the gap between the theoretical analysis of \citet{tang2022understanding} and the practical implementations of BYOL by conditioning on the action. We begin by considering an action-conditional BYOL loss (\cref{eq:loss-byol-ac} [BYOL-AC]) and analyze it in the same theoretical ODE framework. We precisely characterize what representation the ODE converges to (\cref{thm:ode-byol-ac}), showing that the learned representation $\PhiAC$ captures useful spectral information about the per-action transition dynamics $T_{a}$ as opposed to $T^\pi$ corresponding to the non-action-conditional BYOL representation $\Phi$. Furthermore, we show a variance relation between $\PhiAC$ and $\Phi$, where $\Phi$ is related to the square of the first moment of the eigenvalues of $T_a$, and $\PhiAC$ is related to the second moment.
Based on this variance relation, we introduce a novel, variance-like action-conditional objective, BYOL-VAR (\cref{eq:phi-dynamics-vl}). Under the ODE framework, we show the learned representation $\PhiVARAC$ (\cref{thm:ode-byol-var}) is related to the variance of the eigenvalues of $T_a$ (\cref{rem:variance-complete}).

We then unify the study of all three objectives through two complementary lenses. From the model-based viewpoint: we show a certain equivalence of BYOL-$\Pi$, BYOL-AC, BYOL-VAR to learning a low-rank approximation of the dynamics $T^{\pi}$, $T_a$, and $(T_a - T^{\pi})$ respectively (\cref{thm:byol-variants-modelbased-equiv}). From the model-free viewpoint: we show an equivalence to fitting certain 1-step value, Q-value, and advantage functions respectively (\cref{thm:byol-model-free-equiv}). The unified viewpoints 1) give us insights into the suitability of each representation for capturing different aspects of the dynamics; such as using $\Phi$ when we are concerned with $T^{\pi}$, $\PhiAC$ to discern $T_a$, and $\PhiVARAC$ to approximate $(T_a - T^{\pi})$, and 2) establish a bridge between self-predictive objectives and model-based and model-free objectives; if one proposes a new low-rank approximation objective of some other transition dynamics function, then one can use this bridge to derive a corresponding new self-predictive objective.

Empirically, we first examine a linear setting in \cref{sec:mse-stability-experiments} and show how $\Phi$, $\PhiAC$ and $\PhiVARAC$ fit to the true value, Q-value, and advantage functions (\cref{tab:value-mse}). $\Phi$ and $\PhiAC$ turn out to be very similar in fitting value functions, while $\PhiVARAC$ is the undisputed best fit to the advantage function. $\PhiAC$ is also strictly better than $\Phi$ in fitting the advantage. Finally, we compare the three representations in deep RL with Minigrid, and classic control domains, using a policy-gradient, online algorithm V-MPO~\citep{song2019v} and an off-policy algorithm DQN~\citep{mnih2015human}. We report that for both V-MPO and DQN, $\PhiAC$ is overall better performing. $\PhiVARAC$ is, as expected, a poor representation to use directly for RL since it gives up features useful for the value/Q-value function. The key takeaway is that BYOL-AC is overall a better objective resulting in a better representation $\PhiAC$. 

\section{Preliminaries}
\textbf{Reinforcement Learning.} 
Consider an MDP $\langle  \mathcal{X}, \mathcal{A}, T_a, \gamma  \rangle$, where $\mathcal{X}$ is a finite set of states, $\mathcal{A}$ a finite set of actions, $x, y \in \real^{|\mathcal{X}| \times 1}$ assume tabular state representation where each state is a one hot vector, $T_a \in \real^{|\mathcal{X}| \times |\mathcal{X}|}$ is the per-action transition dynamics defined as $(T_a)_{ij} \coloneqq p(y = j \mid x = i, a)$, and $\gamma\in [0,1)$ the discount factor. 
Given $\pi:\mathcal{X}\rightarrow\probspace(\mathcal{A})$ we let $T^\pi:\mathcal{X}\rightarrow\probspace(\mathcal{X})$ be the state transition kernel induced by the policy $\pi$, that is, $(T^\pi)_{ij} \coloneqq \sum_a \pi(a \mid x = i)(T_a)_{ij}$). Given a (deterministic) reward function of the state $R \in \real^{|\mathcal{X}| \times 1}$, the value function is defined as $V^{\pi} \coloneqq (I - \gamma T^\pi)^{-1} R$. The Q-value function is defined as $Q_a^{\pi} \coloneqq R + \gamma T_a V^{\pi}$.

\textbf{Representation Learning.}
It can be shown under idealized conditions that value function estimation methods such as TD-learning capture the top-$k$ subspace of the eigenbasis of $T^\pi$~\citep{lyle2021effect}. Moreover, the top-$k$ eigenvectors of $T^\pi$ are exactly the same as those of $(I-\gamma T^\pi)^{-1}$ ~\citep{chandak2023representations}. Notably, when the value function is linear in the features, the basis vectors are useful features of the state. Additional related work is provided in~\cref{sec:relatedwork}.

\textbf{Ordinary differential equations (ODEs)} arise often in the context of describing change in the representations over time as they are being learned. \citet{lyle2021effect} consider the dynamics for single-step TD learning and assume that the weight is kept fixed, which greatly simplifies the dynamics. \citet{tang2022understanding} consider the ODEs systems with certain dynamics that make them not the same as traditional optimization problems. To analyze such an ODE system, one constructs a Lyapunov function, which can be considered a surrogate loss function that the ODE monotonically minimizes. Lyapunov functions are also useful to show necessary and sufficient conditions for stability and convergence of the said ODE ~\citep{teschl2012ordinary,MAWHIN2005664}.

\subsection{BYOL ODE with Fixed Policy: BYOL-$\Pi$}
\label{sec:ode-byol}
We start with the setting considered by \citet{tang2022understanding}. We observe triples $(x, a, y)$ where $x \sim d_X$ is a one-hot state, $a \sim \pi(\cdot \mid x)$ is an action sampled according to a fixed policy $\pi$, and $y \sim p(\cdot \mid x, a)$ is the on-hot state observed after taking action $a$ at state $x$. 
Letting $D_{X} \coloneqq \E [xx^T] = \diag (d_X)$, we can write $\E[xy^T] = D_{X} T^{\pi}$. 
The goal is to learn a representation matrix $\Phi \in \mathbb{R}^{|\mathcal{X}| \times k}$ that embeds each state $x$ as a $k$-dimensional real vector denoted by $\Phi^T x$. To model the transitions in latent space, $\Phi^T x \xrightarrow{} \Phi^T y$, we consider a latent linear map $P \in \mathbb{R}^{k \times k}$. 
To learn $\Phi$, we use a self-predictive objective that minimizes the loss in the latent space,
\begin{align}
    \min_{\Phi, P} \; \text{BYOL-}\Pi(\Phi, P) := \E_{x \sim d_X, y \sim T^{\pi}( \cdot \mid x ) } \left[ \big| \big| P^T \Phi^T x - \sg(\Phi^T y)  \big| \big|^{2}_{2} \right] \label{eq:byol-non-ac}
\end{align}
where $\sg$ is a stop-gradient operator on the prediction target to help in avoiding degenerate solutions. 
The appeal of this objective is that everything is defined in the latent space, which means this can be easily extended and implemented in practice with $P$ and $\Phi$ replaced by neural networks. However, this objective still has the trivial solution of $\Phi = 0$. To avoid this, \citet{tang2022understanding} formulate a two-timescale optimization process wherein we first solve the inner minimization w.r.t. (with respect to) $P$ before taking a (semi-)gradient step (denoted as $\dot{\Phi}$) w.r.t. $\Phi$.
\begin{align}
\label{eq:ode-nonactionconditoned}
    \begin{aligned}
     P^{*} &\in \textstyle \argmin_{P}  \; \text{BYOL-}\Pi(\Phi, P) , \ \  \dot{\Phi} = - \nabla_{\Phi} \; \text{BYOL-}\Pi(\Phi, P) \big\vert_{P = P^{*}}
    \end{aligned}
\end{align}
This is an ODE system for $\Phi$ with dynamics (gradient) $\dot{\Phi}$. \citet{tang2022understanding} makes the following simplifying assumptions to analyze it:
\begin{restatable}[Orthogonal Initialization]{reassumption}{restateassortho}
\label{ass:orthogonal-init}
$\Phi$ is initialized to be orthogonal i.e. $\Phi^T \Phi = I$.
\end{restatable}
\begin{restatable}[Uniform State Distribution]{reassumption}{restateassuniformstate}
\label{ass:uniform-state}
The state distribution $d_X$ is uniform.
\end{restatable}
\begin{restatable}[Symmetric Dynamics]{reassumption}{restateasssymmetrict}
\label{ass:symmetric-t}
$T^{\pi}$ is symmetric i.e. $T^{\pi} = (T^{\pi})^T$.
\end{restatable}
While these assumptions are quite strong and impractical, we believe the resulting theoretical insights are a useful perspective in understanding and characterizing the learned representation $\Phi$ in practice.
\begin{restatable}[Non-collapse, \citealp{tang2022understanding}]{relemma}{restatelemnoncollapse}
\label{lem:non-collapse-byol}
Under \cref{ass:orthogonal-init}, we have that $\Phi^T \dot{\Phi} = 0$, which means that $\Phi^T \Phi = I$ is preserved for all $\Phi$ throughout the ODE process.
\end{restatable}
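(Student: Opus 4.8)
The plan is to carry out the two-timescale computation explicitly and to show that the semi-gradient $\dot{\Phi}$ is automatically orthogonal to $\Phi$ as a direct consequence of the first-order optimality condition that defines $P^*$, rather than of orthogonality itself.

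First I would expand the objective in trace form. Using $\E[xx^T] = D_{X}$ and $\E[xy^T] = D_{X} T^\pi$, and noting that the target term $\E[y^T \Phi \Phi^T y]$ is inert under $\sg$, the loss is $\mtrace(P P^T \Phi^T D_{X} \Phi) - 2\,\mtrace(P \Phi^T D_{X} T^\pi \Phi)$ plus a constant. Solving the inner minimization by setting $\nabla_P \,\text{BYOL-}\Pi = 0$ gives the normal equation $\Phi^T D_{X} \Phi\, P^* = \Phi^T D_{X} T^\pi \Phi$; under \cref{ass:uniform-state} (so $D_{X} = \tfrac{1}{|\mathcal{X}|} I$) together with the current orthogonality $\Phi^T \Phi = I$ this reduces to $P^* = \Phi^T T^\pi \Phi$.

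Next I would compute the semi-gradient. Because of the stop-gradient the prediction target $\sg(\Phi^T y)$ is frozen, and $P$ is held at $P^*$, so only the prediction term $P^{*T}\Phi^T x$ is differentiated through $\Phi$. A direct calculation then yields $\dot{\Phi} = -\nabla_\Phi\, \text{BYOL-}\Pi \big|_{P=P^*} = -2\, D_{X}(\Phi P^* - T^\pi \Phi)(P^*)^T$, where I drop the positive scalar constant since it only rescales time. The crux is the third step: left-multiplying by $\Phi^T$ and substituting the normal equation from the first step gives $\Phi^T \dot{\Phi} = -2\big(\Phi^T D_{X} \Phi\, P^* - \Phi^T D_{X} T^\pi \Phi\big)(P^*)^T$, and the bracket is precisely the residual of the $P$-stationarity condition, hence zero. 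This is the key insight, and the main obstacle to state cleanly: the cancellation in $\Phi^T\dot{\Phi}$ is nothing but the envelope/first-order condition of the inner problem (the least-squares residual is orthogonal to the column space seen through $\Phi^T D_{X}$), so it in fact holds for any full-rank $\Phi$, not only orthogonal ones.

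Finally, to conclude invariance of the constraint I take the transpose of $\Phi^T \dot{\Phi} = 0$ to obtain $\dot{\Phi}^T \Phi = 0$, so $\tfrac{d}{dt}(\Phi^T \Phi) = \Phi^T \dot{\Phi} + \dot{\Phi}^T \Phi = 0$. Thus $\Phi^T \Phi$ is a conserved quantity of the ODE; initialized at $I$ by \cref{ass:orthogonal-init}, it remains equal to $I$ for all $t$. The only delicate part is the bookkeeping of the semi-gradient, namely tracking which terms are frozen by $\sg$ and by fixing $P = P^*$, and recognizing that the resulting non-collapse is structural (driven by the inner optimality condition) rather than an artifact of the orthogonal initialization.
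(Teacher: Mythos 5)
Your proof is correct and follows essentially the same route the paper uses: derive the normal equation $(\Phi^T D_X \Phi)P^* = \Phi^T D_X T^\pi \Phi$ for the inner problem, compute the semi-gradient $\dot{\Phi} = -2(D_X\Phi P^* - D_X T^\pi\Phi)(P^*)^T$, and observe that $\Phi^T\dot{\Phi}$ is exactly the $P$-stationarity residual times $(P^*)^T$, hence zero, so $\Phi^T\Phi$ is conserved. This is precisely the argument the paper gives for the analogous BYOL-AC and BYOL-VAR non-collapse lemmas (the BYOL-$\Pi$ case itself being cited to \citet{tang2022understanding}), and your added observation that the cancellation needs only the first-order condition, not orthogonality, is consistent with the paper's treatment.
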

Intuitively, \cref{lem:non-collapse-byol} suggests that because of how we set up the ODE with the semi-gradient and two-timescale optimization, an orthogonal initialization means we can avoid all trivial solutions.

\begin{restatable}[BYOL Trace Objective, \citealp{tang2022understanding}]{relemma}{restatelemtrace}
\label{lem:trace-byol}
Under \cref{ass:orthogonal-init,ass:uniform-state,ass:symmetric-t}, a Lyapunov function for the ODE is the negative of the following trace objective
\begin{align}
\label{eq:trace-byol}
    f_{\text{BYOL-}\Pi}(\Phi) &\coloneqq  \mtrace \left(\Phi^T T^{\pi} \Phi  \Phi^T T^{\pi} \Phi \right) \, .
\end{align}
This means the ODE converges to some critical point.
\end{restatable}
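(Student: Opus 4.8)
The plan is to make the Lyapunov claim precise by deriving the ODE in closed form and then verifying that $f_{\text{BYOL-}\Pi}$ increases monotonically along it, so that $-f_{\text{BYOL-}\Pi}$ serves as the Lyapunov function. First I would solve the inner minimization for $P^{*}$. Expanding the quadratic in \cref{eq:byol-non-ac} and using $\E[xx^{T}]=D_{X}$ and $\E[xy^{T}]=D_{X}T^{\pi}$, the stationarity condition $\nabla_{P}\,\text{BYOL-}\Pi=0$ becomes $\Phi^{T}\Phi\,P=\Phi^{T}(T^{\pi})^{T}\Phi$; under \cref{ass:symmetric-t} the right-hand side is $\Phi^{T}T^{\pi}\Phi$, and under \cref{ass:orthogonal-init} the left factor is the identity, giving the symmetric matrix $P^{*}=\Phi^{T}T^{\pi}\Phi$.

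Next I would compute the semi-gradient that defines the dynamics, differentiating through the prediction $P^{T}\Phi^{T}x$ only, since the target $\sg(\Phi^{T}y)$ is frozen and $P$ is held at $P^{*}$ by the two-timescale split. A direct matrix-calculus computation gives $\nabla_{\Phi}\,\text{BYOL-}\Pi=2\big(D_{X}\Phi PP^{T}-D_{X}T^{\pi}\Phi P^{T}\big)$; substituting $D_{X}=\tfrac{1}{|\mathcal{X}|}I$ from \cref{ass:uniform-state}, then $P=P^{*}$ with $(P^{*})^{T}=P^{*}$, yields
$$\dot{\Phi}=\frac{2}{|\mathcal{X}|}\,(I-\Phi\Phi^{T})\,T^{\pi}\Phi\Phi^{T}T^{\pi}\Phi.$$

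Then I would differentiate the candidate objective. Writing $f_{\text{BYOL-}\Pi}(\Phi)=\mtrace\big((\Phi^{T}T^{\pi}\Phi)^{2}\big)$ and using symmetry of $T^{\pi}$, the Euclidean gradient is $\nabla_{\Phi}f_{\text{BYOL-}\Pi}=4\,T^{\pi}\Phi\Phi^{T}T^{\pi}\Phi$. Setting $W\coloneqq T^{\pi}\Phi\Phi^{T}T^{\pi}\Phi$ exposes the key structure: $\nabla_{\Phi}f_{\text{BYOL-}\Pi}=4W$ while $\dot{\Phi}=\tfrac{2}{|\mathcal{X}|}(I-\Phi\Phi^{T})W$. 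The chain rule gives $\tfrac{d}{dt}f_{\text{BYOL-}\Pi}=\mtrace\big((\nabla_{\Phi}f_{\text{BYOL-}\Pi})^{T}\dot{\Phi}\big)=\tfrac{8}{|\mathcal{X}|}\mtrace\big(W^{T}(I-\Phi\Phi^{T})W\big)$. By \cref{lem:non-collapse-byol}, $\Phi^{T}\Phi=I$ is preserved, so $I-\Phi\Phi^{T}$ is a symmetric idempotent projection and hence positive semidefinite; thus $\tfrac{d}{dt}f_{\text{BYOL-}\Pi}=\tfrac{8}{|\mathcal{X}|}\mtrace\big((( I-\Phi\Phi^{T})W)^{T}(I-\Phi\Phi^{T})W\big)\ge 0$. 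Therefore $-f_{\text{BYOL-}\Pi}$ is non-increasing along trajectories, which is exactly the Lyapunov property. Since $\Phi$ remains on the compact set $\{\Phi^{T}\Phi=I\}$ and $f_{\text{BYOL-}\Pi}$ is continuous and hence bounded, $\tfrac{d}{dt}f_{\text{BYOL-}\Pi}\to 0$, forcing $(I-\Phi\Phi^{T})W\to 0$, i.e. $\dot{\Phi}\to 0$; a LaSalle-type invariance argument then yields convergence to the set of critical points.

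The main obstacle I anticipate is the semi-gradient computation itself: I must respect both the stop-gradient (so the target contributes nothing) and the frozen $P^{*}$ from the faster timescale, and then recognize that the resulting dynamics are precisely a positive multiple of the projection $(I-\Phi\Phi^{T})$ applied to $\tfrac14\nabla_{\Phi}f_{\text{BYOL-}\Pi}$. This projection structure, available only because orthogonality is preserved by \cref{lem:non-collapse-byol}, is what turns $\mtrace\big((\nabla_{\Phi}f_{\text{BYOL-}\Pi})^{T}\dot{\Phi}\big)$ into a nonnegative squared Frobenius norm; without it the sign of $\tfrac{d}{dt}f_{\text{BYOL-}\Pi}$ would be unclear.
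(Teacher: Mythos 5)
Your proposal is correct and follows essentially the same route the paper takes: the paper does not re-prove this lemma (it is imported from \citet{tang2022understanding}), but its proofs of the analogous \cref{lem:trace-byol-ac,lem:trace-byol-var} use exactly your strategy --- solve for $P^{*}=\Phi^{T}T^{\pi}\Phi$, write $\dot{\Phi}$ as a projection $(I-\Phi\Phi^{T})$ applied to a positive multiple of the trace-objective gradient, and conclude $\tfrac{\mathrm{d}}{\mathrm{d}t}(-f_{\text{BYOL-}\Pi})=-c\,\mtrace(\dot{\Phi}^{T}\dot{\Phi})\le 0$ using idempotence of the projector. Your closing LaSalle-type argument for convergence to the critical set is a slightly more careful justification of the final sentence of the lemma than the paper spells out, but it is the same underlying mechanism.
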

By construction, the critical points of the ODE are also critical points of the latent loss, so \cref{lem:trace-byol} establishes that the ODE converges to such a non-collapsed critical point.

\begin{restatable}[BYOL-$\Pi$ ODE, \citealp{tang2022understanding}]{retheorem}{restatethmode}
\label{thm:ode-byol}
Under~\cref{ass:orthogonal-init,ass:uniform-state,ass:symmetric-t}, let $\Phi^*$ be any maximizer of the trace objective $f_{\text{BYOL-}\Pi}(\Phi)$:
\begin{align}
    \textstyle \Phi^{*} \subseteq \argmax_{\Phi} \; f_{\text{BYOL-}\Pi}(\Phi) = \argmax_{\Phi} \; \mtrace \left( \Phi^T T^{\pi} \Phi \Phi^T T^{\pi} \Phi \right) \, .
\end{align}
Then $\Phi^*$ is a critical point of the ODE. Furthermore, the columns of $\Phi^*$ span the same subspace as the top-$k$ eigenvectors of $(T^{\pi})^2$.
\end{restatable}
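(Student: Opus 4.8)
The plan is to treat the two claims separately. The first---that any maximizer $\Phi^*$ of $f_{\text{BYOL-}\Pi}$ is an equilibrium of the ODE---comes essentially for free from the Lyapunov structure already in place, whereas the spectral characterization of the column space is where the work lies, and I would set it up as a constrained optimization on the Stiefel manifold. Throughout write $T:=T^\pi$, symmetric by \cref{ass:symmetric-t}.

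For the first claim, by \cref{lem:non-collapse-byol} the flow remains on the Stiefel manifold $\{\Phi:\Phi^T\Phi=I\}$, and by \cref{lem:trace-byol} the negative trace objective is a Lyapunov function, so the ODE performs (tangentially projected) gradient ascent of $f_{\text{BYOL-}\Pi}$ along that manifold and its equilibria coincide with the constrained critical points of $f_{\text{BYOL-}\Pi}$, i.e.\ the points where the tangential part of $\nabla_\Phi f_{\text{BYOL-}\Pi}=4\,T\Phi\Phi^T T\Phi$ vanishes. The manifold is compact, so a global maximizer $\Phi^*$ exists and, being a constrained critical point, satisfies $\dot\Phi=0$; hence $\Phi^*$ is a critical point of the ODE, matching the remark that ODE critical points are critical points of the latent loss.

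For the second claim, I would first rewrite the objective geometrically. Letting $P:=\Phi\Phi^T$ be the orthogonal projector onto the $k$-dimensional column space of $\Phi$, the constraint $\Phi^T\Phi=I$ gives $f_{\text{BYOL-}\Pi}(\Phi)=\|\Phi^T T\Phi\|_F^2=\|PTP\|_F^2$, so the objective depends only on the subspace and we are maximizing $\|PTP\|_F^2$ over rank-$k$ orthogonal projectors. Diagonalizing $T=U\Lambda U^T$ and setting $G:=U^TPU$ (again a rank-$k$ projector), a short computation gives $\|PTP\|_F^2=\mtrace(\Lambda G\Lambda G)=\sum_{i,j}\lambda_i\lambda_j G_{ij}^2$. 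I would bound this by $\sum_{i,j}|\lambda_i|\,|\lambda_j|\,G_{ij}^2$, observe that the Hadamard square $H:=G\circ G$ is entrywise nonnegative with row sums $\sum_j G_{ij}^2=(G^2)_{ii}=G_{ii}=:d_i\in[0,1]$ and $\sum_i d_i=k$, and use the identity $\tfrac12\sum_{i,j}H_{ij}(|\lambda_i|-|\lambda_j|)^2\ge0$ to conclude $\sum_{i,j}|\lambda_i|\,|\lambda_j|\,G_{ij}^2\le\sum_i\lambda_i^2 d_i$. Maximizing the linear objective $\sum_i\lambda_i^2 d_i$ over this polytope places all weight on the $k$ indices with largest $\lambda_i^2$, giving optimal value $\sum_{\text{top-}k}\lambda_i^2$, attained by the projector onto the corresponding eigenvectors---equivalently the top-$k$ eigenvectors of $T^2$, since $T^2=U\Lambda^2 U^T$ shares eigenvectors with $T$ and orders them by $\lambda_i^2$.

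The main obstacle is the equality analysis that pins the column space down uniquely rather than merely determining the optimal value. I would track the three inequalities above: equality forces (i) $G_{ij}=0$ whenever $\lambda_i\lambda_j<0$ (from inserting absolute values), (ii) $G_{ij}=0$ whenever $|\lambda_i|\neq|\lambda_j|$ (from the $(|\lambda_i|-|\lambda_j|)^2$ term with $H_{ij}\ge0$), and (iii) $d_i=1$ precisely on a set of $k$ indices achieving the largest $\lambda_i^2$. Together these force $G$, hence $P$, to be the projector onto the span of the top-$k$ magnitude eigenvectors, with nonuniqueness only across tied magnitudes at the $k/(k{+}1)$ boundary. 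This shows the columns of $\Phi^*$ span exactly the top-$k$ eigenspace of $(T^\pi)^2$, completing the characterization.
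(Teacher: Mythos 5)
Your proposal is correct, but it reaches the result by a genuinely different route than the paper (which, for this particular theorem, defers to \citet{tang2022understanding} and proves only the BYOL-AC analogue via \cref{lem:von} and \cref{lem:max-trace}). For the spectral claim, the paper's argument inserts the positive semi-definite term $\mtrace\bigl(\Phi^T T(I-\Phi\Phi^T)T\Phi\bigr)$ to bound $f_{\text{BYOL-}\Pi}(\Phi)\le\mtrace\bigl(T^2\Phi\Phi^T\bigr)$ and then invokes the Von Neumann trace inequality, whereas you diagonalize, pass to the rank-$k$ projector $G=U^T\Phi\Phi^T U$, and reduce to a linear program over the polytope $\{d\in[0,1]^n:\sum_i d_i=k\}$ via the Hadamard-square identity; this is an elementary, self-contained substitute for Von Neumann, and your equality analysis actually proves the ``\emph{any} maximizer spans the top-$k$ eigenspace'' direction that the theorem asserts but that \cref{lem:max-trace} only states in the ``is \emph{a} maximizer'' direction (indeed, once $d_i\in\{0,1\}$ is forced, the projector structure $G_{ii}=\sum_j G_{ij}^2$ alone pins down $G$, so your conditions (i)--(ii) are not even needed for the conclusion). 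For the critical-point claim, the paper substitutes $Q_kC$ into the explicit formula for $\dot\Phi$ and computes $0$, while you observe that $\dot\Phi\propto(I-\Phi\Phi^T)\nabla_\Phi f$ and argue abstractly that a constrained maximizer annihilates this; that argument is sound provided you note that $(I-\Phi\Phi^T)\nabla_\Phi f$ is itself a tangent direction, so that at a maximizer $0=\langle\nabla_\Phi f,(I-\Phi\Phi^T)\nabla_\Phi f\rangle=\Vert(I-\Phi\Phi^T)\nabla_\Phi f\Vert_F^2$ --- a one-line point worth making explicit. The trade-off: your route is cleaner and closes the converse gap, while the paper's computational route produces reusable machinery (\cref{lem:max-trace}) that it then applies verbatim to the BYOL-AC and BYOL-VAR theorems.
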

This trace objective $f_{\text{BYOL-}\Pi}(\Phi)$ is essentially a surrogate loss function that the ODE is monotonically maximizing, that also has the same critical points by construction. Thus to understand the ODE, we simply analyze the maximizer of the trace objective. In this case, the maximizer $\Phi^{*}$ learns important features (eigenvectors) of the transition dynamics $T^{\pi}$. We demonstrate this in Appendix~\cref{fig:tracesplot}, for both symmetric and non-symmetric MDPs.

\section{Understanding Action-Conditional BYOL}
\label{sec:byol-ac}

\subsection{The Action-Conditional BYOL Objective}
\label{sec:ode-byol-ac}

We start by identifying a key distinction between the theoretical analysis and practical implementations of BYOL in the literature. Empirical investigation of BYOL variants~\citep{grill2020bootstrap, guo2022byol, tang2022understanding} typically use an action-conditioned objective. However, the analytical framework for self-predictive learning~\citep{tang2022understanding} considers a policy dependent objective function marginalizing over actions. We refer to the non-action conditional objective as BYOL-$\Pi$ henceforth. The practical success of the action-conditional BYOL referred to as BYOL-AC henceforth, serves as our primary motivation to develop a better understanding of it analytically.

We now formulate the BYOL-AC self-predictive ODE and analyze it in a similar manner to BYOL-$\Pi$ ODE (\cref{sec:ode-byol}). The setup is the same as before except we now explicitly use action-conditional predictors, that is a predictor $P_a$ per action instead of a single predictor $P$ in BYOL-$\Pi$. 
The goal is then to minimize the following reconstruction loss in the latent space:
\begin{align}
\label{eq:loss-byol-ac}
    \min_{\Phi, \{ \forall P_a \}} \text{BYOL-AC}(\Phi, P_{a_1}, P_{a_2}, \dots) := \mathbb{E}_{x \sim d_X, a \sim \pi(\cdot | x), y \sim T_a(\cdot | x)} \left[ \left\Vert P_a^T \Phi^T x - \sg(\Phi^T y) \right\Vert^2 \right]
\end{align}
This is a natural extension of the BYOL-$\Pi$ ODE in \cref{eq:byol-non-ac} to explicitly condition the predictions on actions. The ODE system we consider is then a similar two-timescale optimization as before, where we first solve for the optimal $P_a$, followed by a gradient step for $\Phi$:
\begin{align}
\label{eq:ode-byol-ac}
    \begin{aligned}
    \forall a: \; P_a^{*} &\in \textstyle \argmin_{P_a}  \text{BYOL-AC}(\Phi, P_{a}), \quad \dot{\Phi} = - \nabla_{\Phi} \; \text{BYOL-AC}(\Phi, P_{a}) \big\vert_{P_a = Pa^{*}}
    \end{aligned}
\end{align}

Next, to analyze this ODE, we make a few new assumptions in addition to~\cref{ass:orthogonal-init,ass:uniform-state,ass:symmetric-t}. For additional intuition on theory, we defer the reader to~\Cref{sec:intuition_theory}. All proofs are in~\Cref{sec:app-proofs}.
\begin{restatable}[Uniform Policy]{reassumption}{restateassuniformpolicy}
\label{ass:uniform-policy}
The (data-collection) policy $\pi$ is uniform across all actions.
\end{restatable}
We also make an analogue of~\cref{ass:symmetric-t} for the action-conditional setting:
\begin{restatable}[Symmetric Per-action Dynamics]{reassumption}{restateasssymmetricta}
\label{ass:symmetric-ta}
$T_a$ is symmetric for all actions i.e. $T_a = (T_a)^T$.
\end{restatable}

We establish analogues of \cref{lem:non-collapse-byol,lem:trace-byol,thm:ode-byol} for 
BYOL-AC, that is, the non-collapse property of BYOL-AC, a trace objective that describes a Lyapunov function for the ODE, and a main theorem that helps us understand what kind of representation BYOL-AC learns.
\begin{restatable}[Non-collapse BYOL-AC]{relemma}{restatelemnoncollapseac}
\label{lem:non-collapse-byol-ac}
Under \cref{ass:orthogonal-init}, we have that $\Phi^T \dot{\Phi} = 0$, which means that $\Phi^T \Phi = I$ is preserved for all $\Phi$ throughout the BYOL-AC ODE process. 
\end{restatable}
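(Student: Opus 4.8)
The plan is to follow the template of the BYOL-$\Pi$ non-collapse argument (\cref{lem:non-collapse-byol}) and establish directly that $\Phi^T\dot{\Phi} = 0$ as a $k\times k$ matrix. Since $(\Phi^T\dot{\Phi})^T = \dot{\Phi}^T\Phi$, this immediately gives $\frac{d}{dt}(\Phi^T\Phi) = \dot{\Phi}^T\Phi + \Phi^T\dot{\Phi} = 0$, so the orthogonality $\Phi^T\Phi = I$ guaranteed at initialization by \cref{ass:orthogonal-init} is an invariant of the flow. The only genuinely new ingredient relative to BYOL-$\Pi$ is the presence of a separate predictor $P_a$ per action, so the main task is to verify that the relevant cancellation survives the sum over actions.

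First I would solve the inner minimization. For a fixed $\Phi$ and a fixed action $a$, the objective $\E_{x\sim d_X,\, y\sim T_a}[\,\|P_a^T\Phi^Tx - \sg(\Phi^Ty)\|^2\,]$ is quadratic and convex in $P_a$, so setting its gradient to zero yields the normal equations $(\Phi^T D_X\Phi)\,P_a^{*} = \Phi^T D_X T_a\Phi$, using $\E[xx^T]=D_X$ and $\E[xy^T]=D_X T_a$. Under \cref{ass:orthogonal-init,ass:uniform-state} this collapses to the clean form $P_a^{*} = \Phi^T T_a\Phi$, but I would keep the optimality condition in the form $(\Phi^T D_X\Phi)\,P_a^{*} = \Phi^T D_X T_a\Phi$, since that is exactly what drives the cancellation.

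Next I would compute the semi-gradient. Differentiating the per-action loss with respect to $\Phi$ while treating both the target $\sg(\Phi^Ty)$ and the plugged-in predictor $P_a^{*}$ as constants gives a per-action contribution of the form $2 D_X\Phi P_a^{*}(P_a^{*})^T - 2 D_X T_a\Phi (P_a^{*})^T$; averaging over the uniform policy (\cref{ass:uniform-policy}) and negating produces $\dot{\Phi}$. The key step is then to left-multiply each per-action term by $\Phi^T$: the first term becomes $2(\Phi^T D_X\Phi)P_a^{*}(P_a^{*})^T$ and the second becomes $2(\Phi^T D_X T_a\Phi)(P_a^{*})^T$, and the optimality condition $(\Phi^T D_X\Phi)P_a^{*} = \Phi^T D_X T_a\Phi$ makes these identical, so each per-action term vanishes. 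Summing over actions preserves the cancellation, giving $\Phi^T\dot{\Phi} = 0$ and hence the claimed invariance.

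I expect the only delicate point to be the gradient bookkeeping: one must correctly treat the stop-gradient target as constant and, in the two-timescale setting, substitute $P_a = P_a^{*}$ after differentiating rather than differentiating through the dependence of $P_a^{*}$ on $\Phi$. Once the per-action gradient is written down, the cancellation is an envelope-type identity that is structurally identical to the BYOL-$\Pi$ case and holds termwise in $a$; this is why the extension to action-conditional predictors is immediate, and why neither \cref{ass:symmetric-ta} nor the symmetry of $T_a$ is actually required for this particular lemma (they enter only in the subsequent Lyapunov and convergence analyses).
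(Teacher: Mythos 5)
Your proposal is correct and follows essentially the same route as the paper: derive the normal equations $(\Phi^T D_X\Phi)P_a^{*} = \Phi^T D_X T_a\Phi$, write the semi-gradient as $\dot{\Phi} = -\tfrac{2}{|A|}\sum_a (D_X\Phi P_a^{*} - D_X T_a\Phi)(P_a^{*})^T$, and observe that left-multiplying by $\Phi^T$ makes each per-action term vanish by optimality of $P_a^{*}$. Your closing remarks about the stop-gradient bookkeeping and the non-necessity of the symmetry assumptions are also consistent with the paper's treatment.
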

\begin{restatable}[BYOL-AC Trace Objective]{relemma}{restatelemtraceac}
\label{lem:trace-byol-ac}
Under \cref{ass:orthogonal-init,ass:uniform-state,ass:symmetric-t,ass:uniform-policy,ass:symmetric-ta}, a Lyapunov function for the BYOL-AC ODE is the negative of the following trace objective
\begin{align}
\label{eq:trace-byol-ac}
    f_{\text{BYOL-AC}}(\Phi) &\coloneqq \textstyle |A|^{-1}\sum_a  \mtrace \left(\Phi^T T_a \Phi \Phi^T T_a \Phi \right)
\end{align}
This means the ODE converges to some critical point.
\end{restatable}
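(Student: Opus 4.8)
The plan is to mirror the three-step structure behind \cref{lem:trace-byol}: solve the inner least-squares problem for the optimal predictors $P_a^*$, substitute them into the semi-gradient to obtain a closed form for the ODE vector field $\dot{\Phi}$, and then exhibit $-f_{\text{BYOL-AC}}$ as a Lyapunov function by showing that $f_{\text{BYOL-AC}}$ is non-decreasing along the flow, with stationarity exactly at the ODE's critical points. First I would solve the inner minimization in \cref{eq:ode-byol-ac}. Because the policy is uniform (\cref{ass:uniform-policy}) and each action carries its own predictor, the loss \cref{eq:loss-byol-ac} decouples over actions into $|\mathcal{A}|^{-1}\sum_a \E_{x\sim d_X,\,y\sim T_a}\big[\|P_a^T\Phi^T x - \Phi^T y\|^2\big]$, a separate least-squares problem in each $P_a$. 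Setting its gradient to zero gives the normal equations $\Phi^T D_X \Phi\, P_a = \Phi^T D_X T_a \Phi$, using $\E[xy^T\mid a] = D_X T_a$. Under \cref{ass:uniform-state} we have $D_X = |\mathcal{X}|^{-1} I$, and under \cref{ass:orthogonal-init} the non-collapse property (\cref{lem:non-collapse-byol-ac}) gives $\Phi^T\Phi = I$, so these equations collapse to $P_a^* = \Phi^T T_a \Phi$. Symmetry of $T_a$ (\cref{ass:symmetric-ta}) then makes each $P_a^*$ symmetric, which is the key structural fact for the next step.

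Next I would compute the semi-gradient. Holding $P_a = P_a^*$ fixed and differentiating only through the prediction term $P_a^T\Phi^T x$ (the target is stop-gradiented), a direct matrix-calculus computation gives $\dot{\Phi} = -\nabla_\Phi \text{BYOL-AC} = \tfrac{2}{|\mathcal{X}||\mathcal{A}|}\sum_a\big[T_a\Phi\Phi^T T_a\Phi - \Phi(\Phi^T T_a\Phi)^2\big]$, where the symmetry of $P_a^*$ is precisely what lets $P_a^*(P_a^*)^T$ simplify to $(\Phi^T T_a\Phi)^2$. Separately, the standard identity $\nabla_\Phi \mtrace\big((\Phi^T S\Phi)^2\big) = 4\,S\Phi\Phi^T S\Phi$ for symmetric $S$ yields $\nabla_\Phi f_{\text{BYOL-AC}} = \tfrac{4}{|\mathcal{A}|}\sum_a T_a\Phi\Phi^T T_a\Phi$. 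The crucial observation is that the second term of $\dot{\Phi}$ factors as $\Phi(\Phi^T T_a\Phi)^2 = \Phi\Phi^T\big(T_a\Phi\Phi^T T_a\Phi\big)$, so that $\dot{\Phi} = \tfrac{1}{2|\mathcal{X}|}(I - \Phi\Phi^T)\,\nabla_\Phi f_{\text{BYOL-AC}}$; in other words the ODE is exactly the Euclidean gradient of the trace objective projected onto the orthogonal complement of $\mathrm{col}(\Phi)$.

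The Lyapunov conclusion then follows quickly: since $\Phi^T\Phi = I$, the matrix $I - \Phi\Phi^T$ is an orthogonal projection (symmetric, idempotent, PSD), so $\tfrac{d}{dt} f_{\text{BYOL-AC}} = \mtrace\big((\nabla_\Phi f_{\text{BYOL-AC}})^T\dot{\Phi}\big) = \tfrac{1}{2|\mathcal{X}|}\big\|(I-\Phi\Phi^T)\nabla_\Phi f_{\text{BYOL-AC}}\big\|_F^2 \ge 0$. Hence $f_{\text{BYOL-AC}}$ is non-decreasing along the flow, $-f_{\text{BYOL-AC}}$ is a Lyapunov function, and the flow rests exactly where $(I-\Phi\Phi^T)\nabla_\Phi f_{\text{BYOL-AC}} = 0$, i.e. at the critical points of the ODE. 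I expect the main obstacle to be the bookkeeping in the semi-gradient step: correctly tracking which occurrences of $\Phi$ are differentiated (only the prediction, with both $P_a^*$ and the target frozen) and then spotting the $\Phi\Phi^T$ factorization that turns the raw vector field into a projected gradient, without which the Lyapunov identity does not reveal itself. The remaining assumptions play supporting roles: uniform policy drives the per-action decoupling and the $|\mathcal{A}|^{-1}$ weighting, while symmetry of $T^\pi$ (\cref{ass:symmetric-t}) is automatically implied here since $T^\pi = |\mathcal{A}|^{-1}\sum_a T_a$.
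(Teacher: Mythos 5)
Your proposal is correct and follows essentially the same route as the paper: solve the per-action normal equations to get $P_a^* = \Phi^T T_a \Phi$, compute the semi-gradient, recognize $\dot{\Phi}$ as the Euclidean gradient of the trace objective multiplied by the projection $I - \Phi\Phi^T$, and use idempotence of that projection to conclude $\tfrac{\mathrm{d}}{\mathrm{d}t} f_{\text{BYOL-AC}} \geq 0$ with equality exactly at critical points. Your "projected gradient" framing is just a cleaner restatement of the paper's step of inserting an extra $(I-\Phi\Phi^T)$ factor to write the time derivative as a nonnegative multiple of $\mtrace(\dot{\Phi}^T\dot{\Phi})$, and your side remark that \cref{ass:symmetric-t} is implied by \cref{ass:symmetric-ta,ass:uniform-policy} is accurate.
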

Before presenting our main theorem, we require the following assumption.
\begin{restatable}[Common Eigenvectors]{reassumption}{restateasscommonta}
\label{ass:common-ta}
For all actions $a$, we have the eigen decomposition $T_a = Q D_a Q^T$, i.e. all $T_a$ share the same eigenvectors.
\end{restatable}
\begin{restatable}[BYOL-AC ODE]{retheorem}{restatethmodeac}
\label{thm:ode-byol-ac}
Under \cref{ass:orthogonal-init,ass:uniform-state,ass:symmetric-t,ass:uniform-policy,ass:symmetric-ta,ass:common-ta}, let $\PhiAC^*$ be any maximizer of the trace objective $f_{\text{BYOL-AC}}(\Phi)$:
\begin{align}
    \textstyle \PhiAC^{*} \subseteq \argmax_{\Phi} \; f_{\text{BYOL-AC}}(\Phi) = \argmax_{\Phi} \;  |A|^{-1}\sum_a \mtrace \left(\Phi^T T_a \Phi \Phi^T T_a \Phi \right)
\end{align}
Then $\PhiAC^*$ is a critical point of the ODE. Furthermore, the columns of $\PhiAC^*$ span the same subspace as the top-$k$ eigenvectors of $\left( |A|^{-1} \sum_a T_a^2 \right)$.
\end{restatable}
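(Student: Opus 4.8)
The plan is to follow the BYOL-$\Pi$ template: first dispense with the critical-point claim using the Lyapunov structure already established, and then spend the real effort characterizing the maximizer of $f_{\text{BYOL-AC}}$ over the constraint manifold $\Phi^T\Phi = I$. For the critical-point part, exactly as in \cref{thm:ode-byol}, the ODE equilibria coincide with the stationary points of the trace objective on the manifold: \cref{lem:non-collapse-byol-ac} keeps the flow on $\{\Phi:\Phi^T\Phi=I\}$, and $-f_{\text{BYOL-AC}}$ is a Lyapunov function by \cref{lem:trace-byol-ac}, so the semi-gradient dynamics ascend $f_{\text{BYOL-AC}}$ and vanish exactly at its stationary points; since any maximizer is in particular stationary, $\PhiAC^*$ is a critical point of the ODE. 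Everything then reduces to identifying $\argmax_{\Phi^T\Phi = I} f_{\text{BYOL-AC}}(\Phi)$.

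Second, I would reduce to an optimization over rank-$k$ orthogonal projectors using \cref{ass:common-ta}. Writing $T_a = Q D_a Q^T$ with a common orthogonal $Q$ and $D_a = \diag(\lambda^a_1,\dots)$, set $\Psi := Q^T\Phi$ (still orthonormal, as $Q$ is orthogonal) and let $P := \Psi\Psi^T$ be the associated rank-$k$ orthogonal projector. A short computation gives $\Phi^T T_a \Phi = \Psi^T D_a \Psi$ and hence $\mtrace(\Phi^T T_a \Phi\, \Phi^T T_a \Phi) = \mtrace(D_a P D_a P) = \sum_{i,i'}\lambda^a_i\lambda^a_{i'}P_{ii'}^2$, using symmetry of $P$. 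Averaging over actions, $f_{\text{BYOL-AC}}(\Phi) = \sum_{i,i'} c_{ii'} P_{ii'}^2$ with $c_{ii'} := |A|^{-1}\sum_a \lambda^a_i\lambda^a_{i'}$; note that $c_{ii} = \bar\lambda_i$ is precisely the $i$-th eigenvalue of $\bar M := |A|^{-1}\sum_a T_a^2 = Q\,\diag(\bar\lambda)\,Q^T$.

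Third, I would establish the sandwich $f_{\text{BYOL-AC}}(\Phi) \le \mtrace(\bar M P) \le \sum_{i=1}^k \bar\lambda_{(i)}$, where $\bar\lambda_{(1)}\ge\bar\lambda_{(2)}\ge\cdots$ are the ordered eigenvalues of $\bar M$. The right inequality is the Ky Fan maximum principle: over rank-$k$ projectors, $\mtrace(\bar M P)$ is maximized by the projector onto the top-$k$ eigenvectors of $\bar M$, with value $\sum_{i=1}^k\bar\lambda_{(i)}$. For the left inequality I use $P^2 = P$, which gives $\sum_{i'} P_{ii'}^2 = P_{ii}$ and hence $\mtrace(\bar M P) = \sum_i \bar\lambda_i P_{ii} = \sum_{i,i'}\tfrac12(c_{ii}+c_{i'i'})P_{ii'}^2$; subtracting $f_{\text{BYOL-AC}}$ yields the exact identity
\[
\mtrace(\bar M P) - f_{\text{BYOL-AC}}(\Phi) = |A|^{-1}\sum_a\sum_{i,i'}\tfrac12\,(\lambda^a_i - \lambda^a_{i'})^2\, P_{ii'}^2 \;\ge\; 0.
\]
Both inequalities are simultaneously tight when $P$ projects onto the top-$k$ eigenvectors of $\bar M$: that choice makes $P$ diagonal in the $Q$-basis, so every off-diagonal $P_{ii'}$ vanishes (the identity becomes equality) and Ky Fan is saturated, giving $f_{\text{BYOL-AC}} = \sum_{i=1}^k\bar\lambda_{(i)}$. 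Conversely, at any maximizer the value $\sum_{i=1}^k\bar\lambda_{(i)}$ forces $\mtrace(\bar M P)$ itself to saturate Ky Fan, so under a spectral gap $\bar\lambda_{(k)} > \bar\lambda_{(k+1)}$ the range of $P$ equals the top-$k$ eigenspace of $\bar M$; translating back through $Q$ shows the columns of $\PhiAC^*$ span the top-$k$ eigenvectors of $\bar M$, as claimed.

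I expect the main obstacle to be controlling the off-diagonal cross-terms $P_{ii'}^2$ for $i \ne i'$: a priori these could push $f_{\text{BYOL-AC}}$ above $\sum_{i=1}^k\bar\lambda_{(i)}$, and they are exactly what distinguishes this problem from diagonalizing a single matrix. The crux is the displayed identity, which rewrites the gap $\mtrace(\bar M P) - f_{\text{BYOL-AC}}$ as a manifestly nonnegative weighted sum of squared eigenvalue differences $(\lambda^a_i - \lambda^a_{i'})^2$; this is precisely where \cref{ass:common-ta} is indispensable, since only a shared eigenbasis lets all the terms $\mtrace(D_a P D_a P)$ be expressed through one common projector $P$. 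The only remaining care is the degenerate case, where the subspace-uniqueness conclusion should be stated up to the spectral gap at index $k$.
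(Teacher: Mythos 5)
Your proof is correct, and its core — the sandwich $f_{\text{BYOL-AC}}(\Phi)\le \mtrace\left(\left(|A|^{-1}\sum_a T_a^2\right)\Phi\Phi^T\right)\le\sum_{i=1}^k\bar\lambda_{(i)}$ with attainment at the top-$k$ eigenvectors — is the same strategy as the paper's \cref{lem:max-trace}, just carried out in coordinates. The paper obtains the first inequality abstractly by adding the positive-semidefinite slack $|A|^{-1}\sum_a\mtrace\left(\Phi^T T_a(I-\Phi\Phi^T)T_a\Phi\right)\ge 0$ (which, notably, does not use \cref{ass:common-ta}); your identity expressing the gap as $|A|^{-1}\sum_a\sum_{i,i'}\tfrac12(\lambda^a_i-\lambda^a_{i'})^2P_{ii'}^2$ is exactly that slack term evaluated in the shared eigenbasis, so the common-eigenvector assumption is needed only for the explicit coordinate form and for identifying which subspace wins, not for the bound itself. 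Your second inequality is Ky Fan where the paper invokes Von Neumann — equivalent in this setting. You genuinely diverge on the critical-point claim: the paper substitutes $Q_kC$ into the simplified dynamics \cref{eq:phi-dot-ac-simplified} and verifies $\dot{\Phi}=0$ by direct matrix algebra, whereas you infer it from the Lyapunov structure. That inference is valid, but state it in the watertight form: by \cref{lem:non-collapse-byol-ac} the flow preserves the constraint set, by \cref{lem:trace-byol-ac} one has $\tfrac{\mathrm{d}}{\mathrm{d}t}f_{\text{BYOL-AC}}(\Phi_t)=\tfrac{|\mathcal{X}|}{2}\mtrace(\dot{\Phi}^T\dot{\Phi})\ge 0$, and a trajectory started at a constrained maximizer cannot increase $f_{\text{BYOL-AC}}$, which forces $\dot{\Phi}=0$; this is cleaner than appealing, as you do, to an unproved equivalence between ODE equilibria and manifold-stationary points. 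Finally, your converse — that under a spectral gap $\bar\lambda_{(k)}>\bar\lambda_{(k+1)}$ \emph{every} maximizer spans the top-$k$ eigenspace — is something the paper's proof does not establish (it only exhibits $Q_kC$ as a maximizer and reads off its span), and it is in fact closer to what the theorem statement literally asserts about an arbitrary maximizer; your caveat about the degenerate case is the right one.
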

We have an analogous result to~\Cref{thm:ode-byol-ac}, but instead of $(T^{\pi})^2$, the columns of $\PhiAC^*$ span the same subspace as $k$ eigenvectors of $Q$ corresponding to the top-$k$ eigenvalues of $\left( |A|^{-1} \sum_a T_a^2 \right)$.

\subsection{Comparing the Representation Learned by BYOL-$\Pi$ and BYOL-AC}
\label{sec:action-conditional-odeandtrace}

In this section, we will characterize the difference between representations $\Phi^{*}$ and $\PhiAC^{*}$ achieved by BYOL and BYOL-AC. Under~\cref{ass:orthogonal-init,ass:uniform-state,ass:symmetric-t,ass:uniform-policy,ass:symmetric-ta,ass:common-ta}, we have the eigendecomposition $T_a = Q D_a Q^T$ for all actions $a$. Because we have a uniform policy, we also know that $T^{\pi} = |A|^{-1} \sum_a T_a = Q \left( |A|^{-1} \sum_a D_a \right) Q^T$. Since $T_a$ and $T^{\pi}$ have the same eigenvectors $Q$, we know that both $\Phi^{*}$ and $\PhiAC^{*}$ will correspond to a subset of $k$ eigenvectors from $Q$. The only difference is that they use different criteria to pick eigenvectors.

From~\cref{thm:ode-byol}, we know that $\Phi^{*}$ picks according to the eigenvalues of $(T^{\pi})^2$ i.e. from $\left( |A|^{-1} \sum_a D_a \right)^2$ since $T^{\pi} = |A|^{-1} \sum_a T_a$. From~\cref{thm:ode-byol-ac}, we know that $\PhiAC^{*}$ picks according to $\left( |A|^{-1} \sum_a D_a^2 \right)$. One is the square of the mean, and the other is the mean of the squares. We can relate the two quantities with the following.
\begin{restatable}[Variance Relation]{reremark}{restateremvar}
\label{rem:variance}
\begin{align*}
    \underbrace{\mathbb{E}_{a \sim \mathrm{Unif}} \left[ D_a^2 \right]}_{\text{BYOL-AC}} &= \underbrace{\left( \mathbb{E}_{a \sim \mathrm{Unif}} \left[ D_a \right] \right)^2}_{\text{BYOL}-\Pi} + \mathrm{Var}_{a \sim \mathrm{Unif}}(D_a)
\end{align*}
\end{restatable}
Thus BYOL-AC picks eigenvectors that are not only good according to BYOL, but also have large variance of eigenvalues across actions. Intuitively, this means BYOL-AC pays attention to features that can distinguish between actions.

\begin{tcolorbox}
[colback=lightgray!1!white,colframe=lightgray!75!black]
\textbf{Key Insight.} BYOL-AC captures the spectral information about per-action transition matrices $T_a$,  BYOL-$\Pi$ captures the spectral information about the policy induced transition matrix $T^{\pi}$. 
\end{tcolorbox}

\section{Variance-Like Action-Conditional BYOL}
\label{sec:byol-var}

Comparing BYOL and BYOL-AC, we note that we can relate their trace objective maximizers using a variance equation (\cref{rem:variance}), with BYOL being the square of the first moment, and BYOL-AC being the second moment. This poses a natural question: Is there an objective corresponding to the variance term i.e. the difference between the second moment and the square of the first moment? We answer this question in the affirmative by proposing a new variance-like BYOL objective:
\begin{equation}
    \label{eq:phi-dynamics-vl}
    \textstyle \min_{\Phi} \text{BYOL-VAR}(\Phi, P, P_{a_1}, P_{a_2}, \dots) \coloneqq \mathbb{E}\left[\| P_a^\top \Phi^\top x - \sg(\Phi^\top y) ) \|^2 - \| P^\top \Phi^\top x - \sg(\Phi^\top y) ) \|^2\right]
\end{equation}
But now with the predictors as before, solving
\[
    \textstyle \min_P \mathbb{E}\left[\| P^\top \Phi^\top x - \Phi^\top y \|^2\right], \qquad \text{and} 
    \textstyle \qquad  \forall a: \min_{P_a} \mathbb{E}\left[\| P_a^\top \Phi^\top x - \Phi^\top y \|^2\right].
\]
The BYOL-VAR objective is a difference of the BYOL-AC and BYOL-$\Pi$ objectives. Analogous to our previous results, we derive the corresponding ODE dynamics, with statements about non-collapse, a Lyapunov function, and a result about what the representation captures.
\begin{gather}
\label{eq:ode-byol-var-ac}
P^{*} \in \textstyle \argmin_{P}  \E \left[ \big| \big| P^T \Phi^T x - \sg(\Phi^T y)  \big| \big|^{2} \right], \quad \forall a: P_a^{*} \in \argmin_{P_a}  \E \left[ \big| \big| P_a^T \Phi^T x - \sg(\Phi^T y)  \big| \big|^{2} \right] \nonumber \\
\dot{\Phi} = - \nabla_{\Phi} \text{BYOL-VAR}(\Phi, P, P_{a_1}, P_{a_2}, \dots) \big\vert_{P=P^{*}, P_a = Pa^{*}}
\end{gather}
\begin{restatable}[Non-collapse BYOL-VAR]{relemma}{restatelemnoncollapsevar}
\label{lem:non-collapse-byol-var}
Under \cref{ass:orthogonal-init,ass:uniform-policy}, we have that $\Phi^T \dot{\Phi} = 0$, which means that $\Phi^T \Phi = I$ is preserved for all $\Phi$ throughout the BYOL-VAR ODE process.
\end{restatable}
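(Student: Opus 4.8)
The plan is to exploit the fact that the BYOL-VAR objective in \cref{eq:phi-dynamics-vl} is \emph{exactly} the difference of the BYOL-AC objective and a marginalized BYOL-$\Pi$ objective, so that its semi-gradient splits and the non-collapse property is inherited term by term from \cref{lem:non-collapse-byol,lem:non-collapse-byol-ac}. Concretely, because the integrand $\|P^\top\Phi^\top x - \sg(\Phi^\top y)\|^2$ of the second summand depends on the sample only through $(x,y)$, marginalizing the action via $\sum_a \pi(a\mid x) T_a = T^{\pi}$ shows that this summand equals $\text{BYOL-}\Pi(\Phi,P)$ of \cref{eq:byol-non-ac}, while the first summand is $\text{BYOL-AC}(\Phi,\{P_a\})$ of \cref{eq:loss-byol-ac} verbatim. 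Hence the two inner minimizations in \cref{eq:ode-byol-var-ac} recover precisely the optimal predictors $P^{*}$ and $\{P_a^{*}\}$ of the BYOL-$\Pi$ and BYOL-AC sub-problems, and by linearity of $\nabla_\Phi$ together with the stop-gradient convention, one obtains $\dot{\Phi}_{\text{VAR}} = \dot{\Phi}_{\text{AC}} - \dot{\Phi}_{\Pi}$ evaluated at these optima.

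Given this decomposition the conclusion is immediate: left-multiplying by $\Phi^\top$ and invoking \cref{lem:non-collapse-byol-ac} ($\Phi^\top\dot{\Phi}_{\text{AC}} = 0$) and \cref{lem:non-collapse-byol} ($\Phi^\top\dot{\Phi}_{\Pi} = 0$) gives $\Phi^\top\dot{\Phi}_{\text{VAR}} = \Phi^\top\dot{\Phi}_{\text{AC}} - \Phi^\top\dot{\Phi}_{\Pi} = 0$. As a self-contained check one can instead compute the semi-gradient directly: each summand contributes a term of the form $(\Phi^\top D_X\Phi\,P^{*} - \Phi^\top D_X T\,\Phi)(P^{*})^\top$ with $T = T^{\pi}$ or $T = T_a$, and the bracket vanishes because $P^{*} = (\Phi^\top D_X\Phi)^{-1}\Phi^\top D_X T\,\Phi$ is exactly the normal-equation solution of the corresponding inner least-squares problem. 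Here \cref{ass:uniform-policy} is what makes the per-action predictor take this clean $D_X$-weighted form: with a uniform $\pi$ the effective state weighting $d_X(x)\pi(a\mid x)$ is proportional to $d_X$ for every $a$, so each $P_a^{*}$ satisfies the same normal equations as in BYOL-AC.

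The step needing the most care is the two-timescale / envelope argument underlying the decomposition: because $P^{*}$ and $\{P_a^{*}\}$ are defined as argmins of $\Phi$-dependent inner losses, one must justify that evaluating $\nabla_\Phi$ of the difference at the joint optimum equals the difference of the individually optimized semi-gradients. This follows since the stop-gradient freezes the targets and the first-order optimality $\partial_P(\cdot)=0$ kills the cross terms arising from the dependence of the predictors on $\Phi$, exactly as in the proofs of \cref{lem:non-collapse-byol,lem:non-collapse-byol-ac}. Finally, $\Phi^\top\dot{\Phi} = 0$ yields preservation of orthogonality in the usual way: since $\tfrac{d}{dt}(\Phi^\top\Phi) = \dot{\Phi}^\top\Phi + \Phi^\top\dot{\Phi} = (\Phi^\top\dot{\Phi})^\top + \Phi^\top\dot{\Phi} = 0$, the matrix $\Phi^\top\Phi$ stays constant along the ODE and hence remains at its initial value $I$ guaranteed by \cref{ass:orthogonal-init}.
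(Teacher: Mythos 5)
Your proposal is correct and follows essentially the same route as the paper: the paper's Lemma on $\dot{\Phi}$ for BYOL-VAR likewise writes the semi-gradient as the difference of the BYOL-AC and BYOL-$\Pi$ pieces, and its non-collapse proof left-multiplies by $\Phi^T$ and kills each bracket $\Phi^T D_X \Phi P^{*} - \Phi^T D_X T \Phi$ via the normal equations defining $P^{*}$ and $P_a^{*}$, exactly as in your self-contained check. The concluding step $\tfrac{d}{dt}(\Phi^T\Phi)=0$ with initial value $I$ also matches the paper verbatim.
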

\begin{restatable}[BYOL-VAR Trace Objective]{relemma}{restatelemtracevar}
\label{lem:trace-byol-var}
Under \cref{ass:orthogonal-init,ass:uniform-state,ass:symmetric-t,ass:uniform-policy,ass:symmetric-ta}, a Lyapunov function for the BYOL-VAR ODE is the negative of the following trace objective
\begin{align}
\label{eq:trace-byol-var}
    f_{\text{BYOL-VAR}}(\Phi) &\coloneqq f_{\text{BYOL-AC}}(\Phi) - f_{\text{BYOL-}\Pi}(\Phi) \nonumber \\
    &= |A|^{-1}\sum_a  \mtrace \left(\Phi^T T_a \Phi \Phi^T T_a \Phi\right)  - \mtrace \left(\Phi^T T^{\pi} \Phi \Phi^T T^{\pi} \Phi  \right)
\end{align}
This means the ODE converges to some critical point.
\end{restatable}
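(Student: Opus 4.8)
The plan is to exploit the fact that the BYOL-VAR objective in \cref{eq:phi-dynamics-vl} is, by construction, the difference of the BYOL-AC and BYOL-$\Pi$ objectives, and that its two families of predictors decouple. Rather than redo the algebra from scratch, I would reduce everything to the already-established \cref{lem:trace-byol,lem:trace-byol-ac} by linearity, and then run the same Lyapunov argument one last time on the combined dynamics.

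First I would solve the inner minimizations. The single predictor $P$ appears only in the BYOL-$\Pi$ term and each $P_a$ only in the BYOL-AC term, and --- crucially --- the displayed minimizations following \cref{eq:phi-dynamics-vl} define $P^\ast$ and $\{P_a^\ast\}$ as minimizers of the respective \emph{positive} squared losses, not of the signed VAR objective, so the minus sign never renders the predictor problem unbounded. Under \cref{ass:orthogonal-init,ass:uniform-state,ass:uniform-policy}, the usual least-squares computation gives $P^\ast = \Phi^T T^\pi \Phi$ and $P_a^\ast = \Phi^T T_a \Phi$, exactly the optimal predictors from the BYOL-$\Pi$ and BYOL-AC analyses. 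Because the VAR loss is the difference of the two losses and the optimal predictors coincide with theirs, the semi-gradient taken at these predictors splits by linearity: $\dot\Phi = -\nabla_\Phi\,\text{BYOL-VAR} = \dot\Phi_{\text{AC}} - \dot\Phi_{\Pi}$, where $\dot\Phi_{\text{AC}}$ and $\dot\Phi_{\Pi}$ are the dynamics analyzed in \cref{lem:trace-byol-ac,lem:trace-byol}.

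Next I would recall the structural form of each component dynamics established in those proofs: with $M = \Phi^T T^\pi\Phi$ and $M_a = \Phi^T T_a\Phi$ (symmetric by \cref{ass:symmetric-t,ass:symmetric-ta}), the semi-gradient of each objective is a projected gradient of its trace objective, $\dot\Phi_\bullet = \tfrac{1}{2|\mathcal{X}|}(I-\Phi\Phi^T)\nabla_\Phi f_\bullet$, with $\nabla_\Phi f_{\text{BYOL-}\Pi} = 4\,T^\pi\Phi M$ and $\nabla_\Phi f_{\text{BYOL-AC}} = 4|A|^{-1}\sum_a T_a\Phi M_a$; the projection $(I-\Phi\Phi^T)$ appears because $\Phi^T$ applied to $T^\pi\Phi M$ returns $M^2$ (and similarly $M_a^2$ per action). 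Since the scalar $\tfrac{1}{2|\mathcal{X}|}$ is identical in both and the gradient is linear, I obtain $\dot\Phi = \tfrac{1}{2|\mathcal{X}|}(I-\Phi\Phi^T)\,\nabla_\Phi\big(f_{\text{BYOL-AC}} - f_{\text{BYOL-}\Pi}\big) = \tfrac{1}{2|\mathcal{X}|}(I-\Phi\Phi^T)\,\nabla_\Phi f_{\text{BYOL-VAR}}$, precisely the trace objective in \cref{eq:trace-byol-var}.

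Finally I would verify the Lyapunov property. By \cref{lem:non-collapse-byol-var}, \cref{ass:orthogonal-init} is preserved, so $\Phi\Phi^T$ stays a symmetric idempotent projection and hence so does $I-\Phi\Phi^T$. Writing $g = \nabla_\Phi f_{\text{BYOL-VAR}}$, this gives $\tfrac{d}{dt} f_{\text{BYOL-VAR}} = \langle g, \dot\Phi\rangle = \tfrac{1}{2|\mathcal{X}|}\langle g, (I-\Phi\Phi^T)g\rangle = \tfrac{1}{2|\mathcal{X}|}\|(I-\Phi\Phi^T)g\|^2 \ge 0$, with equality exactly when $\dot\Phi = 0$. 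Hence $f_{\text{BYOL-VAR}}$ is monotonically non-decreasing along the ODE, $-f_{\text{BYOL-VAR}}$ is a Lyapunov function, and the flow converges to a critical point. The main obstacle I anticipate is bookkeeping rather than conceptual: I must confirm that the two component dynamics carry the \emph{same} scalar prefactor $\tfrac{1}{2|\mathcal{X}|}$ (this relies on the uniform-state normalization $D_X = |\mathcal{X}|^{-1} I$ and the uniform-policy weighting $|A|^{-1}$ entering identically), since only then does the difference collapse into a single projected gradient of $f_{\text{BYOL-VAR}}$; a mismatch would leave a residual term that breaks the clean Lyapunov identity.
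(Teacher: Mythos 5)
Your proposal is correct and follows essentially the same route as the paper: the paper likewise derives $\dot{\Phi}$ for BYOL-VAR as the difference of the AC and $\Pi$ dynamics (via the decoupled optimal predictors $P^\ast=\Phi^T T^\pi\Phi$, $P_a^\ast=\Phi^T T_a\Phi$), recognizes it as a projected gradient $\propto (I-\Phi\Phi^T)\nabla_\Phi f_{\text{BYOL-VAR}}$, and uses idempotency of the projection to show $\tfrac{\mathrm{d}}{\mathrm{d}t}\bigl(-f_{\text{BYOL-VAR}}\bigr)\propto-\mtrace\bigl(\dot{\Phi}^T\dot{\Phi}\bigr)<0$ off critical points. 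Your explicit remarks on the decoupling of the predictor subproblems and the matching of the scalar prefactors are exactly the bookkeeping the paper's proof relies on implicitly.
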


\begin{restatable}[BYOL-VAR ODE]{retheorem}{restatethmodevar}
\label{thm:ode-byol-var}
Under \cref{ass:orthogonal-init,ass:uniform-state,ass:symmetric-t,ass:uniform-policy,ass:symmetric-ta,ass:common-ta}, let $\Phi_{\text{VAR}}^*$ be any maximizer of the trace objective $f_{\text{BYOL-VAR}}(\Phi)$:
\begin{align}
    \textstyle \Phi_{\text{VAR}}^* \subseteq \argmax_{\Phi} \; |A|^{-1}\sum_a  \mtrace \left(\Phi^T T_a \Phi \Phi^T T_a \Phi\right)  - \mtrace \left(\Phi^T T^{\pi} \Phi \Phi^T T^{\pi} \Phi  \right)
\end{align}
Then $\Phi_{\text{VAR}}^*$ is a critical point of the ODE. Furthermore, the columns of $\Phi_{\text{VAR}}^*$ span the same subspace as the top-$k$ eigenvectors of $\left( |A|^{-1} \sum_a T_a^2 - (T^{\pi})^2\right)$.
\end{restatable}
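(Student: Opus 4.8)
The plan is to establish the two claims—that $\Phi_{\text{VAR}}^*$ is a critical point of the ODE, and that its columns span the top-$k$ eigenvectors of $M \coloneqq |A|^{-1}\sum_a T_a^2 - (T^\pi)^2$—by the same two-step strategy used for \cref{thm:ode-byol,thm:ode-byol-ac}. For the first claim I would invoke the Lyapunov structure of \cref{lem:trace-byol-var}: the two-timescale construction together with the non-collapse property (\cref{lem:non-collapse-byol-var}) makes the flow $\dot\Phi$ coincide with the tangent-space projection of gradient ascent on $f_{\text{BYOL-VAR}}$ over the Stiefel manifold $\{\Phi : \Phi^T\Phi = I\}$, so the critical points of the ODE are exactly the constrained critical points of $f_{\text{BYOL-VAR}}$. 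Any maximizer is such a point, giving the first claim immediately. The substance is the spectral identification in the second claim.

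For that, the key simplification is \cref{ass:common-ta}: writing $T_a = Q D_a Q^T$ and $T^\pi = Q \bar D Q^T$ with $\bar D = |A|^{-1}\sum_a D_a$, and using that $\Phi^T\Phi = I$ is preserved, I would substitute $\Phi = Q W$ with $W^T W = I$. Then $\Phi^T T_a \Phi = W^T D_a W$ and $\Phi^T T^\pi \Phi = W^T \bar D W$, so with the rank-$k$ projection $\Pi \coloneqq W W^T$ each term collapses, via the cyclic property and symmetry of $\Pi$, to $\mtrace\!\left((\Phi^T T_a \Phi)^2\right) = \mtrace\!\left((D_a \Pi)^2\right) = \sum_{i,j}(D_a)_i (D_a)_j \Pi_{ij}^2$. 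Subtracting as in \cref{eq:trace-byol-var} turns the objective into
\begin{align*}
f_{\text{BYOL-VAR}}(\Phi) = \sum_{i,j} C_{ij}\, \Pi_{ij}^2, \qquad C_{ij} \coloneqq |A|^{-1}\textstyle\sum_a (D_a)_i (D_a)_j - \bar d_i \bar d_j,
\end{align*}
where $\bar d_i$ is the $i$-th diagonal entry of $\bar D$, so that $C$ is precisely the covariance matrix of the per-action eigenvalues across $a \sim \mathrm{Unif}$, generalizing the scalar relation in \cref{rem:variance}. Its diagonal $C_{ii} = \mathrm{Var}_a((D_a)_i)$ is exactly the $i$-th eigenvalue of $M$, so the $k$ largest diagonal entries of $C$ index the top-$k$ eigenvectors of $M$ among the columns of $Q$.

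The main obstacle is then showing that the maximizer of $\sum_{i,j} C_{ij}\Pi_{ij}^2$ over rank-$k$ orthogonal projections $\Pi$ is attained at the coordinate projection onto the $k$ largest $C_{ii}$—equivalently, that the off-diagonal ($i\neq j$) covariance terms cannot be exploited to beat an eigenvector selection. I expect this to be exactly the extremal argument already driving \cref{thm:ode-byol-ac}, now applied to $C$ rather than to $|A|^{-1}\sum_a d_a d_a^T$: because $C = |A|^{-1}\sum_a d_a d_a^T - \bar d\,\bar d^T$ is a genuine covariance matrix, hence positive semidefinite, the quadratic form $\sum_{i,j}C_{ij}\Pi_{ij}^2$ is controlled by its diagonal through the inequality $C_{ij}^2 \le C_{ii}C_{jj}$ and the projection identity $\sum_j \Pi_{ij}^2 = \Pi_{ii}$; these are the levers that force an optimal $\Pi$ to be coordinate-aligned (all $\Pi_{ii}\in\{0,1\}$). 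I would therefore either reuse the BYOL-AC extremal argument verbatim for the matrix $C$, or supply a short standalone majorization/Lagrangian lemma verifying coordinate optimality for any PSD coefficient matrix. Once this is in place, the maximum equals $\sum_{i\in S}\mathrm{Var}_a((D_a)_i)$ over $|S|=k$, which selects the $k$ columns of $Q$ with the largest eigenvalues of $M$; since $\Phi_{\text{VAR}}^* = Q W^*$ with $W^*$ spanning those coordinate axes, its columns span the top-$k$ eigenvectors of $M$, completing the proof.
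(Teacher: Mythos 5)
Your proposal is correct and rests on the same two pillars as the paper's proof—rewriting the difference of traces as a single variance-type quadratic and then running an extremal trace argument—but it executes both steps differently. For the spectral claim, the paper stays at the matrix level: it rewrites $f_{\text{BYOL-VAR}}(\Phi) = |A|^{-1}\sum_a \mtrace\left(\Phi^T(T_a - T^{\pi})\Phi\,\Phi^T(T_a - T^{\pi})\Phi\right)$, invokes \cref{lem:max-trace} with $B_a = T_a - T^{\pi}$ (all of which share the eigenbasis $Q$ by \cref{ass:common-ta}), and finishes with the identity $|A|^{-1}\sum_a(T_a - T^{\pi})^2 = |A|^{-1}\sum_a T_a^2 - (T^{\pi})^2$. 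You perform the identical variance rewriting in eigen-coordinates, reducing to $\sum_{i,j}C_{ij}\Pi_{ij}^2$ with $C$ the eigenvalue covariance matrix, and then re-derive coordinate optimality from $|C_{ij}| \le \sqrt{C_{ii}C_{jj}} \le \tfrac{1}{2}(C_{ii}+C_{jj})$ together with $\sum_j \Pi_{ij}^2 = \Pi_{ii}$ and $\sum_i \Pi_{ii} = k$; that chain does close and is exactly the entrywise form of the bound inside \cref{lem:max-trace}, so the "standalone majorization lemma" you flag as the main obstacle is unnecessary—your $C$ is the eigenvalue matrix of the $B_a = T_a - T^{\pi}$ family and \cref{lem:max-trace} applies verbatim. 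For the criticality claim, the paper does not argue abstractly: it substitutes $Q_k C$ into the simplified flow $\dot{\Phi} = 2(I - \Phi\Phi^T)\left(|A|^{-1}|\mathcal{X}|^{-1}\sum_a T_a\Phi\Phi^T T_a\Phi - |\mathcal{X}|^{-1}T^{\pi}\Phi\Phi^T T^{\pi}\Phi\right)$ and checks it vanishes term by term. Your identification of ODE critical points with constrained critical points of the trace objective is valid here, but it is not quite "immediate": the flow projects the gradient onto the complement of the column space rather than onto the full Stiefel tangent space, so one should note that a Stiefel maximizer satisfies $(I - \Phi\Phi^T)\nabla f = 0$ (and that $\Phi^T \nabla f$ is automatically symmetric for this objective) before concluding $\dot{\Phi} = 0$; a one-line justification, or the paper's direct substitution, would make this airtight.
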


Like before, all $\Phi^*$, $\PhiAC^*$, and $\PhiVARAC^*$ correspond to subsets of eigenvectors from the same eigenspace of $Q$. Now we can complete this story of the variance relationship in how they pick eigenvectors.
\begin{tcolorbox}
[colback=lightgray!1!white,colframe=lightgray!75!black]
\begin{restatable}[Complete Variance Relation]{reremark}{restatermvarcomplete}
\label{rem:variance-complete}
\begin{align*}
    \underbrace{\mathbb{E}_{a \sim \mathrm{Unif}} \left[ D_a^2 \right]}_{\text{BYOL-AC}} &= \underbrace{\left( \mathbb{E}_{a \sim \mathrm{Unif}} \left[ D_a \right] \right)^2}_{\text{BYOL}-\Pi} + \underbrace{\mathrm{Var}_{a \sim \mathrm{Unif}}(D_a)}_{\text{BYOL-VAR}}
\end{align*}
\end{restatable}
\end{tcolorbox}

Intuitively, BYOL-VAR tries to learn a representation $\PhiVARAC^{*}$ that only captures features for distinguishing between actions.  In practice, our assumptions are unlikely to be satisfied. However, the intuition behind the variance relation (\cref{rem:variance-complete}) still gives us a valuable insight: $\Phi^*$ is concerned with meaningful features for $T^{\pi}$; $\PhiAC^{*}$ tries to capture meaningful features of $T_a$; $\PhiVARAC$ tries to only capture features that can distinguish across $T_a$.

\Cref{fig:picking_eigenvectors} shows an illustrative MDP with two actions demonstrating which eigenvectors each objective converges to. In the next section, we will present two unifying perspectives for comparing between the three objectives of BYOL-$\Pi$, BYOL-AC, and BYOL-VAR. \begin{figure}[h]
    \centering
    \includegraphics[width=0.28\columnwidth]{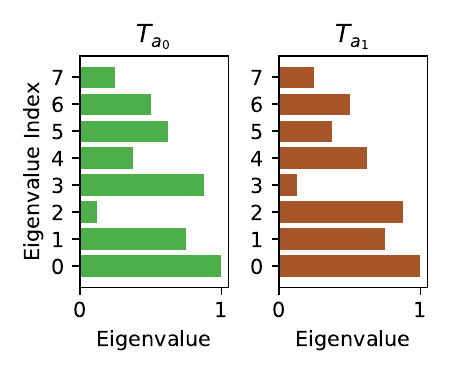}
    \includegraphics[width=0.28\columnwidth]{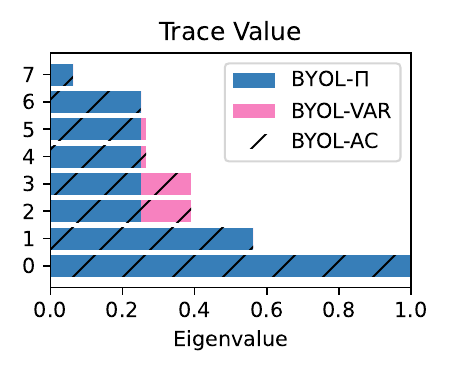}
    \includegraphics[width=0.28\columnwidth]{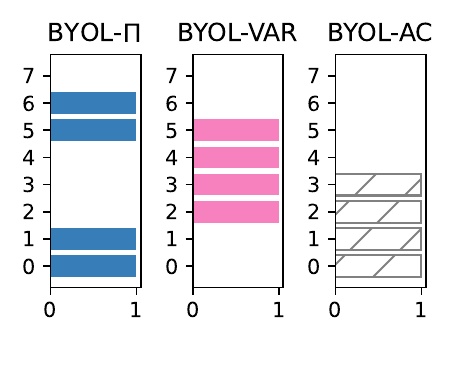}
    \caption{\textbf{On the representations across BYOL-$\Pi$, BYOL-AC, and BYOL-VAR.} We consider a simple MDP with two actions and corresponding transition functions $T_{a_0}, T_{a_1}$, with the eigenvalues of each action depicted in two leftmost plots. The middle plot shows a stacked bar plot of the trace objective values corresponding to each objective. The three rightmost plot shows each objective picking its top-$k$ ($k=4$) eigenvectors.}
    \label{fig:picking_eigenvectors}
\end{figure}

\section{Two Unifying Perspectives: Model-Based and Model-Free}
\label{sec:unifying}
The variance relationship between the three objectives BYOL-$\Pi$, BYOL-AC, and BYOL-VAR (\cref{rem:variance-complete}) provides a natural connection between the learned representations. However, it is somewhat abstract as it concerns itself with eigenvalues and eigenvectors under strict assumptions. In this section, we will unify the study of all three three objectives through two complimentary lenses, namely, a model-based perspective and a model-free perspective.

\subsection{Fitting Dynamics - A Model-Based View}
\label{sec:model-based-unified-view}
From the model-base perspective, we can derive an equivalence between each of the trace objectives akin to finding a low-rank approximation of certain transition dynamics.
\begin{restatable}[Unifying Model-Based View]{retheorem}{restatethmmodelbased}
\label{thm:byol-variants-modelbased-equiv}
Under~\Cref{ass:orthogonal-init,ass:uniform-state,ass:symmetric-t,ass:uniform-policy,ass:symmetric-ta,ass:common-ta}, the negative trace objectives of BYOL-$\Pi$, BYOL-AC, and BYOL-VAR are equivalent (up to a constant $\mathrm{C}$) to the following objectives ($\Vert \cdot \Vert_F$ is the Frobenius matrix norm):
\begin{align}
    - f_{\text{BYOL-}\Pi}(\Phi) &= \textstyle \min_P \; \Vert T^{\pi} - \Phi P \Phi^T \Vert_F + \mathrm{C} \label{eq:model-based-byol} \\
    - f_{\text{BYOL-AC}}(\Phi) &= \textstyle |A|^{-1} \sum_a \min_{P_a} \; \Vert T_a - \Phi P_a \Phi^T \Vert_F + \mathrm{C}  \label{eq:model-based-byol-ac} \\
    - f_{\text{BYOL-VAR}}(\Phi) &= \textstyle |A|^{-1} \sum_a \min_{P_{\Delta a}} \; \Vert (T_a - T^{\pi}) - \Phi P_{\Delta a} \Phi^T \Vert_F + \mathrm{C} \label{eq:model-based-byol-var}
\end{align}
Therefore, maximizing the trace (over orthogonal $\Phi$) results in BYOL-$\Pi$, BYOL-AC, and BYOL-VAR trying to fit a low-rank approximation of the dynamics matrix $T^{\pi}$, per-action transition matrix $T_a$, and the residual dynamics $(T_a - T^{\pi})$ respectively.
\end{restatable}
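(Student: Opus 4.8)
The plan is to reduce all three identities to a single lemma on optimal low-rank reconstruction of a symmetric matrix by an orthogonal projection, and then apply it termwise. Concretely, I would first establish that for any symmetric $S \in \real^{|\mathcal{X}|\times|\mathcal{X}|}$ and any $\Phi$ with $\Phi^T\Phi = I$ (the constraint preserved by \cref{lem:non-collapse-byol} and its BYOL-AC/BYOL-VAR analogues),
\begin{align*}
\textstyle\min_P \Vert S - \Phi P \Phi^T\Vert_F^2 = \Vert S\Vert_F^2 - \mtrace\!\big(\Phi^T S\Phi\,\Phi^T S\Phi\big),
\end{align*}
where I work throughout with the squared Frobenius norm, since that is the quantity whose minimization matches the quadratic trace objectives. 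The three claims then follow by taking $S = T^\pi$, $S = T_a$, and $S = T_a - T^\pi$ (each symmetric by \cref{ass:symmetric-t,ass:symmetric-ta}), averaging over $a$ where needed, and absorbing the data-dependent term $\Vert S\Vert_F^2$ into the constant $\mathrm{C}$.

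The core step is a completion of the square. Expanding,
\begin{align*}
\Vert S - \Phi P \Phi^T\Vert_F^2 = \Vert S\Vert_F^2 - 2\,\mtrace(S^T \Phi P \Phi^T) + \Vert \Phi P \Phi^T\Vert_F^2 .
\end{align*}
Using $S^T = S$ and cyclicity, the cross term is $\mtrace\big((\Phi^T S \Phi)P\big)$; using $\Phi^T\Phi = I$ twice, the last term collapses to $\mtrace(P^T P) = \Vert P\Vert_F^2$. Writing $M \coloneqq \Phi^T S \Phi$, which is symmetric, the objective is the strictly convex quadratic $\Vert S\Vert_F^2 - 2\,\mtrace(MP) + \Vert P\Vert_F^2$, minimized at $P^\ast = M = \Phi^T S \Phi$; substituting back gives the optimal value $\Vert S\Vert_F^2 - \mtrace(M^2)$, which is the lemma.

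Applying the lemma with $S = T^\pi$ yields \cref{eq:model-based-byol}, and averaging the $S = T_a$ case over $a$ with weight $|A|^{-1}$ yields \cref{eq:model-based-byol-ac}, since the residual trace terms are exactly $f_{\text{BYOL-}\Pi}(\Phi)$ and $f_{\text{BYOL-AC}}(\Phi)$ by \cref{lem:trace-byol,lem:trace-byol-ac}. For \cref{eq:model-based-byol-var}, the lemma with $S = T_a - T^\pi$ reduces the claim to the trace identity
\begin{align*}
\textstyle |A|^{-1}\sum_a \mtrace\!\big((\Phi^T(T_a - T^\pi)\Phi)^2\big) = f_{\text{BYOL-AC}}(\Phi) - f_{\text{BYOL-}\Pi}(\Phi).
\end{align*}
Setting $A_a \coloneqq \Phi^T T_a\Phi$ and $B \coloneqq \Phi^T T^\pi\Phi$, \cref{ass:uniform-policy} gives $T^\pi = |A|^{-1}\sum_a T_a$, hence $B = |A|^{-1}\sum_a A_a$. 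Expanding the left side as $|A|^{-1}\sum_a \mtrace\big((A_a - B)^2\big)$ and using that $B$ is the mean of the $A_a$ collapses the cross term, leaving $|A|^{-1}\sum_a\mtrace(A_a^2) - \mtrace(B^2)$, which is precisely $f_{\text{BYOL-VAR}}(\Phi)$ by \cref{lem:trace-byol-var}. This is the matrix-trace analogue of $\mathrm{Var}(X) = \E[X^2] - (\E X)^2$.

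The hard part — really the only nonroutine point — is the inner minimization: one must check that the \emph{unconstrained} minimizer over $P$ is automatically symmetric, so no symmetry constraint on $P$ is needed, which hinges on $M = \Phi^T S\Phi$ being symmetric; and one must invoke both $\Phi^T\Phi = I$ (for the quadratic term) and $S^T = S$ (for the cross term), since dropping symmetry of $S$ would leave a mismatched $\Phi^T S^T\Phi$ factor. Everything after the lemma is bookkeeping: termwise substitution, averaging, and the variance-style cancellation. I note that this argument uses only symmetry, orthogonality, and uniform policy, and in particular does \emph{not} require \cref{ass:common-ta}, which enters only the eigenvector-picking interpretation of \cref{thm:ode-byol-ac,thm:ode-byol-var}.
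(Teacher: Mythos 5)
Your proposal is correct and follows essentially the same route as the paper's proof: both expand the Frobenius objective as a trace, identify the inner minimizer $P^\ast = \Phi^T S \Phi$ (you by completing the square, the paper by setting the matrix derivative to zero), and handle the BYOL-VAR case via the variance-style cancellation using $T^\pi = |A|^{-1}\sum_a T_a$. Your observations that the norm must be read as the squared Frobenius norm and that \cref{ass:common-ta} is not actually needed are both accurate and consistent with the paper.
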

Thus we now have a direct relationship between BYOL-$\Pi$, BYOL-AC, and BYOL-VAR and $T^{\pi}$, $T_a$, and $(T_a - T^{\pi})$ respectively. The three ODEs are trying to learn good latent dynamics models. In this paper we focus on the representation $\Phi$, but future work could further investigate the learned dynamics $P$ and $P_a$, e.g. from a planning perspective.

\subsection{Fitting Value Functions - A Model-Free View}
\label{sec:mse-unifiedview}

Complimentary to the model-based view, we can rewrite the the maximizer to the trace objectives through a model-free lens. To do this, we assume an isotropic Gaussian reward function $R$ i.e. $E[RR^T] = |\mathcal{X}|^{-1} I$. One could try to fit the learned representation $\Phi$ to the value function $V$ as such
$\E_R \left[ \min_{\theta} \; \Vert V - \Phi \theta \Vert^2 \right] = \E_R \left[ \min_{\theta} \; \Vert (I - T^{\pi})^{-1}R - \Phi \theta \Vert^2 \right]$. In a similar vein, we can re-express the maximizers to the trace objectives as various 1-step value-like functions over these reward functions $R$.

\begin{restatable}[Unifying Model-Free View]{retheorem}{restatethmmodelfree}
\label{thm:byol-model-free-equiv}
Under~\Cref{ass:orthogonal-init,ass:uniform-state,ass:symmetric-t,ass:uniform-policy,ass:symmetric-ta,ass:common-ta}, the negative trace objectives of BYOL-$\Pi$, BYOL-AC, and BYOL-VAR are equivalent (up to a constant $\mathrm{C}$) to the following objectives:
\begin{align}
\label{eq:valuemse-objective}
    - f_{\text{BYOL-}\Pi}(\Phi) &= \textstyle |\mathcal{X}| \mathbb{E} \left[ \min_{\theta, \omega} \; \left( \Vert T^\pi R - \Phi \theta \Vert^2 + \Vert T^\pi \Phi \Phi^T R - \Phi \omega \Vert^2 \right) \right]  + \mathrm{C}  \\
\label{eq:qvaluemse-objective}
    - f_{\text{BYOL-AC}}(\Phi) &= \textstyle |\mathcal{X}| \mathbb{E} \left[ |A|^{-1} \sum_a \min_{\theta_a, \omega_a} \; \left( \Vert T_a R - \Phi \theta_a \Vert^2 + \Vert T_a \Phi \Phi^T R - \Phi \omega_a \Vert^2  \right) \right]  + \mathrm{C}  \\
\label{eq:advantagemse-objective}
    - f_{\text{BYOL-VAR}}(\Phi) &= \textstyle |\mathcal{X}| \mathbb{E} \big[ |A|^{-1} \sum_a \min_{\theta_a, \omega_a} \; \big( \Vert (T_a R - T^{\pi} R) - \Phi \theta \Vert^2 \nonumber \\
     &\qquad\qquad\qquad\qquad+  \Vert (T_a \Phi \Phi^T R - T^{\pi} \Phi \Phi^T R) - \Phi \omega \Vert^2 \big) \big]  + \mathrm{C} 
\end{align}
Therefore, maximizing the trace (over orthogonal $\Phi$) results in BYOL-$\Pi$, BYOL-AC, and BYOL-VAR trying to fit a certain 1-step value (V), Q-value, and Advantage function respectively.
\end{restatable}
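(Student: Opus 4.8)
The plan is to evaluate each inner least-squares minimization in closed form and then average over the random reward $R$. Since $\Phi$ is orthogonal ($\Phi^T\Phi = I$, \cref{ass:orthogonal-init}), the minimizer of $\min_\theta \Vert v - \Phi\theta\Vert^2$ is $\theta = \Phi^T v$, which gives the projection identity $\min_\theta \Vert v - \Phi\theta\Vert^2 = \Vert v\Vert^2 - \Vert \Phi^T v\Vert^2$. I would apply this with $v = T^\pi R$ (first term) and $v = T^\pi\Phi\Phi^T R$ (second term) for BYOL-$\Pi$, and analogously per action for the other two objectives. The second ingredient is the reward identity: for any fixed matrix $M$, $\E_R[\Vert MR\Vert^2] = \mtrace\!\big(M^T M\,\E[RR^T]\big) = |\mathcal{X}|^{-1}\Vert M\Vert_F^2$, which converts every squared norm above into a Frobenius norm of a matrix built from $T^\pi$ (or $T_a$) and $\Phi$.

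For BYOL-$\Pi$ this produces four Frobenius terms. Using symmetry of $T^\pi$ (\cref{ass:symmetric-t}) and $\Phi^T\Phi = I$ to cycle traces, I would simplify them as follows: $\Vert T^\pi R\Vert^2 \mapsto |\mathcal{X}|^{-1}\mtrace((T^\pi)^2)$ (a $\Phi$-independent constant); both $\Vert\Phi^T T^\pi R\Vert^2$ and $\Vert T^\pi\Phi\Phi^T R\Vert^2$ collapse to the same quantity $|\mathcal{X}|^{-1}\mtrace(\Phi^T (T^\pi)^2\Phi)$; and $\Vert \Phi^T T^\pi\Phi\Phi^T R\Vert^2 \mapsto |\mathcal{X}|^{-1}\mtrace(\Phi^T T^\pi\Phi\Phi^T T^\pi\Phi) = |\mathcal{X}|^{-1} f_{\text{BYOL-}\Pi}(\Phi)$. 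The crucial step is the coincidence that the projected part of the first term equals the clean part of the second term, so these two middle contributions cancel (telescope), leaving exactly $\mtrace((T^\pi)^2) - f_{\text{BYOL-}\Pi}(\Phi)$ after multiplying by $|\mathcal{X}|$; identifying $\mathrm{C} = \mtrace((T^\pi)^2)$ yields \cref{eq:valuemse-objective}.

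BYOL-AC is the same computation carried out with $T_a$ in place of $T^\pi$ (valid since each $T_a$ is symmetric by \cref{ass:symmetric-ta}), then averaged over $a$; the per-action telescoping gives \cref{eq:qvaluemse-objective} with $\mathrm{C} = |A|^{-1}\sum_a \mtrace(T_a^2)$. For BYOL-VAR I would apply the identical single-matrix argument with $M = T_a - T^\pi$ (symmetric as a difference of symmetric matrices) to obtain, per action, $\mtrace((T_a - T^\pi)^2) - \mtrace\!\big(\Phi^T(T_a - T^\pi)\Phi\,\Phi^T(T_a-T^\pi)\Phi\big)$. The final step is to recognize the algebraic identity $|A|^{-1}\sum_a \mtrace\!\big(\Phi^T(T_a-T^\pi)\Phi\,\Phi^T(T_a-T^\pi)\Phi\big) = f_{\text{BYOL-VAR}}(\Phi)$: writing $B_a = \Phi^T T_a\Phi$ and using $|A|^{-1}\sum_a B_a = \Phi^T T^\pi\Phi$, each of the two cross terms reduces to $\mtrace((\Phi^T T^\pi\Phi)^2)$, so the expansion collapses to $|A|^{-1}\sum_a\mtrace(B_a^2) - \mtrace((\Phi^T T^\pi\Phi)^2)$, which is exactly $f_{\text{BYOL-AC}}(\Phi) - f_{\text{BYOL-}\Pi}(\Phi)$ as in \cref{eq:trace-byol-var}. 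Averaging and scaling then gives \cref{eq:advantagemse-objective}.

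The computations are routine linear algebra, so the main obstacle is organizational rather than conceptual: keeping straight which trace simplifications rely on symmetry of the transition matrices versus on orthogonality of $\Phi$, and verifying the two cancellations — the telescoping of the two middle terms in each single-matrix case, and the cross-term cancellation in the variance identity for BYOL-VAR. I would also confirm that the constant $\mathrm{C}$ is genuinely $\Phi$-independent in each case (it is, since $\mtrace(M^2)$ does not involve $\Phi$), so that the claimed equivalence "up to a constant" holds as stated.
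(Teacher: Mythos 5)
Your proposal is correct and follows essentially the same route as the paper's proof: solve the inner least-squares in closed form via orthogonality of $\Phi$, use $\E[RR^T] = |\mathcal{X}|^{-1}I$ to convert expected squared norms into traces, observe the cancellation of the two middle terms for the single-matrix cases, and expand the cross terms with $T^\pi = |A|^{-1}\sum_a T_a$ to recover $f_{\text{BYOL-VAR}} = f_{\text{BYOL-AC}} - f_{\text{BYOL-}\Pi}$. The only difference is presentational (you phrase the projection residual as $\Vert v\Vert^2 - \Vert\Phi^T v\Vert^2$ where the paper writes $\Vert(I-\Phi\Phi^T)v\Vert^2$), which does not change the argument.
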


Note that there are two parts to the value-like function objective. The one on the left is the $1$-step value function where the agent takes one action and transitions to a final state with reward. The one on the right is similar, but instead of the actual reward, the agent obtains a projected reward $\Phi\Phi^T R$, which is the optimal approximation when trying to use the representation $\Phi$ to fit the reward function. For BYOL-AC, it is similar except we try to fit the 1-step Q-value-like function. Finally, BYOL-VAR tries to fit the 1-step advantage-like function.

\begin{tcolorbox}
[colback=lightgray!1!white,colframe=lightgray!75!black]
\textbf{Key Insight.} The various BYOL objectives can be unified through two complimentary lens: a model-based view, suggesting that BYOL-$\Pi$, BYOL-AC, and BYOL-VAR are best at capturing information about $T^\pi$, $T_a$, and $(T_a-T^\pi)$ respectively, and a model-free view, suggesting that each method minimizes its corresponding negative trace objectives, and are trying to fit a certain 1-step value (V), Q-value, and Advantage function respectively.   
\end{tcolorbox}

Notably, while the model-based and the model-free perspectives offer complimentary views, both highlight the three ODE objectives yield useful representations for various important RL quantities. However, this theory is not guaranteed to translate to empirical RL performance, because of many assumptions. Therefore, in the remainder of the paper, we will examine the learned representations in both linear function approximation and deep RL settings.

\section{Experiments} 
Having studied the learning dynamics of the three objectives theoretically, we next pose empirical questions about the BYOL-objectives affecting the RL performance in both linear (Sec.~\ref{sec:mse-stability-experiments}) and non-linear (Sec.~\ref{sec:deeprl}) function approximation settings. 

\subsection{Linear Function Approximation}
\label{sec:mse-stability-experiments}
First, we corroborate~\Cref{thm:byol-model-free-equiv} and~\Cref{thm:byol-variants-modelbased-equiv} empirically in a linear function approximation setting.
We consider randomly generated MDPs with $10$ states, $4$ actions and symmetric per-action dynamics $T_a$.
We learn a compressed representation with dimension 4 for each of BYOL-$\Pi$, BYOL-AC, and BYOL-VAR. Results in this section are averaged over $100$ runs with $95$\% standard error. \Cref{tab:unifyingmodelfree-linearexp} shows the values of the three negative trace objectives (rows) versus the representation learned by the three methods (columns). As predicted by the theory, we see that the smallest negative trace objective is attained by the corresponding ODE. $\Phi$ minimizes $-f_{\text{BYOL-}\Pi}$ (\cref{eq:model-based-byol,eq:valuemse-objective}) i.e. \textbf{Pr($\Phi$ is best)} is $99\%$,  $\PhiAC$ minimizes $-f_{\text{BYOL-AC}}$ (\cref{eq:model-based-byol-ac,eq:qvaluemse-objective}) i.e. \textbf{Pr($\PhiAC$ is the best)} $99\%$, whereas $\PhiVARAC$ minimizes  $-f_{\text{BYOL-VAR}}$ (\cref{eq:model-based-byol-var,eq:advantagemse-objective}) i.e. \textbf{Pr($\PhiVARAC$ is the best)} is $100\%$. 

\begin{table}[h!]
\caption{\textbf{Illustrating~\Cref{thm:byol-variants-modelbased-equiv} and \Cref{thm:byol-model-free-equiv}} empirically demonstrates that each method minimizes its corresponding negative trace objectives, which means BYOL-$\Pi$, BYOL-AC, and BYOL-VAR are best at capturing information about $T^\pi$, $T_a$, and $(T_a-T^\pi)$ respectively, and are trying to fit a certain 1-step value (V), Q-value, and Advantage function respectively.}
\label{tab:unifyingmodelfree-linearexp}
\centering
\begin{adjustbox}{width=\columnwidth,center}
\begin{tabular}{c|cc|cc|cc}
\hline
\cellcolor{aliceblue} \textbf{Method}   & \multicolumn{2}{c|}{\cellcolor{aliceblue} \textbf{BYOL-$\Pi$} [$\Phi$]} & \multicolumn{2}{c|}{\cellcolor{aliceblue}  \textbf{BYOL-AC} [$\PhiAC$]}  & \multicolumn{2}{c|}{\cellcolor{aliceblue} \textbf{BYOL-VAR } [$\PhiVARAC$]}  \\
\hline
\textbf{Objective}    &  &  \textbf{Pr($\Phi$ is best)} &    &  \textbf{Pr($\PhiAC$ is best)} &   & \textbf{Pr($\PhiVARAC$ is best)} \\ \hline
$-f_{\text{BYOL-}\Pi}$   & $-1.22 \pm 0.00$  & \cellcolor{lavender} \textbf{99}\%  & $-1.10\pm 0.01$ & 1\%  & $-0.04\pm 0.00$ & 0.\%       \\
$-f_{\text{BYOL-AC}}$ & $-1.31\pm 0.01$  & 1\%   & $-1.44\pm 0.00$ & \cellcolor{lavender} \textbf{99}\%  & $-0.55\pm 0.01$ & 0.\% \\
$-f_{\text{BYOL-VAR}}$ & $ -0.09 \pm 0.00$  & 0\%   & $-0.33 \pm 0.00$ & 0\%  & $-0.50 \pm 0.01$ & \cellcolor{lavender}\textbf{100}\% \\
\hline
\end{tabular}
\end{adjustbox}
\end{table}

Next, we consider the same three methods and fit the traditional V-MSE ($\E_R \left[ \min_{\theta} \; \Vert V - \Phi\theta \Vert^2 \right]$), Q-MSE and Advantage-MSE. \Cref{tab:value-mse} illustrates that both BYOL-$\Pi$ and BYOL-AC perform competitively in fitting the state-value reporting a V-MSE of $6.32$, and $6.48$ respectively, while fitting an action-value suffering a Q-MSE of $8.31$, and $8.01$ respectively. BYOL-VAR instead learns $\PhiVARAC$ which turns out be optimal for fitting the true Advantage MSE observed to be $0.43$ and \textbf{Pr($\PhiVARAC$ is best)} to be $100\%$.

\begin{table}[h!]
\caption{\textbf{Fitting various value functions to learned representations} for $\Phi$, $\PhiAC$, and $\PhiVARAC$. We report both the MSE and the probability of a representation being best.}
\label{tab:value-mse}
\centering
\begin{adjustbox}{width=\columnwidth,center}
\begin{tabular}{c|cc|cc|cc}
\hline
\cellcolor{aliceblue} \textbf{Method}   & \multicolumn{2}{c|}{\cellcolor{aliceblue} \textbf{BYOL-$\Pi$} [$\Phi$]} & \multicolumn{2}{c|}{\cellcolor{aliceblue}  \textbf{BYOL-AC} [$\PhiAC$]}  & \multicolumn{2}{c|}{\cellcolor{aliceblue} \textbf{BYOL-VAR } [$\PhiVARAC$]} \\
\hline
\textbf{Objective}    &  &  \textbf{Pr($\Phi$ is best)} &    &  \textbf{Pr($\PhiAC$ is best)} &   & \textbf{Pr($\PhiVARAC$ is best)} \\ \hline
V-MSE   & $6.32 \pm 0.06$ & 59\%  & $6.48 \pm 0.05$ & 41\%  & $10005.53 \pm 0.05$ & 0\%   \\
Q-MSE            & $8.31 \pm 0.35$  & 52\% & $8.01 \pm 0.30$ & 48\% & $10005.97 \pm 0.05$ & 0\%   \\
Advantage-MSE    & $0.76 \pm 0.01$  & 0\%  & $0.61 \pm 0.01$ & 0\%  & $0.43 \pm 0.01$ & \cellcolor{lavender} \textbf{100}\%  \\
\hline
\end{tabular}
\end{adjustbox}
\end{table}

Besides, we investigated the robustness of each representation to perturbations in the initial policy used to learn the representation in \cref{appsec:robustnessofphi-to-changes}. We report that $\PhiAC$ learned by BYOL-AC objective is much more robust to changes in the policy compared to BYOL-$\Phi$ and BYOL-VAR.

\subsection{Deep Reinforcement Learning}
\label{sec:deeprl}
In this section, we investigate the effects of the three objectives with V-MPO and DQN. Results in this section are averaged over 10 independent seeds with 95\% standard error in the error bands. We defer details on domains and hyper parameter tuning to ~\cref{appsec:minigrid} and \ref{appsec:openaigym}.

\textbf{V-MPO.} 
\label{sec:experiments-on-policy}
\begin{figure}[h!]
    \centering
    \includegraphics[width=1.0\linewidth]{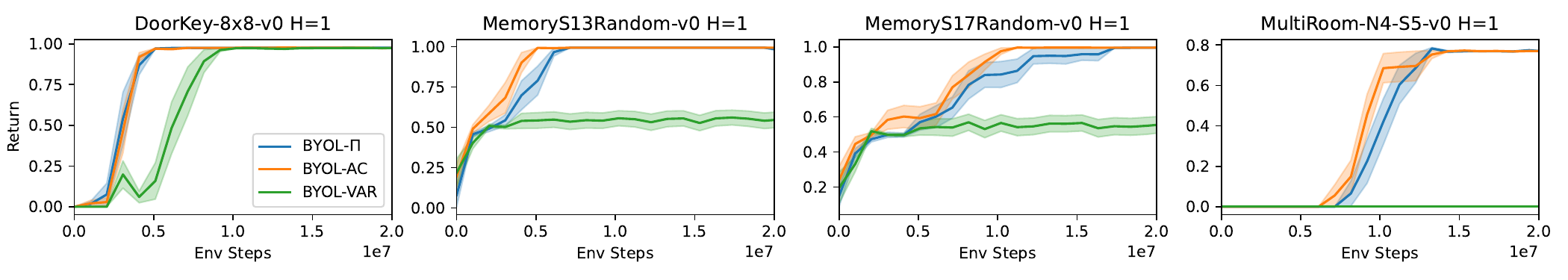}
    \caption{Comparing BYOL-$\Pi$, BYOL-AC, and BYOL-VAR augmented with a V-MPO agent in Minigrid. $\PhiAC$ is overall better than $\Phi$, whereas $\PhiVARAC$ is a weak baseline and struggles.}
    \label{fig:minigrid-deeprl}
\end{figure} We modify the implementation of a V-MPO~\citep{song2019v} agent by augmenting it with the auxiliary loss corresponding to BYOL-$\Pi$, BYOL-AC, and BYOL-VAR. We consider four domains in  Minigrid~\citep{MinigridMiniworld23}, namely, DoorKey-8x8-v0, MemoryS13Random-v0, MemoryS17Random-v0, and MultiRoom-N4-S5-v0. We defer details on domains and hyper parameter tuning to~\cref{appsec:minigrid} and we report the results multistep action predictions in Appendix~\cref{fig:minigrid-allhorizonsdeeprl}.
\textit{Results:} %
We observe in ~\Cref{fig:minigrid-deeprl} that $\PhiAC$ corresponding to BYOL-AC outperforms the other baselines in 3 out of 4 tasks, $\Phi$ corresponding to BYOL-$\Pi$ is on par with BYOL-AC in 1 out of 4 tasks, whereas $\PhiVARAC$ is a weak baseline and struggles in all 4 tasks. It turns out that for $\PhiVARAC$ corresponding to BYOL-VAR, because the objective is a difference of BYOL-AC and BYOL-$\Pi$, and when we use non-linear predictors (a departure from our theoretical setting)
the optimization behaves similarly to a min-max adversarial-style optimization, where it tries to minimize BYOL-AC and maximizes BYOL-$\Pi$. This means that $\PhiVARAC$ ends up trying to remove features that are good for BYOL-$\Pi$ i.e. $T^{\pi}$, resulting in a worse representation for RL.

\textbf{DQN.}
\label{sec:experiments-off-policy}
Next, we modify DQN's ~\citep{mnih2015human} implementation from \citet{gymnax2022github} by augmenting it with auxiliary losses corresponding to BYOL-$\Pi$ and BYOL-AC in~\Cref{fig:minigrid-deeprl} (because BYOL-VAR performed poorly in Minigrid, we do not evaluate it with DQN). We consider open-AI gym's~\citep{openaigym} classic control environments~\citep{sutton2018reinforcement}, namely, CartPole-v1, MountainCar-v0, and Acrobot-v1. %
\textit{Results:} We report that $\PhiAC$ corresponding to BYOL-AC outperforms $\Phi$ corresponding to BYOL-$\Pi$ in CartPole-v1 (leftmost) and  MountainCar-v0 (center), whereas BYOL-AC performs on par with BYOL-$\Pi$ in Acrobot-v1. %
\begin{figure}[h]
    \centering
    \includegraphics[width=0.28\linewidth]{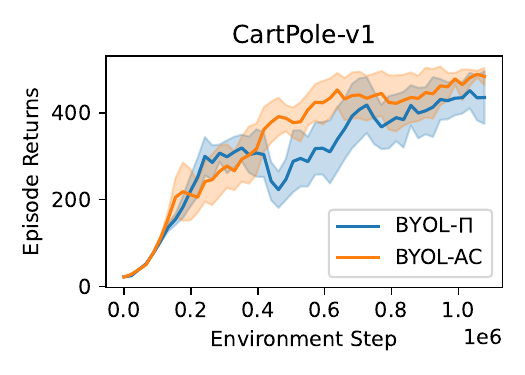}
    \includegraphics[width=0.28\linewidth]{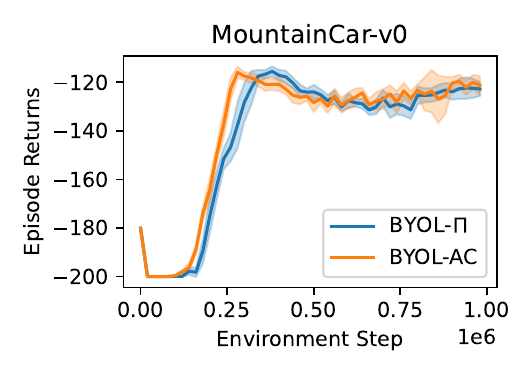}
    \includegraphics[width=0.28\linewidth]{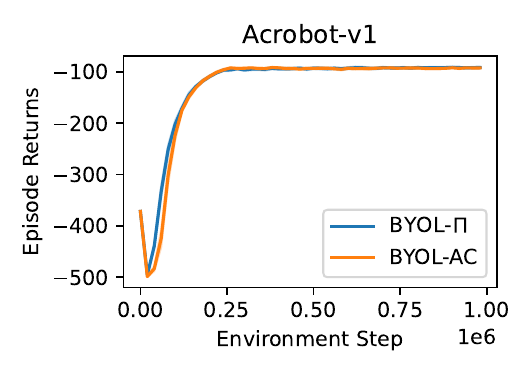}
    \caption{BYOL-AC (orange) is overall better when compared to BYOL-$\Pi$ (blue).}
   \label{fig:dqn-deeprl}
\end{figure} 

\section{Discussion}
\label{sec:discussion}
\textbf{In summary,} we extended previous theoretical analysis to an action-conditional BYOL-AC objective, showing that it learns spectral information about per-action transition dynamics $T_a$. This is in contrast to previously studied BYOL-$\Pi$ which learns spectral information about $T^{\pi}$. We discovered a variance equation that relates BYOL and BYOL-AC (\cref{rem:variance-complete}) in terms of the eigenvalues of $T_a$. This connection results in a novel variance-like objective BYOL-VAR, which learns spectral information pertaining to the residual $(T_a - T^{\pi})$. We unified the three objectives, BYOL-$\Pi$, BYOL-AC, BYOL-VAR, through a model-based lens, showcasing the connection to $T^{\pi}$, $T_a$, and $T_a - T^{\pi}$ respectively. We also unify them through a model-free lens, establishing the connection to learning a certain 1-step value, Q-value, and advantage function respectively. We believe that the variance connection and the two unifying lenses offer unique insights and intuitions into the characteristics of these objectives. The unifying viewpoints facilitate making connections back and forth, such as coming up with a new model-based-like objective, and deriving a corresponding self-predictive ODE objective. 

The \textbf{key takeaway} from our theoretical and empirical investigation is that BYOL-AC is overall a better objective resulting in a better representation $\PhiAC$, when compared to both $\Phi$ and $\PhiVARAC$. \textbf{Future work} could involve relaxing the assumptions in our theoretical analysis, with potential to generalize the theory. Further investigation into BYOL-VAR may also yield new insights and applications, e.g. since BYOL-VAR is concerned with features that distinguish between actions, it may be useful for learning action representations or even option discovery.

\section*{Acknowledgement}
We thank Tom Schaul for providing detailed comments and valuable feedback on a draft of this paper. We also thank Mohammad Gheshlaghi Azar, and Doina Precup for useful discussions.

\bibliography{references}

\newpage
\appendix

\section*{Appendix / Supplemental Material}

\section{Related Work}
\label{sec:relatedwork}

\textbf{Self-Predictive Representation Learning.} At the intersection of representation learning and self-supervised learning, using bootstrapped latent embeddings to train the representations has been an empirically successful approach for both image representation learning \citep{grill2020bootstrap, chen2021exploring}  and reinforcement learning~\citep{guo2020bootstrap}. 

\textbf{Action-Conditioned Predictive Representations.} Action-conditional predictions of the future where the prediction tasks are indicators of events on the finite observation space date back to foundational work introducing predictive state representations (PSRs)~\citep{littman2001predictive}. Deep learning approaches leveraging action-conditional predictions of the future to improve RL performance covers a broad range of ideas. For instance, interleaving an action-dependent (predictor) RNN with an observation dependent RNN~\citep{amos2018learning}, to predicting policies, rewards, values/logits needed for Monte Carlo tree search conditioned on actions~\citep{schrittwieser2020mastering} or options~\citep{oh2015action}. 

\textbf{Understanding Predictive Representations.} A key challenge in self-predictive learning is collapsing solutions.  Both \cite{grill2020bootstrap} and  \cite{guo2020bootstrap} propose BYOL variants where they leverage a target network to inform and train an online network for self-supervised image representation learning and reinforcement learning respectively. While \citet{grill2020bootstrap} posit the need for a  momentum encoder as a key requirement for BYOL to avoid collapsing, \citet{chen2021exploring} show that empirically the
stop-gradient operation is critical to avoid collapse. Both these methods are focused on image representations in iid settings though. With a focus on RL, our work builds upon \cite{tang2022understanding}, who identify and prove conditions for the non-collapse property of self-predictive learning through the lens of a theoretical ODE framework.

\textbf{Spectral Decomposition Lens for Representations in RL.} Recall that we show that BYOL-AC captures the spectral information about action transition matrices, whereas prior work shows that BYOL-$\Pi$ captures spectral information about $P^\pi$. The lens of spectral decomposition of $P^\pi$ or $(I-\gamma P^\pi)^{-1}$, together with eigenvector decomposition~\citep{ferguson2006proto, machado2017eigenoption, lyle2021effect}, and singular value decomposition~\citep{behzadian2019fast, ren2022spectral, chandak2023representations, lan2023bootstrapped} greatly facilitates representation learning research for reinforcement learning.

\section{Additional Experiments and Details}

\begin{figure}[h]
    \centering
    \includegraphics[width=1.0\linewidth]{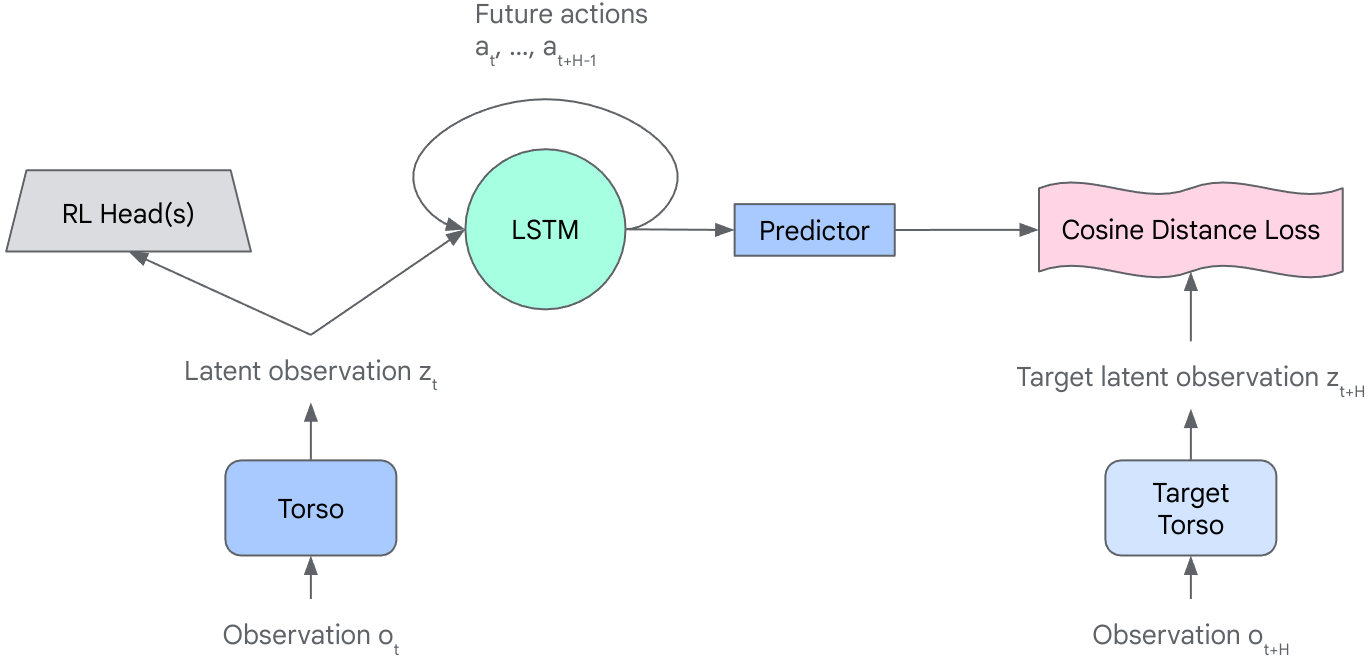}
    \caption{High Level Architecture of our RL Agent. Network details are in~\Cref{appsec:minigrid,appsec:openaigym}}
    \label{fig:networkdiagram}
\end{figure}

\subsection{V-MPO in Minigrid}
\label{appsec:minigrid}
In Minigrid~\citep{MinigridMiniworld23}, we here detail the description of each task: 1) DoorKey-8x8-v0, where the agent must pick up a key in the environment in order to unlock a door and then get to the green goal square, 2) MemoryS13Random-v0, where the agent starts in a small room with a visible object and then has to go through a narrow hallway which ends in a split. At each end of the split there is an object, one of which is the same as the object in the starting room. The agent has to remember the initial object, and go to the matching object at split, 3) MemoryS17Random-v0 is a bigger domain matching the description of the memory test in the previous environment, and 4) MultiRoom-N4-S5-v0, where the agent navigates through a series of connected rooms with doors that must be opened in order to get to the next room. The goal is to reach the final room which has the green square. Note that the 1-3 are fully observable domains, whereas 4 is a partially observable environment.

\begin{figure}[h]
    \centering
    \includegraphics[width=1.0\linewidth]{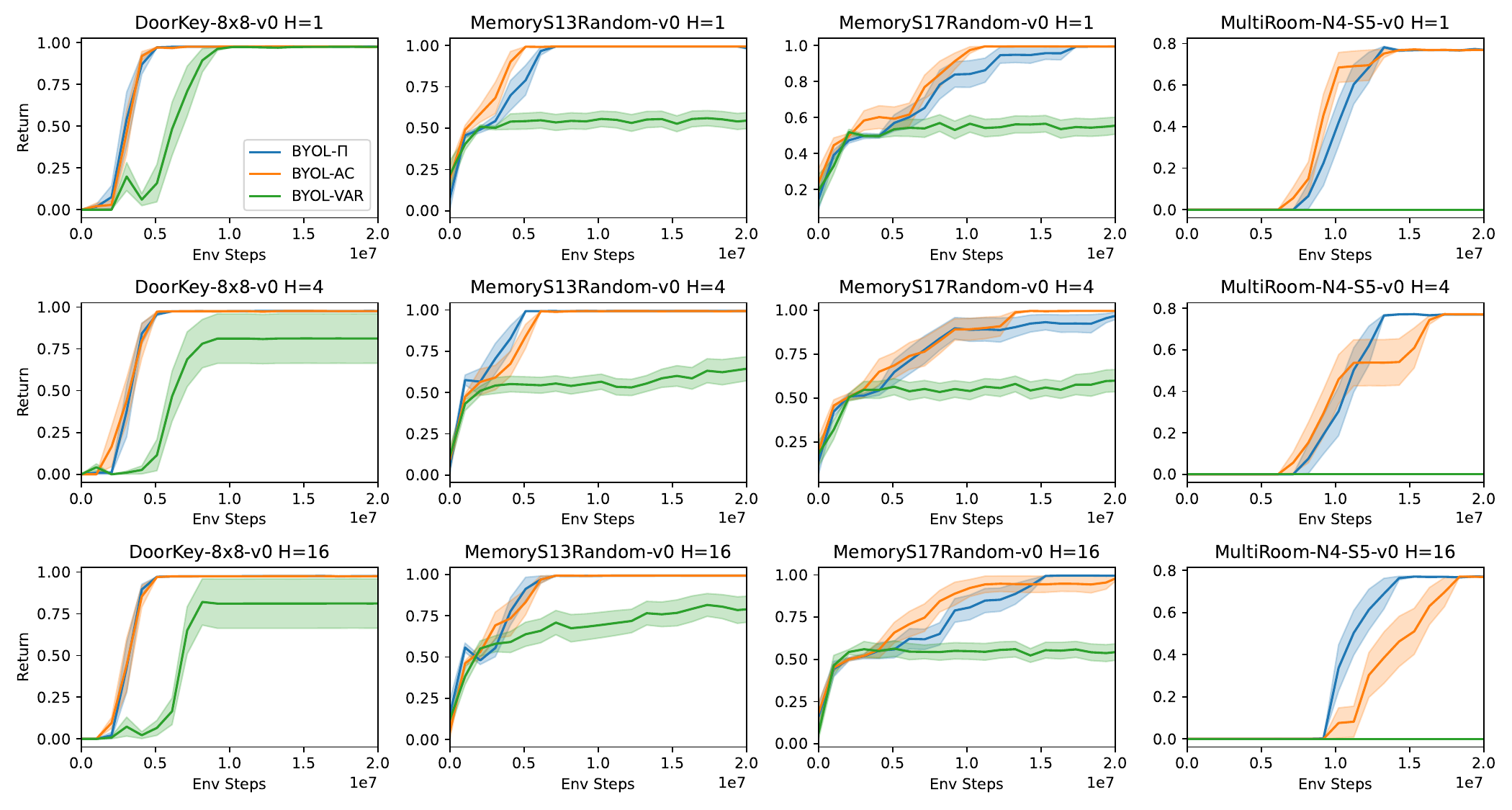}
    \caption{Comparing BYOL-$\Pi$, BYOL-AC, and BYOL-VAR on different domains in Minigrid across varying prediction horizons $H= 1, 4, 16$.}
    \label{fig:minigrid-allhorizonsdeeprl}
\end{figure}

For each baseline considered in Fig.~\ref{fig:minigrid-deeprl} and Fig~\ref{fig:minigrid-allhorizonsdeeprl}, we first tuned the hyper-parameters of the base RL algorithm i.e. V-MPO in this case. We then tuned the BYOL-$\Pi$ baseline, followed by running both BYOL-AC and BYOL-VAR for various horizons [H].

Our high level architecture is shown in~\Cref{fig:networkdiagram}. Here are the network details:
\begin{itemize}
    \item Torso: small ResNet
    \begin{enumerate}
        \item Conv2D[channel=32,stride=2,kernel=3]
        \item ResBlock[channel=32,stride=1,kernel=3]
        \item ResBlock[channel=32,stride=1,kernel=3]
        \item Conv2D[channel=128,stride=2,kernel=3]
        \item ResBlock[channel=128,stride=1,kernel=3]
        \item ResBlock[channel=128,stride=1,kernel=3]
        \item Conv2D[channel=256,stride=2,kernel=3]
        \item ResBlock[channel=256,stride=1,kernel=3]
        \item ResBlock[channel=256,stride=1,kernel=3]
        \item Flatten
        \item Linear[256]
    \end{enumerate}
    \item LSTM: Hidden size 256
    \item Predictor: MLP[128, 256, 512, 256]
    \item RL Head
    \begin{itemize}
        \item Value Head: MLP[512, 1]
        \item Policy Head: MLP[512, 7]
    \end{itemize}
\end{itemize}
And here are our hyperparameters.
\begin{itemize}
    \item Minibatch batch size: 48
    \item Minibatch sequence length: 30
    \item Adam Optimizer[learning\_rate=1e-4, b1=0.9, b2=0.999, eps=1e-8, eps\_root=1e-8]
\end{itemize}

\subsection{DQN in Open-AI Gym}
\label{appsec:openaigym}
For open-AI gym's~\citep{openaigym} classical domains~\citep{sutton2018reinforcement}, we consider: 1) Cartpole, where the a pole is placed upright on the cart and the goal is to balance the pole by applying forces in the left and right direction on the cart, 2) Acrobot, where the goal is to apply torques on an actuated joint to swing the free end of the linear chain above a given height while starting from the initial state of hanging downwards, and 3) Mountain Car, where a car is placed stochastically at the bottom of a sinusoidal valley and the goal is to accelerate the car to reach the goal state on top of the right hill.

For both BYOL-$\Pi$ and BYOL-AC here, we first tuned and fixed all the DQN specific parameters and then tuned to obtain the the best hyper-parameters for both methods. 

Our high level architecture is shown in~\Cref{fig:networkdiagram}. For these domains, we use a soft-dicretization of the observation space before passing it into our torso. We independently convert each observation dimension to a soft-one-hot encoding using a Gaussian distribution. For example, if we want to encode the value $a=0.2$ into a soft-one-hot with 11 bins, with lower bound 0 and upper bound 1, we first compute the distance between 0.2 and the 11 bin points ($b_i$) (0, 0.1, 0.2, $\dots$, 0.9, 1.0) using $e^{-\frac{1}{2} \left( \frac{a - b_i}{\sigma} \right)^2}$. Then we normalize so that this sums to one.
\paragraph{CartPole Network:}
\begin{itemize}
    \item Torso: MLP[128, 64]
    \item LSTM: Hidden size 256
    \item Predictor: MLP[256, 64]
    \item RL Head
    \begin{itemize}
        \item Q Head: MLP[128, 2]
    \end{itemize}
\end{itemize}
And here are our hyperparameters.
\begin{itemize}
    \item Minibatch batch size: 8
    \item Minibatch sequence length: 16
    \item Replay size 1e5
    \item Adam Optimizer[learning\_rate=1e-4]
    \item Soft Discretization
    \begin{itemize}
        \item 64 Bins
        \item $\sigma$ 0.1
        \item Observation lower bound [-5, -5, -0.5, -5]
        \item Observation lower bound [5, 5, 0.5, 5]
    \end{itemize}
\end{itemize}

\paragraph{MountainCar Network:}
\begin{itemize}
    \item Torso: MLP[64, 64]
    \item LSTM: Hidden size 256
    \item Predictor: MLP[256, 64]
    \item RL Head
    \begin{itemize}
        \item Q Head: MLP[256, 256, 2]
    \end{itemize}
\end{itemize}
And here are our hyperparameters.
\begin{itemize}
    \item Minibatch batch size: 8
    \item Minibatch sequence length: 16
    \item Replay size 1e5
    \item Adam Optimizer[learning\_rate=5e-4]
    \item Soft Discretization
    \begin{itemize}
        \item 32 Bins
        \item $\sigma$ 0.05
        \item Observation lower bound [-1.2, -0.07]
        \item Observation lower bound [0.6, 0.07]
    \end{itemize}
\end{itemize}

\paragraph{Acrobot Network:}
\begin{itemize}
    \item Torso: MLP[128, 64]
    \item LSTM: Hidden size 256
    \item Predictor: MLP[256, 64]
    \item RL Head
    \begin{itemize}
        \item Q Head: MLP[256, 256, 3]
    \end{itemize}
\end{itemize}
And here are our hyperparameters.
\begin{itemize}
    \item Minibatch batch size: 8
    \item Minibatch sequence length: 16
    \item Replay size 1e5
    \item Adam Optimizer[learning\_rate=5e-4]
    \item Soft Discretization
    \begin{itemize}
        \item 32 Bins
        \item $\sigma$ 0.1
        \item Observation lower bound [-1., -1., -1., -1., -12.57, -28.27]
        \item Observation lower bound [1., 1., 1., 1., 12.57, 28.27]
    \end{itemize}
\end{itemize}

\subsection{Computing and Libraries}
\label{appsec:libraries-and-compute}
All experiments were conducted using either TPU-V2 or L4 GPU instances. Libraries that enabled this work, include NumPy~\citep{oliphant2006guide}, SciPy~\citep{virtanen2020scipy}, Matplotlib~\citep{hunter2007matplotlib}, JAX~\citep{deepmind2020jax}, Gymnax~\citep{gymnax2022github}, Flashbax~\citep{flashbax}, and Flax~\citep{flax2020github}.

\section{Measuring robustness of $\Phi$ to changes in policy}
\label{appsec:robustnessofphi-to-changes}
Having examined how well a representation fits to different objectives under the same policy in Table~\ref{tab:value-mse}, we now investigate how robust each representation is to off-policy data. To establish a measure of robustness in the learned representation, we propose examining the distance in the representations with respect to the changes in policy $\pi$ and therefore to changes in the induced dynamics $P$ as follows.  %
Formally, for a given policy $\pi'$ obtained by perturbing $\pi$, we define
\begin{align*}
    \Delta(\Phi) &:= d_{\text{Gr}}(\Phi^*_\pi, \Phi^*_{\pi'}) \, ,
\end{align*}
where $\Phi^*_{\pi}, \Phi^*_{\pi'}$ are limit points of the dynamics for the ODEs under $\pi$, $\pi'$ respectively, with the same initialisation of $\Phi_0$, and $d_{\text{Gr}}$ is the Grassmann distance.

\begin{table}[h]
\caption{\textbf{Stability Analysis.} For each method, we report the $\Delta$ in the representation upon perturbation in the initial policy, and $P()$ denotes the probability of a method with minimal shift in the representation compared to the other two representations. We report the standard error in the bracket corresponding to 200 independent runs over randomly initialized policy to run the ODE to obtain $\Phi$.}
\label{tab:stabilityexperiments}
\centering
\begin{adjustbox}{width=\columnwidth,center}
\begin{tabular}{l|ll|ll|ll}
\hline
\cellcolor{aliceblue} \textbf{Method}   & \cellcolor{aliceblue} & \cellcolor{aliceblue} \textbf{BYOL-$\Pi$} [$\Phi$]  &   \cellcolor{aliceblue} \textbf{BYOL-AC} [$\PhiAC$]    & \cellcolor{aliceblue}  & \cellcolor{aliceblue} \textbf{BYOL-VAR } [$\PhiVARAC$] &   \cellcolor{aliceblue}\\
\hline
\textbf{Initial Policy}    & $\Delta(\Phi)$ &  \textbf{P($\Delta(\Phi) \leq (\Delta(\PhiAC), \Delta(\PhiVARAC)) $)} &   $\Delta(\PhiAC) $ &  \textbf{P($\Delta(\PhiAC) \leq (\Delta(\Phi), \Delta(\PhiVARAC)$)} &  $\Delta(\PhiVARAC)$ & \textbf{P($\PhiVARAC \leq (\Delta(\Phi), \Delta(\PhiAC)$)} \\ \hline
$\epsilon$-greedy ($\epsilon=0.01$)  & 0.032 (0.004) & 0.11  & 0.023 (0.007) &  \cellcolor{lavender} \textbf{0.89}  & 0.741 (0.03) & 0.0   \\
$\epsilon$-greedy ($\epsilon=0.03$)  & 0.042 (0.008)  & 0.095 & 0.014 (0.001) &  \cellcolor{lavender} \textbf{0.905} & 0.54 (0.03) & 0.0  \\
$\epsilon$-greedy ($\epsilon=0.1$)   & 0.037 (0.008)  & 0.095 & 0.027 (0.007) &  \cellcolor{lavender} \textbf{0.84}  & 0.243 (0.025) & 0.065  \\
$\epsilon$-greedy ($\epsilon=0.25$)  & 0.043 (0.008)  & 0.08  & 0.035 (0.009) &  \cellcolor{lavender} \textbf{0.76}  & 0.167 (0.02) & 0.16     \\
\hline
\end{tabular}

\end{adjustbox}
\end{table}

\section{Additional Intuition on Theory}
\label{sec:intuition_theory}

\subsection{Intuition and Implications of Assumptions.}

\restateassortho*
This assumption on the orthogonal initialization is relatively reasonable and often considered in deep RL methods as well. A random matrix where entries are taken from a unit normal distribution is highly likely to be close to orthogonal. This also suggests that a randomly initialized neural network may have a good chance to approximately satisfy this assumption depending on the input type.

\restateassuniformstate*
This uniform state assumption doesn't often hold in practice, and unfortunately is important for our proofs to hold. We have attempted to relax this assumption, but it requires considerable amount of work, and therefore is out of scope of this paper.

\restateasssymmetrict*
We note that while our theoretical guarantees might require the transition matrices to be symmetric, \citet{tang2022understanding} have shown it is possible to relax this assumption. To do so, we can consider doubly stochastic matrices, together with a bi-directional algorithm that models the backward transition process in addition to the forward transition dynamics. Leveraging this insight, all our results would still hold for non-symmetric transition dynamics. We here demonstrate the evolution of trace objective with time when the assumption of symmetric MDPs is relaxed. The numerical evidence here shows that the learning dynamics can still capture useful spectral information about the corresponding transition dynamics. Notably, the assumption is stricter for BYOL-VAR which shows the trace objective might not be capturing much useful information when this assumption is relaxed.

\begin{figure}[h!]
    \centering
    \includegraphics[width=1.0\linewidth]{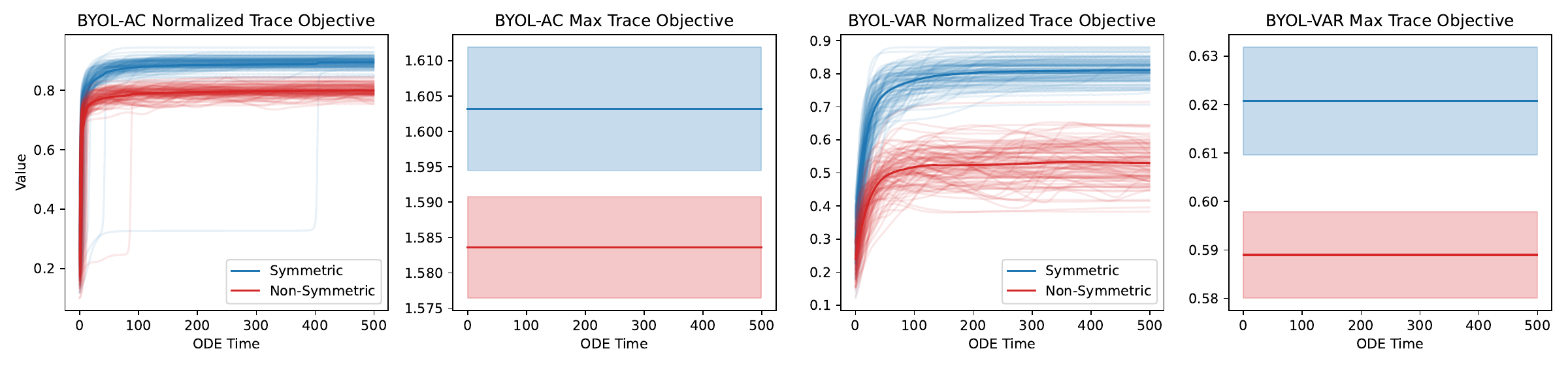}
    \caption{\textbf{Ratio between the trace objective and the value of the objective for the top k eigenvectors of $T_a$ and $T_a - T^\pi$} corresponding to BYOL-$\Pi$ and BYOL-AC respectively, versus the number of ODE training iterations. The light light curve corresponds to one of 100 independent runs over randomly generated MDPs, and the solid curve shows the median over runs.}
    \label{fig:tracesplot}
\end{figure}

\restateasssymmetricta*
Analogous to requiring the policy induced transition matrix $T^\pi$ to be symmetric, we also require the per-action matrices $T_a$ to be symmetric, which is a strong and impractical assumption. If this assumption is violated, the trace objective is no longer a Lyapunov function. We did a simple investigation where we ran the ODE without this assumption (\Cref{fig:tracesplot}) and it seems that even then the ODE does end up increasing the trace objective on average. One future avenue, where this assumption could potentially yield more interesting insights is the hierarchical RL setting. When considering options (multi-step action), this assumption can be relaxed for the primitive actions while only requiring the option-level transition matrix $T_o$ to be symmetric.

\restateasscommonta*

This common eigenspace assumption is mainly used to exactly specify what the maximizers of the trace objectives look like. Actually, the negative trace objectives being Lyapunov functions does not require this assumption. Our linear experiments in~\Cref{tab:unifyingmodelfree-linearexp} do not satisfy this assumption. Without this assumption though it becomes very difficult to specify what the maximizer looks like, and whether it is a critical point of the ODE. Thus while we make use of this assumption in our theory, it is not strictly necessary and is mainly used to obtain easily interpretable results. Notably, our key results encompassing the two unifying views do not require this assumption, and while the variance equation does require it, the intuition behind the variance equation still holds.

\restateassuniformpolicy*
We note that the assumption~\ref{ass:uniform-policy} is more strict than necessary. It is enough for the policy $\pi$ to be state-independent, i.e. the same distribution over actions at every state, and all the results would still hold with just being $\pi$-weighted. However we assume this stronger version to keep the theory simple and still retain all the important insights that we will analyze.
The choice of a uniform policy is meant to simplify the presentation. 

We here provide the objective with a different distribution $\pi$ over the actions to obtain similar results. For example, with $\pi$ instead of a uniform distribution, would result in:
\[
    f_{\text{BYOL-AC}}(\Phi) = \textstyle \sum_a \pi_a \mtrace \left(\Phi^T T_a \Phi \Phi^T T_a \Phi \right)
\]
and
\[
     f_{\text{BYOL-VAR}}(\Phi) = \sum_a \pi_a \mtrace \left(\Phi^T T_a \Phi \Phi^T T_a \Phi\right)  - \mtrace \left(\Phi^T T^{\pi} \Phi \Phi^T T^{\pi} \Phi  \right).
\]
Evidently, changing $\pi$ will accordingly change the maximizers of $f_{\text{BYOL-AC}}$ and $f_{\text{BYOL-VAR}}$, as well as the outcome of representation learning.

\subsection{Intuition Behind the Loss and Trace Objective.}

\textbf{On convergence:} Our main~\Cref{thm:ode-byol,thm:ode-byol-ac,thm:ode-byol-var} focus on analyzing the maximizer to the trace objectives in~\Cref{lem:trace-byol,lem:trace-byol-ac,lem:trace-byol-var}. We show that the maximizer of the trace objective is indeed a critical point of the corresponding ODE. However there can exist other, non-maximizer, critical points of the ODE. This means it is possible that the ODE converges to a sub-optimal critical point. Because we are interested in the optimal point, we don't dive further into the properties of the sub-optimal critical points. However, it can be a useful direction for future work to investigate whether the sub-optimal critical points are stable or unstable, and if there is a way to guarantee convergence to the maximum.

\section{Proofs}
\label{sec:app-proofs}

The Assumptions, Lemmas, and Theorems of~\Cref{sec:ode-byol} are taken (with some wording changes to be consistent with this work) from the results in~\citet{tang2022understanding}.

\subsection{Proofs of~\Cref{sec:byol-ac}: BYOL-AC}

We first present and prove a few helper lemmas about what are $P_a^*$ and $\dot{\Phi}$. Note as shorthand we let $\E[xx^T] = D_X$, which means $\E_{y \sim p(\cdot \mid x, a)}[xy^T] = D_X T_a$.

\subsubsection{Finding $P_a^*$}

\begin{restatable}[Optimal $P_a^*$]{relemma}{restatelempastar}
\label{lem:optimal-pa}
We have the following.
\begin{align*}
    P_a^* &\in \argmin_{P_a}  \E \Bigg[ \big| \big| P_a^T \Phi^T x - \sg(\Phi^T y)  \big| \big|^{2}_{2} \Bigg] \\
    \implies & \left( \Phi^T D_X \Phi \right) P_a^* = \Phi^T D_X T_a \Phi
\end{align*}
\end{restatable}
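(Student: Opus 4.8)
The plan is to treat the inner minimization over $P_a$ as a standard linear least-squares problem. For fixed $\Phi$, the map $P_a \mapsto \E[\|P_a^T\Phi^T x - \sg(\Phi^T y)\|_2^2]$ is a convex quadratic in $P_a$; the stop-gradient simply makes $\Phi^T y$ a fixed regression target that plays no role when differentiating with respect to $P_a$. Since the objective is convex in $P_a$, any minimizer is characterized exactly by the vanishing of the gradient, i.e. by the normal equations. Thus it suffices to compute $\nabla_{P_a}$ of the objective and set it to zero, rather than to argue anything about existence of a unique closed form.

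First I would expand the integrand. Writing $u = \Phi^T x$ and $v = \Phi^T y$, the objective is $\E[\,u^T P_a P_a^T u - 2 v^T P_a^T u + v^T v\,]$. Differentiating with respect to $P_a$ and using the matrix-calculus identity $\nabla_{P_a}\|P_a^T u - v\|_2^2 = 2u(u^T P_a - v^T)$, the stationarity condition $\E[\,u(u^T P_a - v^T)\,] = 0$ rearranges (by linearity, pulling the constant matrix $P_a$ outside the expectation) to $\E[uu^T]\,P_a = \E[uv^T]$.

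Next I would evaluate the two expectations using the moment identities and the fact that $\Phi$ is constant. Since $uu^T = \Phi^T x x^T \Phi$, we get $\E[uu^T] = \Phi^T \E[xx^T]\Phi = \Phi^T D_X \Phi$. Similarly $\E[uv^T] = \Phi^T \E[xy^T]\Phi$, and conditioning on the action held fixed at $a$ and invoking the stated shorthand $\E_{y \sim p(\cdot\mid x,a)}[xy^T] = D_X T_a$ yields $\E[uv^T] = \Phi^T D_X T_a \Phi$. Substituting gives exactly $(\Phi^T D_X\Phi)P_a^* = \Phi^T D_X T_a \Phi$, as claimed.

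The computation is routine, so the main points needing care are conceptual rather than technical: (i) confirming that the stop-gradient operator genuinely drops out of the $P_a$-gradient, so $\Phi^T y$ behaves as a fixed target; and (ii) being explicit that the relevant second moment is taken per action, i.e. $\E[xy^T] = D_X T_a$ with $a$ fixed rather than marginalized over $\pi$, which is precisely what distinguishes this step from the BYOL-$\Pi$ case and produces $T_a$ rather than $T^\pi$ on the right-hand side. Note that no invertibility of $\Phi^T D_X \Phi$ is required here, since the lemma asserts only the normal equation and not a closed form for $P_a^*$.
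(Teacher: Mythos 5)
Your proposal is correct and follows essentially the same route as the paper's proof: expand the quadratic in $P_a$ (with the stop-gradient target treated as fixed), take the matrix derivative, and set it to zero to obtain the normal equation $\E[\Phi^T x x^T \Phi]\,P_a^* = \E[\Phi^T x y^T \Phi]$, then substitute $\E[xx^T] = D_X$ and the per-action moment $\E[xy^T] = D_X T_a$. The paper phrases the computation via traces rather than the vector-form gradient identity, but the argument is the same, and your remarks about the stop-gradient dropping out and no invertibility being needed match the paper's treatment.
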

\begin{proof}
We first expand and rewrite the objective as a trace objective. Note that we can ignore the stop-gradient because we are only concerned with $P_a$.
{\allowdisplaybreaks
\begin{align*}
    & \E \Bigg[ \big| \big| P_a^T \Phi^T x - \Phi^T y  \big| \big|^{2}_{2} \Bigg] \\
    &= \E_{x, a, y \mid (x, a)} \left[ x^T \Phi P_a P_a^T \Phi^T x - 2 y^T \Phi P_a^T \Phi^T x + y^T \Phi \Phi^T y  \right] \\
    &= \frac{1}{|A|} \sum_a \E_{x, y \mid (x, a)} \left[ x^T \Phi P_a P_a^T \Phi^T x - 2 y^T \Phi P_a^T \Phi^T x + y^T \Phi \Phi^T y  \right] \quad \text{since $\pi$ is uniform} \\
    &= \frac{1}{|A|} \sum_a \E_{x, y \mid (x, a)} \left[ \mtrace \left( x^T \Phi P_a P_a^T \Phi^T x - 2 y^T \Phi P_a^T \Phi^T x + y^T \Phi \Phi^T y \right) \right] \\
    &= \frac{1}{|A|} \sum_a  \mtrace \left( \E[xx^T] \Phi P_a P_a^T \Phi^T - 2 \E[xy^T] \Phi P_a^T \Phi^T + \E[yy^T] \Phi \Phi^T \right) 
\end{align*}
}
The $\E[yy^T]$ term can be considered as just a constant since it does not depend on $P_a$. Thus we end up with
\begin{align*}
    &=  \frac{1}{|A|} \sum_a  \mtrace \left( D_X \Phi P_a P_a^T \Phi^T - 2 D_X T_a \Phi P_a^T \Phi^T \right) + \mathrm{Constant}
\end{align*}
Next, we take the derivative w.r.t. $P_a$.
\begin{align*}
    \frac{\partial}{\partial P_a} \;  \left( \frac{1}{|A|} \sum_a  \mtrace \left( D_X \Phi P_a P_a^T \Phi^T - 2 D_X T_a \Phi P_a^T \Phi^T \right) \right) &= \frac{1}{|A|} \left( 2 \Phi^T D_X \Phi P_a - 2 \Phi^T D_X T_a \Phi \right)
\end{align*}
Finally we use the fact that the derivative is zero for $P_a^*$:
\begin{align*}
    0 &= \frac{1}{|A|} \left( 2 \Phi^T D_X \Phi P_a - 2 \Phi^T D_X T_a \Phi \right) \\
    \implies \left( \Phi^T D_X \Phi \right) P_a^* &= \Phi^T D_X T_a \Phi
\end{align*}
\end{proof}

\subsubsection{Computing $\dot{\Phi}$}

\begin{restatable}{relemma}{restatelemphidotac}
\label{lem:phi-dot-ac}
$\dot{\Phi}$ satisfies
\begin{align*}
    \dot{\Phi} &= - \nabla_{\Phi} \mathbb{E} \Bigg[ \big| \big| (P_a)^{T} \Phi^{T} x - \sg(\Phi^{T} y)  \big| \big|^{2}_{2} \Bigg] \Bigg\vert_{P_a = P_a^{*}} \\
    &= - \frac{2}{|A|} \sum_a \left( D_X \Phi P_a^* - D_X T_a \Phi \right) (P_a^*)^T
\end{align*}
\end{restatable}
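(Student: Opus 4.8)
The plan is to mirror the structure of the preceding lemma on the optimal $P_a^*$, reusing the same expansion of the squared-error objective into trace form, but now differentiating with respect to $\Phi$ rather than $P_a$. The crucial difference from that earlier computation is that here two pieces of structure from the two-timescale setup must be respected simultaneously: first, $P_a$ is held fixed at the inner optimum $P_a^*$ (this is the semi-gradient, so we do \emph{not} propagate through the dependence of $P_a^*$ on $\Phi$), and second, the prediction target carries the stop-gradient $\sg(\Phi^T y)$, so the copy of $\Phi$ living inside the target must be frozen during differentiation.

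Concretely, I would fix an action $a$ and a sample $(x,y)$, replace the target $\sg(\Phi^T y)$ by a constant $t$, and expand $\ell = \| P_a^T \Phi^T x - t \|_2^2 = x^T \Phi P_a P_a^T \Phi^T x - 2\, t^T P_a^T \Phi^T x + t^T t$. Differentiating term by term in $\Phi$ gives the quadratic contribution $2\, x x^T \Phi P_a P_a^T$ (both $\Phi$'s in the prediction are live, and $P_a P_a^T$ is symmetric), the cross-term contribution $-2\, x\, t^T P_a^T$, and nothing from $t^T t$. Re-substituting $t = \Phi^T y$ yields
\begin{align*}
\nabla_\Phi\, \ell &= 2\, x x^T \Phi P_a P_a^T - 2\, x y^T \Phi P_a^T .
\end{align*}
Taking the expectation and using the identities $\E[xx^T] = D_X$ and $\E_{y\sim p(\cdot\mid x,a)}[xy^T] = D_X T_a$ (as recorded just before this lemma) collapses the two terms to $2 D_X \Phi P_a P_a^T - 2 D_X T_a \Phi P_a^T$. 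Averaging over actions under the uniform policy, evaluating at $P_a = P_a^*$, factoring out $(P_a^*)^T$ on the right, and finally negating (since $\dot{\Phi} = -\nabla_\Phi \text{BYOL-AC}$) produces exactly $-\tfrac{2}{|A|}\sum_a \big(D_X \Phi P_a^* - D_X T_a \Phi\big)(P_a^*)^T$, as claimed.

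The main obstacle is the stop-gradient bookkeeping in the cross term: a naive differentiation that treated $\Phi^T y$ as a live function of $\Phi$ would generate an extra term $-2\, x^T y \,\Phi\cdots$ contribution (differentiating the target factor) and would break the clean factorization by $(P_a^*)^T$. Getting this right is precisely where the $\sg$ operator does its work, and it is worth stating explicitly that only the prediction's $\Phi$ is differentiated. Everything else is routine matrix calculus and an appeal to the two moment identities for $D_X$ and $D_X T_a$, together with the uniform-policy averaging already used in the $P_a^*$ lemma.
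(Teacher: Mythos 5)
Your proposal is correct and follows essentially the same route as the paper: expand the squared error, use the stop-gradient to freeze the target's copy of $\Phi$, differentiate only the prediction's $\Phi$, and collapse the expectations via $\E[xx^T]=D_X$ and $\E[xy^T]=D_X T_a$ before factoring out $(P_a^*)^T$. The only cosmetic difference is that you differentiate pointwise and then take expectations, whereas the paper first writes the expected loss in trace form and then differentiates; the two orderings are interchangeable here.
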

\begin{proof}
We first expand out the objective into a trace objective. The steps are similar to the steps above in the proof for~\Cref{lem:optimal-pa}.
\begin{align*}
    & \mathbb{E} \Bigg[ \big| \big| (P_a)^{T} \Phi^{T} x - \sg(\Phi^{T} y)  \big| \big|^{2}_{2} \Bigg] \\
    &= \frac{1}{|A|} \sum_a  \mtrace \left( D_X \Phi P_a P_a^T \Phi^T   - 2 D_X T_a \sg(\Phi) P_a^T \Phi^T + \E[yy^T] \sg(\Phi \Phi^T) \right) \\
    &= \frac{1}{|A|} \sum_a  \mtrace \left( D_X \Phi P_a P_a^T \Phi^T  - 2 D_X T_a \sg(\Phi) P_a^T \Phi^T \right) + \mathrm{Constant}
\end{align*}
Next we inspect the derivative:
\begin{align*}
    & \frac{\partial}{\partial \Phi} \left( \frac{1}{|A|} \sum_a  \mtrace \left( D_X \Phi P_a P_a^T \Phi^T  - 2 D_X T_a \sg(\Phi) P_a^T \Phi^T \right) \right) \\
    &= \frac{2}{|A|} \sum_a \left( D_X \Phi P_a - D_X T_a \Phi \right) P_a^T
\end{align*}
Finally, we plug in $P_a^*$ to obtain $\dot{\Phi}$:
\begin{align*}
    \dot{\Phi} &= - \frac{2}{|A|} \sum_a \left( D_X \Phi P_a^* - D_X T_a \Phi \right) (P_a^*)^T
\end{align*}
\end{proof}

\subsubsection{Proof of~\Cref{lem:non-collapse-byol-ac} and Simplified $P_a^*$ and $\dot{\Phi}$}

\restatelemnoncollapseac*
\begin{proof}
\begin{align*}
    \Phi^T \dot{\Phi} &= \Phi^T \left( - \frac{2}{|A|} \sum_a \left( D_X \Phi P_a^* - D_X T_a \Phi \right) (P_a^*)^T \right) \\
    &= \left( - \frac{2}{|A|} \sum_a \left( \underbrace{\Phi^T D_X \Phi P_a^* - \Phi^TD_X T_a \Phi}_{=0\text{ from definition of $P_a^*$}}  \right) (P_a^*)^T \right) \\
    &= 0.
\end{align*}
Therefore, $\frac{\mathrm{d}}{\mathrm{d}t}\Phi_t^T \Phi_t = 0$, that is, $\Phi_t^T \Phi_t$ is a constant for all $t$.
Since $\Phi_0^T \Phi_0 = I$ by \cref{ass:orthogonal-init}, the result follows.
\end{proof}

As a consequence, and in combination with~\Cref{ass:orthogonal-init,ass:uniform-state,ass:symmetric-t,ass:uniform-policy,ass:symmetric-ta}, we can simplify our expressions for $P_a^*$ and $\dot{\Phi}$. Note that with a uniform distribution for $D_X$, we have $D_X = |\mathcal{X}|^{-1} I$.
\begin{align}
    P_a^* &= \Phi^T T_a \Phi \label{eq:optimal-pa-simplified}\\
    \dot{\Phi} &= \left( I - \Phi \Phi^T \right) \left( \frac{2}{|A||\mathcal{X}|} \sum_a  T_a \Phi \Phi^T T_a \Phi \right) \label{eq:phi-dot-ac-simplified}
\end{align}

\subsubsection{Proof of~\Cref{lem:trace-byol-ac}}

\restatelemtraceac*
\begin{proof}
To check that $-f_{\text{BYOL-AC}}$ is a Lyapunov function for the BYOL-AC ODE, we will verify that its time derivative is strictly negative (it is strictly decreasing) for all non critical points (a critical point being $\Phi$ where $\Phi_t = \Phi \Rightarrow \dot{\Phi} = 0$). By chain rule through trace we have
\begin{align*}
    \frac{\mathrm{d}}{\mathrm{d}t} \left( -f_{\text{BYOL-AC}}(\Phi_t) \right) &= -\mtrace \left( \frac{\partial}{\partial \Phi} \left( \frac{1}{|A|} \sum_a \Phi^T T_a \Phi \Phi^T T_a \Phi \right)^T \cdot \dot{\Phi} \right) \\
    &= -\mtrace \left( \left( \frac{1}{|A|} \sum_a \Phi^T T_a \Phi \Phi^T T_a \right) \cdot \dot{\Phi} \right) \\
    &= -\mtrace \left( \left( \frac{1}{|A|} \sum_a \Phi^T T_a \Phi \Phi^T T_a \right) \cdot \left( I - \Phi \Phi^T \right) \left( \frac{2}{|A||\mathcal{X}|} \sum_a  T_a \Phi \Phi^T T_a \Phi \right) \right)
\end{align*}
Since $\Phi$ is orthogonal, $(I - \Phi\Phi^T)$ is a projection matrix i.e. $(I - \Phi\Phi^T)(I - \Phi\Phi^T) = (I - \Phi\Phi^T)$. So we can add an extra projection in.
\begin{align*}
    &= -\mtrace \left( \left( \frac{1}{|A|} \sum_a \Phi^T T_a \Phi \Phi^T T_a \right) \left( I - \Phi \Phi^T \right) \cdot \left( I - \Phi \Phi^T \right) \left( \frac{2}{|A||\mathcal{X}|} \sum_a  T_a \Phi \Phi^T T_a \Phi \right) \right) \\
    &= -\frac{|\mathcal{X}|}{2} \mtrace \left( \dot{\Phi}^T \dot{\Phi} \right),
\end{align*}
which is strictly negative when $\dot{\Phi} \neq 0$.
\end{proof}

\subsubsection{Helper Lemmas for the Proof of~\Cref{thm:ode-byol-ac}}

Before we prove~\Cref{thm:ode-byol-ac}, we need to present two more helpful Lemmas. The first is the well-known Von Neumann trace inequality. The second concerns maximizing a particular constrained trace expression.

\begin{restatable}[Von Neumann Trace Inequality]{relemma}{restatelemvon}
\label{lem:von}
Let $A,B \in \real^{n \times n}$ with singular values $\alpha_1 \geq \alpha_2 \geq \dots \geq \alpha_n$ and $\beta_1 \geq \beta_2 \geq \dots \geq \beta_n$ respectively. Then
\begin{align*}
    \mtrace \left( AB \right) \leq \sum_{i=1}^n \alpha_i \beta_i
\end{align*}
\end{restatable}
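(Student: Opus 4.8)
The statement to prove is the classical Von Neumann trace inequality, so the plan is to reduce the bilinear trace $\mtrace(AB)$ to a sum against a doubly (sub)stochastic matrix and then invoke the rearrangement inequality. First I would write the singular value decompositions $A = U \Sigma V^T$ and $B = X \Gamma Y^T$, where $U,V,X,Y$ are orthogonal and $\Sigma = \diag(\alpha_1,\dots,\alpha_n)$, $\Gamma = \diag(\beta_1,\dots,\beta_n)$ hold the (sorted, nonnegative) singular values. Using the cyclic invariance of the trace, $\mtrace(AB) = \mtrace(\Sigma P \Gamma Q)$ where $P := V^T X$ and $Q := Y^T U$ are both orthogonal. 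Expanding the product and using that $\Sigma, \Gamma$ are diagonal gives the explicit scalar form
\begin{align*}
    \mtrace(AB) = \textstyle\sum_{i,j} \alpha_i \beta_j\, P_{ij} Q_{ji} \, .
\end{align*}

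The key structural step is to control the coefficient matrix $S$ with entries $S_{ij} := P_{ij} Q_{ji}$. Since $\alpha_i, \beta_j \geq 0$, I would first bound $\mtrace(AB) \leq \sum_{i,j} \alpha_i \beta_j |S_{ij}|$. Applying the AM--GM inequality entrywise, $|S_{ij}| = |P_{ij}|\,|Q_{ji}| \leq \tfrac{1}{2}(P_{ij}^2 + Q_{ji}^2)$, and then summing over $j$ (respectively over $i$) and using that the rows of $P$, the columns of $Q$, the columns of $P$, and the rows of $Q$ all have unit Euclidean norm (orthogonality), I would conclude that $|S|$ has all row sums and column sums at most $1$; that is, $|S|$ is doubly substochastic.

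To finish, I would dominate $|S|$ entrywise by a genuine doubly stochastic matrix $D$ (any nonnegative doubly substochastic matrix admits such a $D \geq |S|$), so that $\sum_{i,j}\alpha_i\beta_j |S_{ij}| \leq \sum_{i,j}\alpha_i\beta_j D_{ij}$ by nonnegativity of the $\alpha_i,\beta_j$. By the Birkhoff--von Neumann theorem, $D = \sum_k \lambda_k \Pi_k$ is a convex combination of permutation matrices $\Pi_k$ associated to permutations $\sigma_k$, whence $\sum_{i,j}\alpha_i\beta_j D_{ij} = \sum_k \lambda_k \sum_i \alpha_i \beta_{\sigma_k(i)}$. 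Since both $(\alpha_i)$ and $(\beta_i)$ are sorted in decreasing order, the rearrangement inequality gives $\sum_i \alpha_i \beta_{\sigma(i)} \leq \sum_i \alpha_i \beta_i$ for every permutation $\sigma$, and averaging over the convex combination preserves the bound, yielding $\mtrace(AB) \leq \sum_i \alpha_i \beta_i$.

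I expect the main obstacle to be the structural argument in the middle paragraph: passing from the orthogonality of $P,Q$ to the double substochasticity of $|S|$ cleanly, and then justifying the entrywise domination by a doubly stochastic matrix so that Birkhoff--von Neumann and rearrangement can be applied. The SVD reduction and the final rearrangement step are routine, but the doubly substochastic bookkeeping is where care is needed to keep the inequalities correctly oriented.
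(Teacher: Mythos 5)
Your proof is correct. Note, however, that the paper itself offers no proof of this lemma: it is stated as a classical fact (von Neumann's trace inequality) and used as a black box in the proof of Lemma~\ref{lem:max-trace}, so there is no in-paper argument to compare against. Your write-up is the standard textbook derivation, and each step checks out: the SVD reduction gives $\mtrace(AB) = \sum_{i,j}\alpha_i\beta_j P_{ij}Q_{ji}$ with $P = V^T X$ and $Q = Y^T U$ orthogonal; the AM--GM bound $|P_{ij}Q_{ji}| \leq \tfrac{1}{2}(P_{ij}^2 + Q_{ji}^2)$ combined with the unit-norm rows and columns of $P$ and $Q$ does show that $|S|$ is doubly substochastic; the domination of a doubly substochastic matrix by a doubly stochastic one is a standard fact (and the nonnegativity of the singular values is correctly invoked to preserve the inequality under that domination); and Birkhoff--von Neumann plus the rearrangement inequality for two identically sorted sequences closes the argument. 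The only step you flag as delicate --- the existence of a doubly stochastic $D \geq |S|$ entrywise --- is genuinely a lemma that would need its own short proof (or a citation, e.g.\ to Marshall--Olkin) in a fully self-contained treatment, but it is true and its use here is sound.
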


\begin{restatable}[Maximizer of Constrained Trace Expression]{relemma}{restatelemmaxtrace}
\label{lem:max-trace}
Let $B_a \in \real^{n \times n}$ be symmetric matrices for $a \in \{1, 2, \dots, |A| \}$ and $\Phi \in \real^{n \times k}$ where $n \geq k$. Assume all $B_a$ share the same eigenvectors, so $B_a = Q D_a Q^T$ is an eigendecomposition of $B_a$ and $D_a$ is a diagonal matrix of eigenvalues $\beta_{a,1}, \beta_{a,2}, \dots, \beta_{a,n}$. Let $Q = [Q_k \; \overline{Q}_k]$ where $Q_k \in \real^{n \times k}$ are the first $k$ columns of $Q$ (and $\overline{Q}_k$ are the rest of the columns). Let the eigenvectors in $Q$ be sorted from largest to smallest according to $\frac{1}{|A|} \sum_a D_a^2$ i.e. $Q_k$ is the top-$k$ eigenvectors for $\frac{1}{|A|} \sum_a B_a^2$. Then under the constraint that $\Phi$ is orthogonal ($\Phi^T \Phi = I$), for any orthogonal matrix $C \in \real^{k \times k}$, we have
\begin{align*}
    Q_k C \in \argmax_{\Phi} \; \frac{1}{|A|} \sum_a \mtrace \left( \Phi^T B_a \Phi \Phi^T B_a \Phi \right).
\end{align*}
In other words, a matrix whose columns span the same subspace as the columns of $Q_k$ is a maximizer of the constrained trace expression.
\end{restatable}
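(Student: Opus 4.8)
The plan is to pass to the shared eigenbasis $Q$, turn the objective into a function of a single rank-$k$ orthogonal projection, and then split the problem into a per-action algebraic inequality followed by an elementary linear-programming bound. First I would set $M := Q^T \Phi$. Since $Q$ is orthogonal, the map $\Phi \mapsto M$ is a bijection of the orthonormal $n \times k$ matrices onto themselves (as $\Phi^T \Phi = M^T M$), so maximizing over orthonormal $\Phi$ is the same as maximizing over $M$ with $M^T M = I$. Using $B_a = Q D_a Q^T$ gives $\Phi^T B_a \Phi = M^T D_a M$, and hence
\begin{align*}
\frac{1}{|A|}\sum_a \mtrace\!\left(\Phi^T B_a \Phi \Phi^T B_a \Phi\right) = \frac{1}{|A|}\sum_a \mtrace\!\left(D_a W D_a W\right),
\end{align*}
where $W := M M^T$. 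The point of this change of variables is that $W$ is an orthogonal projection of rank $k$: it satisfies $W = W^T$, $W^2 = W$, and $\mtrace(W) = k$.

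Next I would exploit two consequences of $W^2 = W$. Writing $D_a = \diag(\beta_{a,1},\dots,\beta_{a,n})$, a direct expansion gives $\mtrace(D_a W D_a W) = \sum_{i,j}\beta_{a,i}\beta_{a,j}W_{ij}^2$, while $(W^2)_{ii}=W_{ii}$ yields the identity $\sum_j W_{ij}^2 = W_{ii}$ and the bounds $0 \le W_{ii} \le 1$. The key step is the per-action inequality
\begin{align*}
\sum_i \beta_{a,i}^2 W_{ii} - \mtrace(D_a W D_a W) = \sum_{i,j}\beta_{a,i}(\beta_{a,i}-\beta_{a,j})W_{ij}^2 = \tfrac{1}{2}\sum_{i,j}(\beta_{a,i}-\beta_{a,j})^2 W_{ij}^2 \ge 0,
\end{align*}
where the first equality uses $\sum_j W_{ij}^2 = W_{ii}$ and the second uses $W_{ij}=W_{ji}$ to symmetrize in $i \leftrightarrow j$. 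Thus $\mtrace(D_a W D_a W) \le \sum_i \beta_{a,i}^2 W_{ii}$, with equality exactly when $(\beta_{a,i}-\beta_{a,j})W_{ij}=0$ for all $i,j$.

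Summing this over $a$ and writing $g_i := |A|^{-1}\sum_a \beta_{a,i}^2$ (the $i$-th eigenvalue of $|A|^{-1}\sum_a B_a^2$) reduces the objective to the linear bound $\sum_i g_i W_{ii}$ subject to $W_{ii}\in[0,1]$ and $\sum_i W_{ii}=k$. Since the $g_i$ are sorted in decreasing order by assumption, this is maximized by placing unit weight on the top $k$ coordinates, giving $\sum_{i=1}^k g_i$; hence the objective is at most $\sum_{i=1}^k g_i$ for every orthonormal $\Phi$. It then remains to check that $\Phi = Q_k C$ attains the bound: there $M = \left(\begin{smallmatrix} C \\ 0 \end{smallmatrix}\right)$ and $W = \left(\begin{smallmatrix} I_k & 0 \\ 0 & 0 \end{smallmatrix}\right)$, so $W$ is diagonal with $W_{ii}=1$ for $i\le k$ and $0$ otherwise; the off-diagonal terms vanish, the per-action inequality is tight, and $\sum_i g_i W_{ii}=\sum_{i=1}^k g_i$. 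Therefore $Q_k C$ is a maximizer, which is the claim.

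The main obstacle is the cross (off-diagonal) terms $\beta_{a,i}\beta_{a,j}W_{ij}^2$, which may have either sign and which block any naive attempt to bound $\mtrace(D_a W D_a W)$ by a function of the diagonal of $W$ alone. The device that resolves this is the symmetrization into $\tfrac12\sum_{i,j}(\beta_{a,i}-\beta_{a,j})^2 W_{ij}^2 \ge 0$ together with the projection identity $\sum_j W_{ij}^2=W_{ii}$; this cleanly decouples the estimate into a diagonal linear program and, for this particular quadratic-in-$\Phi$ objective, makes the signposted majorization route via Von Neumann (\cref{lem:von}) unnecessary.
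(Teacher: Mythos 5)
Your proof is correct. It shares the paper's overall architecture (a two-stage upper bound of $\sum_{i=1}^k g_i$ on the objective, followed by verifying that $\Phi = Q_k C$ attains it), but the techniques for both stages are genuinely different. For the first stage, the paper majorizes the quartic expression by $\mtrace\bigl(\bigl(|A|^{-1}\sum_a B_a^2\bigr)\Phi\Phi^T\bigr)$ by adding the manifestly nonnegative term $|A|^{-1}\sum_a\mtrace\bigl(\Phi^T B_a(I-\Phi\Phi^T)B_a\Phi\bigr)$; you reach the same intermediate bound (since $\mtrace\bigl(\bigl(|A|^{-1}\sum_a B_a^2\bigr)\Phi\Phi^T\bigr)=\sum_i g_i W_{ii}$ with $W=Q^T\Phi\Phi^T Q$) via the entrywise expansion $\mtrace(D_aWD_aW)=\sum_{i,j}\beta_{a,i}\beta_{a,j}W_{ij}^2$, the projection identity $\sum_j W_{ij}^2=W_{ii}$, and symmetrization into $\tfrac12\sum_{i,j}(\beta_{a,i}-\beta_{a,j})^2W_{ij}^2\ge 0$. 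For the second stage, the paper invokes the Von Neumann trace inequality (\cref{lem:von}), whereas you solve the elementary linear program $\max\sum_i g_i W_{ii}$ subject to $W_{ii}\in[0,1]$, $\sum_i W_{ii}=k$, which follows directly from $W$ being a rank-$k$ orthogonal projection. Your route is more self-contained (no external majorization lemma) and yields an explicit equality condition $(\beta_{a,i}-\beta_{a,j})W_{ij}=0$ as a bonus, which could be used to characterize the full set of maximizers; the paper's route is shorter on the page and reuses a standard inequality. The attainment check for $\Phi=Q_kC$ is the same computation in both, just expressed in the $W$-coordinates in your version.
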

\begin{proof}
We first establish an upper bound on the trace expression.
\begin{align*}
    0 &\leq \frac{1}{|A|} \sum_a  \mtrace \left(  \underbrace{\Phi^T B_a (I - \Phi \Phi^T) (I - \Phi \Phi^T) B_a \Phi }_{\text{positive semi-definite}} \right) \\
    &= \frac{1}{|A|} \sum_a  \mtrace \left( \Phi^T B_a \underbrace{(I - \Phi \Phi^T)}_{\mathclap{\text{projection matrix}}} B_a \Phi \right)
\end{align*}
where $(I - \Phi^T \Phi)$ is a projection since $\Phi$ is orthogonal. Then we have
\begin{align*}
    & \frac{1}{|A|} \sum_a \mtrace \left( \Phi^T B_a \Phi \Phi^T B_a \Phi \right) \\
    &\leq \frac{1}{|A|} \sum_a \mtrace \left( \Phi^T B_a \Phi \Phi^T B_a \Phi \right) + \frac{1}{|A|} \sum_a  \mtrace \left( \Phi^T B_a (I - \Phi \Phi^T) B_a \Phi \right) \\
    &= \frac{1}{|A|} \sum_a \mtrace \left( \Phi^T B_a^2 \Phi \right) \\
    &= \mtrace \left( \left( \frac{1}{|A|} \sum_a B_a^2 \right)  \Phi \Phi^T  \right) \quad \text{cyclic property of trace}
\end{align*}
Since $\Phi$ is orthogonal, we also have that $\Phi \Phi^T$ is a projection matrix. $\Phi \Phi^T$ is also symmetric, which means that it has an eigendecomposition. Due to being a projection, its eigenvalues must either be $0$ or $1$. More specifically, since $\Phi$ has rank $k$, it must be that $k$ of its eigenvalues $1$ and the rest are $0$. Then by the Von Neumann Trace Inequality (\Cref{lem:von}), we can bound
\begin{align*}
    \mtrace \left( \left( \frac{1}{|A|} \sum_a B_a^2 \right) \Phi \Phi^T  \right) &\leq \sum_{i=1}^k \left( \frac{1}{|A|} \sum_a \beta_{a, i}^2 \right) 
\end{align*}
i.e. it is bounded above by the sum of the top-$k$ eigenvalues of $\left( \frac{1}{|A|} \sum_a B_a^2 \right)$, since the eigenvalues of $\Phi\Phi^T$ essentially act as a filter, picking out $k$ eigenvalues of $\left( \frac{1}{|A|} \sum_a B_a^2 \right)$. Thus, to summarize what we have so far, we have the following upper bound
\begin{align}
    \frac{1}{|A|} \sum_a \mtrace \left( \Phi^T B_a \Phi \Phi^T B_a \Phi \right) \leq \sum_{i=1}^k \left( \frac{1}{|A|} \sum_a \beta_{a, i}^2 \right) 
\end{align}
Note that is a global upper bound since it holds for any $\Phi$. Next we will show that $\Phi = Q_k C$ actually attains the upper bound, and is thus a maximizer. We plug this into the trace expression.
{\allowdisplaybreaks
\begin{align*}
    & \frac{1}{|A|} \sum_a \mtrace \left( (Q_k C)^T B_a (Q_k C) (Q_k C)^T B_a (Q_k C) \right) \\
    &= \frac{1}{|A|} \sum_a \mtrace \left( Q_k^T B_a Q_k Q_k^T B_a Q_k \right) \\
    &= \frac{1}{|A|} \sum_a \mtrace \left( Q_k^T (Q D_a Q^T) Q_k Q_k^T (Q D_a Q^T) Q_k \right) \qquad \text{substitute eigendecomposition of $A$} \\
    &= \frac{1}{|A|} \sum_a \mtrace \left( [I_k \; 0] D_a [I_k \; 0]^T [I_k \; 0] D_a [I_k \; 0]^T \right) \qquad \text{since} \; Q_k^T Q = Q_k^T [Q_k \; \overline{Q}_k] = [I_k \; 0] \\
    &= \mtrace \left( [I_k \; 0]^T [I_k \; 0] \left( \frac{1}{|A|} \sum_a D_a^2\right) \right) \qquad \text{since diagonal matrices commute}\\
    &= \sum_{i=1}^k \left( \frac{1}{|A|} \sum_a \beta_{a, i}^2 \right)
\end{align*}
}
Thus $\Phi = Q_k C$ results in the trace expression attaining its global upper bound, so it is a maximizer.
\end{proof}

\subsubsection{Proof of~\Cref{thm:ode-byol-ac}}

\restatethmodeac*
\begin{proof}

Let the eigendecomposition of $T_a$ be $T_a = Q D_a Q^T$ for all actions $a$. By \Cref{ass:common-ta}, $Q$ are the shared eigenvectors across all $T_a$. Let the eigenvectors of $Q$ be sorted from largest to smallest according to $\frac{1}{|A|} \sum_a D_a^2$. Let $Q = [Q_k \; \overline{Q}_k]$ so that $Q_k$ are the top-$k$ eigenvectors. 
Let $C \in \real^{k \times k}$ be an arbitrary orthogonal matrix.
From~\Cref{lem:max-trace} we have that $\PhiAC^* = Q_k C$ is a maximizer of the trace objective. Because we are mutiplying $Q_k$ by $C$ on the right, $\PhiAC^*$ columns span the same subspace as the columns of $Q_k$ i.e. the same subspace as the top-$k$ eigenvectors of $\left( \frac{1}{|A|} \sum_a T_a^2 \right)$, as we wanted to show.

To finish the proof, it remains to show that $\PhiAC^*$ is a critical point of the BYOL-AC ODE, that is, $\Phi_t = \PhiAC^* \Rightarrow \dot{\Phi} = 0$.
We plug $\PhiAC^* = Q_k C$ into $\dot{\Phi}$ (\Cref{eq:phi-dot-ac-simplified}):
{\allowdisplaybreaks
\begin{align*}
    & \left( I - (Q_k C) (Q_k C)^T \right) \left( \frac{2}{|A||\mathcal{X}|} \sum_a  T_a (Q_k C) (Q_k C)^T T_a (Q_k C) \right) \\
    &= \left( I - Q_k Q_k^T \right) \left( \frac{2}{|A||\mathcal{X}|} \sum_a  T_a Q_k Q_k^T T_a Q_k C \right) \\
    &= \left( I - Q_k Q_k^T \right) \left( \frac{2}{|A||\mathcal{X}|} \sum_a  (Q D_a Q^T) Q_k Q_k^T (Q D_a Q^T) Q_k C \right) \quad \text{substitute $T_a = Q D_a Q^T$} \\
    &= \frac{2}{|A||\mathcal{X}|} \sum_a \left( I - Q_k Q_k^T \right)  Q D_a [I_k \; 0]^T [I_k \; 0] D_a [I_k \; 0]^T C \quad \text{where } Q_k^T Q = Q_k^T [Q_k \; \overline{Q}_k] = [I_k \; 0] \\
    &= \frac{2}{|A||\mathcal{X}|} \sum_a \left( I - Q_k Q_k^T \right)  Q  [I_k \; 0]^T [I_k \; 0] D_a^2 [I_k \; 0]^T C \quad \text{since diagonal matrices commute} \\
    &= \frac{2}{|A||\mathcal{X}|} \sum_a \left( I - Q_k Q_k^T \right)  Q_k D_a^2 [I_k \; 0]^T C \\
    &= \frac{2}{|A||\mathcal{X}|} \sum_a \left( Q_k - Q_k \right) D_a^2 [I_k \; 0]^T C \\
    &= 0
\end{align*}
}
Thus $\PhiAC^*$ is a critical point.
\end{proof}

\subsection{Proofs of~\Cref{sec:byol-var}: BYOL-VAR}

We already know what $P_a^*$ is from~\Cref{lem:optimal-pa}, and through a similar argument we also know what $P^*$ is. Now we focus on first computing $\dot{\Phi}$ for BYOL-VAR.

\subsubsection{Computing $\dot{\Phi}$}

\begin{restatable}{relemma}{restatelemphidotvar}
\label{lem:phi-dot-var}
Under the uniform policy assumption~\Cref{ass:uniform-policy}. We have the following.
\begin{align*}
    \dot{\Phi} &= - \nabla_{\Phi} \mathbb{E}\left[\| P_a^\top \Phi^\top x - \sg(\Phi^\top y) ) \|_2^2 - \| P^\top \Phi^\top x - \sg(\Phi^\top y) ) \|_2^2\right] \big\vert_{P=P^*, P_a = Pa^*} \\
    &= - \frac{2}{|A|} \sum_a \left( D_X \Phi P_a^* - D_X T_a \Phi \right) (P_a^*)^T + 2 \left( D_X \Phi P^* - D_X T^{\pi} \Phi \right) (P^*)^T
\end{align*}
\end{restatable}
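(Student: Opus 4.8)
The plan is to exploit the fact that the BYOL-VAR objective (\Cref{eq:phi-dynamics-vl}) is, by construction, the difference of the BYOL-AC and BYOL-$\Pi$ objectives, so that its semi-gradient with respect to $\Phi$ is simply the difference of the two individual semi-gradients. Writing $\text{BYOL-VAR} = \text{BYOL-AC} - \text{BYOL-}\Pi$ and using linearity of $\nabla_\Phi$, I would reduce the computation of $\dot\Phi$ to two pieces, each of which has essentially already been handled, and then combine them with the overall sign $\dot\Phi = -\nabla_\Phi \text{BYOL-VAR}$.

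First I would reuse \Cref{lem:phi-dot-ac}, which already yields the BYOL-AC contribution: since that lemma gives $\dot\Phi$ for BYOL-AC as the negative of the gradient, we read off $\nabla_\Phi \text{BYOL-AC}\big\vert_{P_a = P_a^*} = \frac{2}{|A|}\sum_a (D_X \Phi P_a^* - D_X T_a \Phi)(P_a^*)^T$. Then I would carry out the entirely analogous single-predictor computation for the BYOL-$\Pi$ term, following the same trace-expansion-and-differentiation used in the proofs of \Cref{lem:optimal-pa,lem:phi-dot-ac}: expand $\E[\|P^T\Phi^T x - \sg(\Phi^T y)\|_2^2]$ into trace form, keeping the stop-gradient on the target so that the term quadratic in the target is a constant and only $D_X \Phi P P^T \Phi^T - 2 D_X T^\pi \sg(\Phi) P^T \Phi^T$ survives; differentiating with respect to the non-frozen $\Phi$ and substituting $P = P^*$ gives $\nabla_\Phi \text{BYOL-}\Pi\big\vert_{P = P^*} = 2(D_X \Phi P^* - D_X T^\pi \Phi)(P^*)^T$ (note the absence of the $1/|A|$ factor here, since $T^\pi$ already marginalizes over actions). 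Subtracting these and negating produces exactly the claimed expression.

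The only point requiring genuine care is the two-timescale/semi-gradient bookkeeping: although both $P^*$ and $P_a^*$ depend on $\Phi$, the ODE uses the semi-gradient in which the inner minimizers are held fixed and substituted only \emph{after} differentiation, and the stop-gradient freezes the bootstrap targets $\sg(\Phi^T y)$. I would make explicit that this is precisely why the gradient of the difference equals the difference of the gradients, with no cross terms arising from differentiating through $P^*$ or $P_a^*$. Because both subobjectives share the same stop-gradient target and the same $D_X$ structure, and because \Cref{ass:uniform-policy} ensures the action average appears with uniform weights $1/|A|$, the linearity step is immediate and everything else reduces to the routine trace computations already established; I therefore do not anticipate any real obstacle beyond stating this bookkeeping cleanly.
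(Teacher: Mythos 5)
Your proposal is correct and matches the paper's own proof, which likewise treats BYOL-VAR as the difference of the BYOL-AC and BYOL-$\Pi$ objectives and applies the trace-expansion argument of \Cref{lem:optimal-pa,lem:phi-dot-ac} to each term separately (with the correct observation that the $T^{\pi}$ term carries no $1/|A|$ factor since the action is already marginalized). Your added remarks on the semi-gradient bookkeeping make explicit what the paper leaves implicit, but the route is the same.
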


\begin{proof}
We know the following from~\Cref{lem:optimal-pa} (the same steps apply for $P^*$ but we replace $T_a$ with $T^{\pi}$).
\begin{align*}
    \left( \Phi^T D_X \Phi \right) P_a^* = \Phi^T D_X T_a \Phi \\
    \left( \Phi^T D_X \Phi \right) P^* = \Phi^T D_X T^{\pi} \Phi
\end{align*}
Note that because $\pi$ is uniform, we have $T^{\pi} = \frac{1}{|A|}\sum_a T_a$. This also implies $P^* = \frac{1}{|A|}\sum_a P_a^*$.
Then we just apply the steps in~\Cref{lem:phi-dot-ac} on the two terms in $\dot{\Phi}$ to get the difference.
\end{proof}

\subsubsection{Proof of~\Cref{lem:non-collapse-byol-var} and Simplified $P^*$, $P_a^*$ and $\dot{\Phi}$}

\restatelemnoncollapsevar*
\begin{proof}
\begin{align*}
    \Phi^T \dot{\Phi} &= \Phi^T \left( - \frac{2}{|A|} \sum_a \left( D_X \Phi P_a^* - D_X T_a \Phi \right) (P_a^*)^T + 2 \left( D_X \Phi P^* - D_X T^{\pi} \Phi \right) (P^*)^T \right) \\
    &=  - \frac{2}{|A|} \sum_a \left( \underbrace{\Phi^T D_X \Phi P_a^* - \Phi^T D_X T_a \Phi}_{=0\text{ by definition of $P_a^*$}} \right) (P_a^*)^T + 2 \left( \underbrace{\Phi^T D_X \Phi P^* - \Phi^T D_X T^{\pi} \Phi}_{=0\text{ by definition of $P^*$}} \right) (P^*)^T   \\
    &= 0.
\end{align*}
Therefore, $\frac{\mathrm{d}}{\mathrm{d}t}\Phi_t^T \Phi_t = 0$, that is, $\Phi_t^T \Phi_t$ is a constant for all $t$.
Since $\Phi_0^T \Phi_0 = I$ by \cref{ass:orthogonal-init}, the result follows.
\end{proof}

As a consequence, and in combination with~\Cref{ass:orthogonal-init,ass:uniform-state,ass:symmetric-t,ass:uniform-policy,ass:symmetric-ta}, we can simplify our expressions for $P*$, $P_a^*$ and $\dot{\Phi}$. Note that with a uniform distribution for $D_X$, we have $D_X = |\mathcal{X}|^{-1} I$.
\begin{align}
    P* &= \Phi^T T^{\pi} \Phi \label{eq:optimal-p-simplified}\\
    P_a^* &= \Phi^T T_a \Phi \\
    \dot{\Phi} &= 2 \left( I - \Phi \Phi^T \right) \left( \frac{1}{|A||\mathcal{X}|} \sum_a  T_a \Phi \Phi^T T_a \Phi  - T^{\pi} \Phi \Phi^T T^{\pi} \Phi \right) \label{eq:phi-dot-var-simplified}
\end{align}

\subsubsection{Proof of~\Cref{lem:trace-byol-var}}

\restatelemtracevar*
\begin{proof}
To check that $-f_{\text{BYOL-VAR}}$ is a Lyapunov function for the BYOL-VAR ODE, we will verify that its time derivative is strictly negative (it is strictly decreasing) for all non critical points (a critical point being $\Phi$ where $\Phi_t = \Phi \Rightarrow \dot{\Phi} = 0$). By chain rule through trace we have
\begin{align*}
    & \frac{\mathrm{d}}{\mathrm{d}t} \left( -f_{\text{BYOL-VAR}}(\Phi_t) \right) \\
    &= -\mtrace \left( \frac{\partial}{\partial \Phi} \left(\mathrm{Trace} \left( \frac{1}{|A|}\sum_a \left( \Phi^T T_a \Phi \right)^T \Phi^T T_a \Phi - \left( \Phi^T T^{\pi} \Phi \right)^T \Phi^T T^{\pi} \Phi  \right) \right)^T \cdot \dot{\Phi} \right) \\
    &= -\mtrace \left( \mathrm{Trace} \left( \frac{1}{|A|}\sum_a  \Phi^T T_a \Phi \Phi^T T_a -  \Phi^T T^{\pi} \Phi  \Phi^T T^{\pi}  \right) \cdot \dot{\Phi} \right)
\end{align*}
Since $\Phi$ is orthogonal, $(I - \Phi\Phi^T)$ is a projection matrix i.e. $(I - \Phi\Phi^T)(I - \Phi\Phi^T) = (I - \Phi\Phi^T)$. So we can add an extra projection in front of $\dot{\Phi}$
\begin{align*}
     &= -\mtrace \left( \mathrm{Trace} \left( \frac{1}{|A|}\sum_a  \Phi^T T_a \Phi \Phi^T T_a -  \Phi^T T^{\pi} \Phi  \Phi^T T^{\pi}  \right) (I - \Phi\Phi^T) \cdot \dot{\Phi} \right) \\
    &= -\frac{|\mathcal{X}|}{2} \mtrace \left( \dot{\Phi}^T \dot{\Phi} \right)
\end{align*}
Therefore this is strictly negative when $\dot{\Phi} \neq 0$.
\end{proof}

\subsubsection{Proof of~\Cref{thm:ode-byol-var}}

\restatethmodevar*

\begin{proof}

The first thing we do is rewrite the trace objective so that we can apply~\Cref{lem:max-trace} more directly.
\begin{align*}
    & \mathrm{Trace} \left( \frac{1}{|A|}\sum_a  \Phi^T T_a \Phi  \Phi^T T_a \Phi - \Phi^T T^{\pi} \Phi \Phi^T T^{\pi} \Phi  \right) \\
    &= \mathrm{Trace} \left( \E\left[ \Phi^T T_a \Phi  \Phi^T T_a \Phi \right] - \E [\Phi^T T_a \Phi]  \E[\Phi^T T_a \Phi]  \right)
\end{align*}
We first rewrite the sums and $T^{\pi}$ as (pointwise) expectations over the uniform action. Notice how we now have the difference of the expectation of a square with the square of the expectation. This means we can re-express this as a (pointwise) variance term.
\begin{align*}
    & \mathrm{Trace} \left( \E\left[ \Phi^T T_a \Phi  \Phi^T T_a \Phi \right] - \E [\Phi^T T_a \Phi]  \E[\Phi^T T_a \Phi]  \right) \\
    &= \mathrm{Trace} \left( \E\left[ \left( \Phi^T T_a \Phi  - \E[\Phi^T T_a \Phi] \right)^2 \right] \right) \\
    &= \mathrm{Trace} \left( \E\left[ \Phi^T \left( T_a - T^{\pi} \right) \Phi \Phi^T \left( T_a - T^{\pi} \right) \Phi \right] \right) \\
    &= \mathrm{Trace} \left( \frac{1}{|A|}\sum_a \Phi^T \left( T_a - T^{\pi} \right) \Phi \Phi^T \left( T_a - T^{\pi} \right) \Phi \right)
\end{align*}
Now in this new form, we are ready to apply~\Cref{lem:max-trace}.

Let the eigendecomposition of $T_a$ be $T_a = Q D_a Q^T$ for all actions $a$. So $Q$ are the shared eigenvectors across all $T_a$ (\Cref{ass:common-ta}). We also know $T^{\pi} = \frac{1}{|A|} \sum_a T_a$, which means we also have the eigendecomposition $T^{\pi} = Q \left( \frac{1}{|A|} \sum_a D_a\right) Q^T$. In other words, $T^{\pi}$ also shares the same eigenvectors. This means that $ \frac{1}{|A|} \sum_a \left(T_a - T^{\pi}\right)^2$ also has the same eigenvectors.

Let the eigenvectors of $Q$ be sorted from largest to smallest according to $\frac{1}{|A|} \sum_a \left(T_a - T^{\pi}\right)^2$. Let $Q = [Q_k \; \overline{Q}_k]$ so that $Q_k$ are the top-$k$ eigenvectors. 
Let $C \in \real^{k \times k}$ be an arbitrary orthogonal matrix.
From~\Cref{lem:max-trace} we have that $\PhiVARAC^* = Q_k C$ is a maximizer of the trace objective. Because we are mutiplying $Q_k$ by $C$ on the right, $\PhiVARAC^*$ columns span the same subspace as the columns of $Q_k$ i.e. the same subspace as the top-$k$ eigenvectors of $\frac{1}{|A|} \sum_a \left(T_a - T^{\pi}\right)^2$. We also have (variance relationship)
\begin{align*}
    \frac{1}{|A|} \sum_a \left(T_a - T^{\pi}\right)^2 &= \frac{1}{|A|} \sum_a \left(T_a^2 + (T^{\pi})^2 - 2T_a T^{\pi} \right) \\
    &= \frac{1}{|A|} \sum_a T_a^2 + (T^{\pi})^2 - 2T^{\pi} T^{\pi} \\
    &= \frac{1}{|A|} \sum_a T_a^2 - (T^{\pi} )^2
\end{align*}
Thus, equivalently, $\PhiVARAC^*$ columns span the same subspace as the columns of the top-$k$ eigenvectors of $\frac{1}{|A|} \sum_a T_a^2 - (T^{\pi} )^2$, as we wanted to show.

To finish the proof, we show that $\PhiVARAC^*$ is a critical point of the BYOL-VAR ODE, that is, $\Phi_t = \PhiVARAC^* \Rightarrow \dot{\Phi} = 0$. We plug $\PhiVARAC^* = Q_k C$ into $\dot{\Phi}$ (\Cref{eq:phi-dot-var-simplified}).
{\allowdisplaybreaks
\begin{align*}
    & 2 \left( I - \Phi \Phi^T \right) \left( \frac{1}{|A||\mathcal{X}|} \sum_a  T_a \Phi \Phi^T T_a \Phi  - T^{\pi} \Phi \Phi^T T^{\pi} \Phi \right) \\
    &=  2 \left( I - Q_k Q_k^T \right) \left( \frac{1}{|A||\mathcal{X}|} \sum_a  T_a Q_k Q_k^T T_a  - T^{\pi} Q_k Q_k^T T^{\pi} \right)Q_k \\
    &=  2 \left( I - Q_k Q_k^T \right) \left( \frac{1}{|A||\mathcal{X}|} \sum_a  T_a Q_k Q_k^T T_a  - \frac{1}{|\mathcal{X}||A|^2} \sum_{a, a'} T_a Q_k Q_k^T T_{a'} \right)Q_k
\end{align*}
}
To help simplify further we examine more closely the following term.
{\allowdisplaybreaks
\begin{align*}
    & \left( I - Q_k Q_k^T \right) \left( T_{a} Q_k Q_k^T T_{a'} \right)Q_k \\
    &= \left( I - Q_k Q_k^T \right) \left( (Q D_a Q^T) Q_k Q_k^T (Q D_{a'} Q^T) \right)Q_k  \quad \text{substitute $T_a = Q D_a Q^T$}  \\
    &= \left( I - Q_k Q_k^T \right) \left( Q D_a [I_k \; 0]^T [I_k \; 0] D_{a'} Q^T \right) Q_k  \quad \text{where } Q_k^T Q = Q_k^T [Q_k \; \overline{Q}_k] = [I_k \; 0] \\
    &= \left( I - Q_k Q_k^T \right) \left( Q [I_k \; 0]^T [I_k \; 0] D_a D_{a'} Q^T \right) Q_k \quad \text{since diagonal matrices commute} \\
    &= \left( I - Q_k Q_k^T \right) \left( Q_k D_a D_{a'} Q^T \right) Q_k \\
    &= \left( Q_k - Q_k \right) \left(  D_a D_{a'} Q^T \right) Q_k \\
    &= 0
\end{align*}
}
Thus we have $\dot{\Phi} = 0$:
\begin{align*}
    2 \left( I - Q_k Q_k^T \right) \left( \frac{1}{|A||\mathcal{X}|} \sum_a  T_a Q_k Q_k^T T_a  - \frac{1}{|\mathcal{X}||A|^2} \sum_{a, a'} T_a Q_k Q_k^T T_{a'} \right)Q_k &= 0
\end{align*}
Thus $\PhiVARAC^*$ is a critical point.
\end{proof}

\subsection{Proofs of~\Cref{sec:unifying}: Unifying Perspectives}

\subsubsection{Proof of~\Cref{thm:byol-variants-modelbased-equiv}}

\restatethmmodelbased*

\begin{proof}
We start with the proof of \Cref{eq:model-based-byol}. We first expand out the Frobenius norm on the right-hand side. Note that from~\Cref{ass:symmetric-ta} we know that $T^{\pi}$ and $T_a$ are symmetric, and from~\Cref{lem:non-collapse-byol} we know $\Phi$ is orthogonal.
\begin{align*}
    \Vert T^{\pi} - \Phi P \Phi^T \Vert_F &= \mtrace \left( \left( T^{\pi} - \Phi P \Phi^T \right)^T \left( T^{\pi} - \Phi P \Phi^T \right) \right) \\
    &= \mtrace \left( T^{\pi} T^{\pi} - 2 \Phi P^T \Phi^T T^{\pi} + \Phi P^T P \Phi^T \right)
\end{align*}
To minimize w.r.t. $P$, we compute the matrix derivative w.r.t. $P$.
\begin{align*}
    \frac{\partial }{\partial P} \mtrace \left( T^{\pi} T^{\pi} - 2 \Phi P^T \Phi^T T^{\pi} + \Phi P^T P \Phi^T \right) &= -2 \Phi^T T^{\pi} \Phi + 2P
\end{align*}
Then setting this to zero, we solve for the minimizer $P^*$.
\begin{align*}
    0 &= -2 \Phi^T T^{\pi} \Phi + 2P^* \\
    \implies P^* &= \Phi^T T^{\pi} \Phi
\end{align*}
Plugging this back in~\Cref{eq:model-based-byol}, we get
\begin{align*}
    \mtrace \left( T^{\pi} T^{\pi} - 2 \Phi (P^*)^T \Phi^T T^{\pi} + \Phi (P^*)^T P^* \Phi^T \right) &= \mtrace \left( T^{\pi} T^{\pi} \right) - \mtrace \left( \Phi^T T^{\pi} \Phi \Phi^T T^{\pi} \Phi \right) \\
    &= \mtrace \left( T^{\pi} T^{\pi} \right) - f_{\text{BYOL-}\Pi}(\Phi)
\end{align*}
Thus we have~\Cref{eq:model-based-byol} where the constant term is $\mtrace \left( T^{\pi} T^{\pi} \right)$.

Next to prove~\Cref{eq:model-based-byol-ac}, we follow the same steps, except we substitute $T_a$ for $T^{\pi}$. This results in
\begin{align*}
    \frac{1}{|A|} \sum_a \min_{P_a} \; \Vert T_a - \Phi P_a \Phi^T \Vert_F &= \frac{1}{|A|} \sum_a \left( \mtrace \left( T_a T_a \right) - \mtrace \left( \Phi^T T_a \Phi \Phi^T T_a \Phi \right) \right) \\
    &= \left( \frac{1}{|A|} \sum_a \mtrace \left( T_a T_a \right) \right) - f_{\text{BYOL-AC}}(\Phi)
\end{align*}

Finally, we do the same again for~\Cref{eq:model-based-byol-var} but with $(T_a - T^{\pi})$.
\begin{align*}
    & \frac{1}{|A|} \sum_a \min_{P_{\Delta a}} \; \Vert (T_a - T^{\pi}) - \Phi P_{\Delta a} \Phi^T \Vert_F \\
    &= \frac{1}{|A|} \sum_a \left( \mtrace \left( (T_a - T^{\pi})^2 \right) - \mtrace \left( \Phi^T (T_a - T^{\pi}) \Phi \Phi^T (T_a - T^{\pi}) \Phi \right) \right) \\
    &= \left( \frac{1}{|A|} \sum_a \mtrace \left( (T_a - T^{\pi})^2 \right) \right) - \frac{1}{|A|} \sum_a \mtrace \left( \Phi^T T_a \Phi \Phi^T T_a \Phi - \Phi^T T^{\pi} \Phi \Phi^T T^{\pi} \Phi \right) \\
    &= \left( \frac{1}{|A|} \sum_a \mtrace \left( (T_a - T^{\pi})^2 \right) \right) - f_{\text{BYOL-VAR}}(\Phi)
\end{align*}
where the second-last step uses the fact that $T^{\pi} = \frac{1}{|A|} \sum_a T_a$, which holds since $\pi$ is uniform in accordance with \Cref{ass:uniform-policy}.
\end{proof}

\subsubsection{Proof of~\Cref{thm:byol-model-free-equiv}}

\restatethmmodelfree*

\begin{proof}
We start with proving the first equation (\Cref{eq:valuemse-objective}). First, we solve the inner minimization w.r.t. $\theta$, resulting in:
\begin{align*}
    \min_{\theta} \; \Vert T^\pi R - \Phi \theta \Vert^2
\end{align*}
Given its form of a standard linear least squares equation ($\Vert A\theta - B \Vert^2 $), the solution for $\theta$ is:
\begin{align*}
    \theta^* &= \left( \Phi^T \Phi \right)^{-1} \Phi^T T^\pi R \\
    &= \Phi^T T^\pi R \qquad \text{since $\Phi$ is orthogonal}
\end{align*}
Through the same argument except substituting $T^\pi \Phi\Phi^T$ in for $T^\pi$, the solution for $\omega$ is:
\begin{align*}
    \omega^* &= \Phi^T T^\pi \Phi\Phi^T R
\end{align*}
Substituting $\theta^*$ and $\omega^*$ back in~\Cref{eq:valuemse-objective}, and recalling that $ |\mathcal{X}|\E[RR^T] = I$, we get:
{\allowdisplaybreaks
\begin{align*}
    & |\mathcal{X}|\E_R \left[ \Vert T^\pi R - \Phi \theta^* \Vert^2 + \Vert T^\pi \Phi \Phi^T R - \Phi \omega^* \Vert^2 \right] \\
    &= |\mathcal{X}| \E_R \left[  \Vert T^\pi R - \Phi \Phi^T T^\pi R \Vert^2 + \Vert T^\pi \Phi \Phi^T R - \Phi \Phi^T T^\pi \Phi\Phi^T R \Vert^2 \right] \\
    &= |\mathcal{X}| \E_R \left[  \Vert (I- \Phi \Phi^T) T^\pi R \Vert^2 + \Vert (I - \Phi \Phi^T) T^\pi \Phi\Phi^T R \Vert^2 \right] \\
    &= |\mathcal{X}| \E_R \left[  R^T T^\pi (I- \Phi \Phi^T) (I- \Phi \Phi^T) T^\pi R + R^T \Phi\Phi^T T^\pi (I - \Phi \Phi^T)(I- \Phi \Phi^T) T^\pi \Phi\Phi^T R  \right] \\
    &= |\mathcal{X}| \E_R \left[  R^T T^\pi (I- \Phi \Phi^T) T^\pi R + R^T \Phi\Phi^T T^\pi (I - \Phi \Phi^T) T^\pi \Phi\Phi^T R \right] \\
    &= |\mathcal{X}| \E_R \left[ \mtrace\left( R^T T^\pi (I- \Phi \Phi^T) T^\pi R \right) + \mtrace \left( R^T \Phi\Phi^T T^\pi (I - \Phi \Phi^T) T^\pi \Phi\Phi^T R \right) \right] \\
    &=  \mtrace\left( |\mathcal{X}|\E[RR^T] T^\pi (I- \Phi \Phi^T) T^\pi \right) + \mtrace \left( |\mathcal{X}|\E[RR^T] \Phi\Phi^T T^\pi (I - \Phi \Phi^T) T^\pi \Phi\Phi^T \right) \\
    &=  \mtrace\left( T^\pi (I- \Phi \Phi^T) T^\pi \right) + \mtrace \left(\Phi\Phi^T T^\pi (I - \Phi \Phi^T) T^\pi \Phi\Phi^T \right) \\
    &=  \mtrace\left( T^\pi (I- \Phi \Phi^T) T^\pi \right) + \mtrace \left(T^\pi (I - \Phi \Phi^T) T^\pi \Phi\Phi^T \right) \\
    &=  \mtrace\left( T^\pi T^\pi \right) - \mtrace \left(\Phi^T T^\pi \Phi \Phi^T T^\pi \Phi \right) \\
    &=  C - f_{\text{BYOL-}\Pi}(\Phi)
\end{align*}
}
Thus we have proved~\Cref{eq:valuemse-objective}.

Next, for~\Cref{eq:qvaluemse-objective}, the steps are very similar. We first solve the inner minimization for $\theta_a$ and $\omega_a$, which are the same as for $\theta$ and $\omega$ except we substitute $T_a$ for $T^{\pi}$.
\begin{align*}
    \theta_a^* &= \Phi^T T_a R \\
    \omega_a^* &= \Phi^T T_a \Phi\Phi^T R
\end{align*}
Then substituting $\theta_a^{\star}$ and $\omega_a^{\star}$ back in~~\Cref{eq:qvaluemse-objective}, we have.
\begin{align*}
    & |\mathcal{X}| \mathbb{E}_R \left[ \frac{1}{|A|} \sum_a \left( \Vert T_a R - \Phi \theta_a^* \Vert^2 + \Vert T_a \Phi \Phi^T R - \Phi \omega_a^* \Vert^2  \right) \right] \\
    &= \frac{1}{|A|} \sum_a |\mathcal{X}| \mathbb{E}_R \left[ \left( \Vert T_a R - \Phi \theta_a^* \Vert^2 + \Vert T_a \Phi \Phi^T R - \Phi \omega_a^* \Vert^2  \right) \right] \\
    &= \frac{1}{|A|} \sum_a \left[\mtrace\left( T_a T_a \right) - \mtrace \left(\Phi^T T_a \Phi \Phi^T T_a \Phi \right) \right]
\end{align*}
We get the last line above by following the same steps we just did before when substituting in $\theta^*$ and $\omega^*$. Thus
\begin{align*}
    &= \frac{1}{|A|} \sum_a \mtrace\left( T_a T_a \right) - \frac{1}{|A|} \sum_a \mtrace \left(\Phi^T T_a \Phi \Phi^T T_a \Phi \right) \\
    &= C - f_{\text{BYOL-AC}}(\Phi)
\end{align*}
This completes the proof of~\Cref{eq:qvaluemse-objective}.

Finally we prove~\Cref{eq:advantagemse-objective}. We follow the same steps as for the proof for~\Cref{eq:qvaluemse-objective}, except we use $(T_a - T^{\pi})$ in the place of $T_a$. This means we have
\begin{align*}
    & |\mathcal{X}| \mathbb{E}_R \bigg[ \frac{1}{|A|} \sum_a \min_{\theta_a, \omega_a} \; \bigg( \Vert (T_a R - T^{\pi} R) - \Phi \theta \Vert^2 +  \Vert (T_a \Phi \Phi^T R - T^{\pi} \Phi \Phi^T R) - \Phi \omega \Vert^2 \bigg) \bigg] \\
    &= \frac{1}{|A|} \sum_a \mtrace\left( (T_a - T^{\pi}) (T_a - T^{\pi}) \right) - \frac{1}{|A|} \sum_a \mtrace \left(\Phi^T (T_a - T^{\pi}) \Phi \Phi^T (T_a - T^{\pi}) \Phi \right) \\
    &= C - \frac{1}{|A|} \sum_a \mtrace \left(\Phi^T (T_a - T^{\pi}) \Phi \Phi^T (T_a - T^{\pi}) \Phi \right)
\end{align*}
To further simplify this, we note that because we assume a uniform policy (\Cref{ass:uniform-policy}), we have $T^{\pi} = \frac{1}{|A|}\sum_{a} T_{a}$. So expanding it out
\begin{align*}
    & \frac{1}{|A|} \sum_a \mtrace \left(\Phi^T (T_a - T^{\pi}) \Phi \Phi^T (T_a - T^{\pi}) \Phi \right) \\
    &= \frac{1}{|A|} \sum_a \mtrace \left( \Phi^T T_a\Phi \Phi^T T_a \Phi - \Phi^T T_a \Phi \Phi^T T^{\pi} \Phi - \Phi^T T^{\pi} \Phi \Phi^T T_a \Phi + \Phi^T T^{\pi} \Phi \Phi^T T^{\pi} \Phi \right) \\
    &= \frac{1}{|A|} \sum_a \mtrace \left( \Phi^T T_a\Phi \Phi^T T_a \Phi \right) - 2\mtrace \left( \Phi^T T^{\pi} \Phi \Phi^T T^{\pi} \Phi \right) + \mtrace \left( \Phi^T T^{\pi} \Phi \Phi^T T^{\pi} \Phi \right) \\
    &= \frac{1}{|A|} \sum_a \mtrace \left( \Phi^T T_a\Phi \Phi^T T_a \Phi \right) - \mtrace \left( \Phi^T T^{\pi} \Phi \Phi^T T^{\pi} \Phi \right) \\
    &= f_{\text{BYOL-VAR}}(\Phi)
\end{align*}
This completes the proof of~\Cref{eq:advantagemse-objective}.
\end{proof}

\end{document}